\documentclass[twoside,11pt]{article}
\pdfoutput=1
\usepackage{blindtext}

%

%
%
%

\usepackage[nohyperref, abbrvbib, preprint]{jmlr2e}

\usepackage{amsmath}
\usepackage{algorithm} 
\usepackage{algpseudocode}
\usepackage{mathtools}
\usepackage{ifthen}
\usepackage{thmtools,thm-restate}
\usepackage{macros-arxiv}
\usepackage{environments}
\DeclareMathAlphabet{\mathdutchcal}{U}{dutchcal}{m}{n}
\usepackage{xcolor}
\definecolor{dgreen}{rgb}{0.00,0.49,0.00}
\definecolor{dblue}{rgb}{0,0.08,0.75}
\usepackage{hyperref}
\hypersetup{
colorlinks,
linkcolor=dgreen,
citecolor=dblue,
urlcolor=black
}
\usepackage{cleveref}
\crefformat{footnote}{#2\footnotemark[#1]#3}

\jmlrheading{}{}{}{}{}{}{}


\ShortHeadings{Information Capacity Regret Bounds}{Eldowa, Cesa-Bianchi, Metelli, and Restelli}
\firstpageno{1}

\begin{document}

\title{Information Capacity Regret Bounds for \\Bandits with Mediator Feedback}

\author{\name Khaled Eldowa \email khaled.eldowa@unimi.it\\
       \addr Università degli Studi di Milano 
       \\Milano, 20133, Italy
       \AND
       \name Nicolò Cesa-Bianchi \email nicolo.cesa-bianchi@unimi.it \\
       \addr 
       Università degli Studi di Milano and Politecnico di Milano
       \\ Milano, 20133, Italy
       \AND
       \name Alberto Maria Metelli \email albertomaria.metelli@polimi.it \\
       \addr Politecnico di Milano 
       \\Milano, 20133, Italy
       \AND
       \name Marcello Restelli \email marcello.restelli@polimi.it \\
       \addr Politecnico di Milano 
       \\Milano, 20133, Italy
       }

\editor{My editor}

\maketitle

\begin{abstract}
This work addresses the mediator feedback problem, a bandit game where the decision set consists of a number of policies, each associated with a probability distribution over a common space of outcomes. Upon choosing a policy, the learner observes an outcome sampled from its distribution and incurs the loss assigned to this outcome in the present round. We introduce the policy set capacity as an information-theoretic measure for the complexity of the policy set. Adopting the classical EXP4 algorithm, we provide new regret bounds depending on the policy set capacity in both the adversarial and the stochastic settings. For a selection of policy set families, we prove nearly-matching lower bounds, scaling similarly with the capacity. We also consider the case when the policies' distributions can vary between rounds, thus addressing the related bandits with expert advice problem, which we improve upon its prior results. Additionally, we prove a lower bound showing that exploiting the similarity between the policies is not possible in general under linear bandit feedback. Finally, for a full-information variant, we provide a regret bound scaling with the information radius of the policy set.
\end{abstract}

\begin{keywords}
  Regret minimization, multi-armed bandits, expert advice, information theory, best of both worlds
\end{keywords}

\section{Introduction}
The framework of multi-armed bandits (MAB) models sequential decision-making problems with partial feedback. 
Real-world applications of this framework span a wide array of domains and include problems such as dynamic pricing \citep{misra2019dynamic} and advert placement \citep{schwartz2017customer}. 
In the classical non-stochastic MAB problem \citep{casino}, a learner, faced with a fixed
set of actions (also referred to as ``arms''), repeatedly interacts with the environment in a series of rounds by selecting an action 
and subsequently observing a numerical loss 
assigned beforehand to this action.
The performance of the learner is measured via the notion of regret, which compares the cumulative loss of the learner with that of the best action in hindsight.
The minimax regret for this problem, that is, the smallest achievable regret in the worst-case, is known to be of order $\sqrt{KT}$ \citep{audibert2009minimax}, with $K$ being the number of actions and $T$ the number of rounds. 

This formulation, however, fails to model situations in which, aside from observing the loss of the 
played action,
the learner could---in the same round---obtain information concerning the losses of other 
actions.
This information leakage could be a result of prior knowledge of
an underlying structure for the losses,
or owing to more explicit side observations.
Regardless of form, such extra information can lead to more efficient learning in the face of large (or even infinite) 
action sets.
A prominent example of structured losses is exhibited by the (adversarial) linear bandits problem, in which the 
action
set is a subset of $\R^d$ and the loss assigned to 
an action
in a given round is the inner product between the
action
and a common latent loss vector associated with that round. For this setting, \cite{bubeck2012towards} provide an algorithm achieving nearly optimal regret bounds of order $\sqrt{d T \log K}$ for finite 
action 
sets and $d\sqrt{T \log T}$ for compact actions sets. 
On the other hand, a simple form of side observations is modelled by the framework of online learning with graph feedback, in which, upon choosing 
an action,
the learner additionally observes the losses of the 
actions
adjacent to the chosen one in a given graph.
Algorithms exploiting this extra feedback can enjoy improved regret guarantees depending on the structure of the graph \citep[see, e.g.,][]{Alon15}.

In this work, we study a certain bandit model where the information leakage results from a combination of side observations and a structured assignment of losses.
In the basic template of this model, the learner is faced with a policy set whose elements are each associated with a probability distribution over a common (finite) space of outcomes.
At each round, a (latent) loss map associates each outcome with a numerical loss. Subsequently, the loss assigned to a policy is the loss of an outcome sampled from the policy's distribution.
Upon choosing a policy, the learner observes \emph{both} the sampled outcome and its loss. Depending on the problem, the loss map could be fixed over the rounds or changing in a stochastic or adversarial manner.  
The regret in this framework is defined as the difference between the (expected) cumulative loss of the learner and that of the optimal policy in hindsight.  
Our main goal is to understand how the structure of the policy set, in particular, how the similarity between the policies affects the achievable regret. 
One aspect of this problem reminiscent of linear bandits is that having chosen a policy, 
the learner's expected loss is linear in the policy's distribution, seen as a vector in the simplex. The distinction, however, is that the learner does not observe this quantity, as would be the case under linear bandit feedback; 
instead, one sampled outcome and its assigned loss are observed. 

This framework has been studied in the works of \cite{papini2019optimistic} and \cite{metelli2021policy}, where it was referred to as the \emph{mediator feedback} model.
The name here highlights the role of the outcomes as an extra layer of feedback ``mediating'' between the chosen policy and the observed loss, thereby allowing additional information gain regarding other policies.
The study of this feedback model was motivated in these works by a concrete problem in the context of reinforcement learning.
An instance of this problem is characterized by a Markov Decision Process (MDP) and a set of policies that map states to distributions over actions.
At every round of interaction, 
the learner selects a policy through which they interact with the MDP for a fixed horizon.
Naturally, the learner observes both the sampled trajectory and the accumulated reward, and aims to 
compete with the best policy from the given policy set. In this case, the trajectories are the outcomes over which each policy induces a probability distribution.

The feedback structure of the mediator framework is shared with the more classical problem of \emph{bandits with expert advice} \citep{casino}.
This problem is a variation of the (non-stochastic) MAB problem described above, where at the beginning of every round, the learner receives ``advice'' from each of a number of ``experts'' in the form of a probability distribution over the actions. The goal then becomes competing with the (expected) cumulative loss of the best expert in hindsight. Here, the actions of the MAB instance play the role of the outcomes, which are sampled from the distributions provided by the experts. Exactly fitting this problem into the mediator feedback framework additionally requires restricting the learner to only access the actions by way of sampling from (a mixture of) the experts' distributions, though this requirement is already satisfied by most state-of-the-art approaches. A more important distinction of the expert advice problem is the incorporation of a contextual element
in that the distributions recommended by the experts can vary from round to round.
Given our stated goal of studying the extent to which the similarity between the available distributions can be exploited, the addition of this contextual element is somewhat orthogonal to the main focus of this work, though it will still be briefly treated.

\subsection{Prior Results}
The \textsc{Exp4} algorithm was proposed in the work of \cite{casino} to address the bandits with expert advice problem, and remains an important benchmark in the contextual bandits literature. It was shown to enjoy a regret bound of order $\sqrt{KT \log N}$, where $N$ is the number of experts and $K$ still denotes the number of actions. As the recommendations of an expert can be seen as a strategy against which the learner is competing, this result shows that one can achieve a regret scaling only logarithmically with the number of such strategies. 
This was later shown to be nearly optimal by \cite{expertslowerbound}, who proved a lower bound of order $\sqrt{K T \log N / \log K}$. However, this lower bound concerns an instance where the number of experts is exponential in the number of actions, and the experts' distributions are deterministic.
The former is not surprising; if the number of experts is small compared to the arms, one can always achieve a regret of order $\sqrt{N T}$ by playing a minimax optimal bandit algorithm directly over the experts, entirely casting aside the structure of the problem. More importantly, the bound of \cite{casino} does not reflect one's expectation that the problem should become easier if the recommended distributions are more similar, and the lower bound does not address this question.

Via an elaborate modification of \textsc{Exp4}, \cite{McMahanS09} did address these very two points. For a fixed set of expert recommendations, their algorithm achieves a bound of order $\sqrt{\bigS T \log N}$, where $\bigS$, formally defined in \Cref{sec:capacity}, is a notion of effective size of the set of recommended distributions (i.e., the policy set). 
It satisfies $\bigS \leq \min\{K,N\}$, but can be smaller depending on the similarity between the distributions---or the ``agreement between the experts''. 
Specifically, it reaches its smallest value, $\bigS = 1$, when all the distributions are identical. 
One issue with this bound is that when the number of experts is small (say, only two), the fact that $\bigS \geq 1$ means that no substantial improvement is achieved over the $\sqrt{N T}$ bound, no matter how similar the distributions are.
Indeed, $\bigS - 1$ is arguably a more apt metric as it can shrink arbitrarily if the distributions are similar enough. In particular, it reduces to the total variation distance when there are only two distributions.
Nevertheless, even if the bound were to scale with this quantity, one may ask whether this is the best achievable dependence on the structure of the 
policy set.
We note in passing that the bound of \cite{McMahanS09} can also be achieved by plain \textsc{Exp4} in the general case \citep[see][Theorem 18.3]{lattimore2020bandit}, where it takes the form $\sqrt{\sum_t \bigS_t \log N}$ with $\bigS_t$ measuring 
the (dis)agreement between the experts at the $t$-th round.

While these results concern the adversarial regime, where no statistical constraints are placed on the losses, similar mediator feedback problems have been studied in the stochastic regime, where the losses are drawn at every round from a fixed distribution.
This includes the aforementioned works of \cite{papini2019optimistic} and \cite{metelli2021policy}, and that of \cite{stochastic-experts}, who consider a stochastic variant of the expert advice problem. Unlike the worst-case flavour (in terms of the dependence of the loss map) of the bounds in the adversarial regime, the 
results in these works are generally instance-based; the bounds enjoy a logarithmic dependence on the time horizon, but degrade in harder instances where suboptimal policies are difficult to discern.
Still, the dependence of these bounds on the policy set structure is largely independent of the loss map as it is primarily represented via diameter-like quantities measuring the pairwise maximum ``distance'' between the distributions according to some dissimilarity measure, mainly the chi-squared divergence or related quantities. Comparing 
$\bigS - 1$
with this chi-squared ``diameter'', neither quantity uniformly dominates the other, though the former is always bounded, while the latter need not be. 

In the online learning and bandits literature, best-of-both-worlds (BOBW) algorithms \citep{bubeck12bobw} address the adversarial and stochastic regimes simultaneously without prior knowledge of the nature of the environment. 
They guarantee regret (poly) logarithmic in the time horizon when the faced environment is stochastic, while retaining sub-linear regret against general environments. 
Of particular relevance to our setting is the recent work of \cite{bobw-blackbox}, where
they obtain the first BOBW guarantees for bandits with expert advice (or contextual bandits) using \textsc{Exp4} as a black-box decision rule within a cascade of two meta-algorithms. 
For stochastic environments, the bound is of order $K \log T \log N / \Delta$, where $\Delta$ denotes the minimum sub-optimality gap for the experts, whereas the traditional bound of $\sqrt{K T \log N}$ is guaranteed for all environments. 
As apparent, these bounds feature the usual coarse dependence on the number of actions and are therefore unable to reflect the affinity of the recommended distributions. 

\subsection[Contributions]{Contributions\footnote{A preliminary version of this work appeared as \citep{itw-paper}.}}
In this work, we consider a generic mediator feedback setting with finite policy and outcome sets.
We propose a new complexity (or effective size) measure for the policy set, which we refer to as the chi-squared capacity of the policy set, or simply the policy set capacity. 
This quantity (defined in \Cref{sec:capacity}) can be interpreted as the information capacity of a certain hypothetical communication channel induced by the policy set, 
albeit we define the mutual information between the input and output of said channel in terms of the chi-squared divergence in lieu of the standard Kullback–Leibler divergence. The channel referred to here is
one whose input alphabet is the policy set and output alphabet is the outcome set, 
while its transition matrix is defined such that conditioned on an input policy, the output is drawn from the policy's distribution.
Besides its more natural information-theoretic interpretation, this notion of capacity is never larger than either $\bigS - 1$ or the maximum pairwise chi-squared divergence. 
Moreover, when the policy set consists of only two policies, the capacity reduces to a (symmetric) divergence measure of the same order as the squared Hellinger distance and the triangular discrimination. 

In \Cref{sec:capacity}, we consider the adversarial regime and provide an improved regret bound of order
$\max\{ \sqrt{\capc T \log N}, \log N \}$ for \textsc{Exp4}, where $\capc$ denotes the capacity. Unlike prior results, the horizon-dependent term in this bound can shrink arbitrarily if the distributions are similar enough. 
We then extend this result to the case when the policies' distributions can vary between rounds, thus addressing the general bandits with expert advice problem.
In particular, we show  
that the same algorithm run with an adaptive learning rate enjoys a bound
essentially  
of order $\sqrt{\sum_t \capc_t \log N}$, where $\capc_t$ is the capacity of the policies' distributions at round $t$. 
This bound is obtained as an implication of a stronger history-dependent bound, where the complexity of the policy set in a given round is represented through the mutual (chi-squared-)information between the chosen policy and the drawn outcome, conditioned on the events up to the previous round.

Still considering the \textsc{Exp4} algorithm, 
we provide best-of-both-worlds bounds in \Cref{sec:bobw}. 
In the stochastic regime, we show that the algorithm enjoys a bound of order $\capc \log T \log (NT)/\Delta$, where $\Delta$ is the minimum sub-optimality gap for the policies.
Simultaneously, the algorithm is shown to retain a worst-case bound of order $\sqrt{\capc T \log T \log N}$. 
The former bound follows from a more general guarantee that we provide for the adversarially corrupted stochastic regime, an intermediate regime commonly considered in BOBW works \citep[see, e.g.,][]{ito-bobw-graphs,bobw-blackbox}. 
The proof of 
this result builds upon the techniques developed in \citep{ito-bobw-graphs} for proving BOBW bounds for a related algorithm in the setting of online learning with graph feedback.

We complement these results in \Cref{sec:lower} by proving worst-case lower bounds for three families of policy sets. 
These lower bounds scale with the policy set capacity in the same manner as the regret bound we provided for \textsc{Exp4} in the adversarial setting,
asserting its optimality up to factors logarithmic in the number of policies. 
Additionally, in \Cref{sec:linear}, we prove another lower bound through which we aim to compare the studied feedback model with that of linear bandits. 
As alluded to before, the policies' distributions can be treated as a set of arms belonging to the simplex under an alternative linear bandit formulation of the problem, where the learner observes the inner product between the chosen policy and the latent loss map. 
Considering a particular family of policy sets, we prove that under linear bandit feedback, one must incur regret of order at least $\sqrt{N T}$ against any policy set in this family, even as the capacity approaches zero. 
This shows that the attainability of regret guarantees that improve as policies become more similar is a distinctive feature of mediator feedback.

Finally, in \Cref{sec:full-info}, we consider a full-information variant of the problem, where the learner observes the entire loss map at every round. 
This can be seen as a generalization of the prediction with expert advice problem \citep{cesa1997use}.
For this setting, we show that a simple Online Mirror Descent strategy enjoys a regret bound of order $\sqrt{\capckl T}$, where $\capckl$ is an altered form of the policy set capacity, based on the more standard KL-divergence.
This quantity is no larger than $\log N$, and can be interpreted as an information radius of the policy set.

\subsection{Additional Related Works} \label{sec:related}
\cite{krishnamurthy2020contextual} study a contextual bandit problem with continuous actions, where the learner competes with a set of competitor policies mapping states (contexts) to actions.  
Instead of placing a smoothness assumption on the loss function, they opt for minimizing a notion of smoothed regret.
More precisely, they fix a smoothing kernel that maps each action to a distribution over action. Accordingly, they obtain a new competitor class of smoothed policies that map states to distributions over actions by composing the original policies with the smoothing kernel. This has the effect of forcing a favourable structure on the policy set. 
Indeed, they obtain (via \textsc{Exp4}) a bound of order $\sqrt{\kappa T \log N}$ in the adversarial regime, where $N$ is the number of policies and $\kappa$, called the kernel complexity, is defined as the largest possible density assigned by the smoothing kernel with respect to some base probability measure. This latter quantity upper bounds the continuous analogue of $\bigS$ for any given context. 
Similar smoothed benchmarks are studied in \citep{majzoubi2020efficient} and \citep{zhu2022contextual} under realizability assumptions. 

In its dependence on the mutual information between policies and outcomes, 
one of the regret bound we provide for \textsc{Exp4} (see \Cref{thm:chi-general})
bears some superficial resemblance to the PAC-Bayesian results in \citep{seldin2011pac}. 
For a stochastic contextual bandits problem, \cite{seldin2011pac} prove bounds on the per-round instantaneous regret that depend on the mutual information between the observed state and the chosen action. 
This quantifies the complexity of the adopted decision rule, which is traded off against its empirical regret measured according to past observations. 
Another notable mention is the information-theoretic analysis of \cite{russo2016information} for Thompson sampling in Bayesian bandit problems, which results in bounds scaling with the mutual information between the faced environment and the optimal action, or some satisficing benchmark \citep{russo2022satisficing,arumugam2021deciding}. 
This measures the amount of information that needs to be acquired about the environment to identify the target action. 

Another related line of work concerns the best arm identification (BAI) problem, a variant of the MAB problem where the learner's aim is to find the optimal arm efficiently.
\cite{reddy2023} and \cite{poiani2023pure} study the BAI problem with the added constraint that the learner can only sample arms via a number of given stochastic policies.
As the objective remains identifying the optimal arm, the manner in which the structure of the policy set affects the achievable regret is fundamentally different from our setting.
Indeed, in their problem, a more diverse policy set can be advantageous to the learner.

\section{Preliminaries}
In this section, we start by reviewing some concepts from information theory that will be referenced throughout the rest of the paper.
Then, we lay down a formal statement of the main problem setting.

\subsection{Information Theory Background} \label{sec:info-theory}
Let $f:(0,\infty) \to \R$ be a convex function with $f(1)=0$, 
and define the limits $f(0) = \lim_{x \to 0^+}f(x)$ and $f'(\infty) = \lim_{x \to \infty}f(x)/x$ (either of which could be infinite). 
If $P$ and $Q$ are two distributions (probability mass functions) on a common finite set $\Omega$, the $f$-divergence (\citealp{Ali1966AGC}; \citealp{Csiszar}; \citealp[Section 7.1]{itbook}) between them is defined as:
\begin{equation*}
    D_f(P \,\|\, Q) \coloneqq \sum_{x \in \Omega} Q(x) f\bigg(\frac{P(x)}{Q(x)}\bigg) \,,
\end{equation*}
with the understanding that $0f(0/0)=0$ and $0f(a/0)=\lim_{x \to 0^+}xf(a/x)=af'(\infty)$ for $a>0$. Notable properties of $f$-divergences include joint convexity in $P$ and $Q$, non-negativity, and the fact that $D_f(P \,\|\, P) = 0$.  
Examples for $f$-divergences used in this work include
\begin{itemize}
    \item $f(x)=(1/2)|x-1| \longrightarrow$ total variation distance:
    \begin{equation*}
        \delta(P,Q) \coloneqq \frac{1}{2} \summ_x |P(x) - Q(x)| = 1 - \summ_x \min\{P(x), Q(x)\} \,.
    \end{equation*}
    \item $f(x)=(1/2)(\sqrt{x}-1)^2 \longrightarrow$ squared Hellinger distance:
    \begin{equation*}
        H^2(P,Q) \coloneqq \frac{1}{2} \summ_x (\sqrt{P(x)} - \sqrt{Q(x)})^2 \,.
    \end{equation*}
    \item $f(x)=(x-1)^2/(x+1) \longrightarrow$ triangular discrimination (also known as the Vincze–Le Cam divergence \citep{sason2016f}):
    \begin{equation*}
        \Delta(P,Q) \coloneqq \summ_x \frac{(P(x)-Q(x))^2}{P(x)+Q(x)} \,.
    \end{equation*}
    \item $f(x)=x \ln x \longrightarrow$ KL-divergence:
    \begin{equation*}
        \kl{P}{Q} \coloneqq \summ_x P(x) \log\lrb{\frac{P(x)}{Q(x)}} \,.
    \end{equation*}
    \item $f(x)=(x-1)^2 \longrightarrow$ chi-squared divergence:
    \begin{equation*}
        \chisqr{P}{Q} \coloneqq \summ_x Q(x) \bigg(\frac{P(x)}{Q(x)}-1\bigg)^2 = \summ_x \frac{P(x)^2}{Q(x)} - 1 \,.
    \end{equation*}
\end{itemize}

Let $X$ and $Y$ be two discrete random variables mapping $\Omega$ to the finite sets $\mathcal{X}$ and $\mathcal{Y}$ respectively. The quantity defined as
\begin{equation*}
    D_f(P_{Y|X} \,\|\, Q_{Y|X} \,|\, P_X) \coloneqq \sum_{x \in \mathcal{X}} P_X(x) D_f(P_{Y|{X=x}} \,\|\, Q_{Y|{X=x}}) 
\end{equation*}
is known as the conditional $f$-divergence, where a summand corresponding to some $x\in \Xs$ is set to zero if $P_X(x)=0$.
An immediate property of $f$-divergences is that if $P_{X,Y} = P_X P_{Y|X}$ and $Q_{X,Y} = P_X Q_{Y|X}$, then
\begin{equation} \label{eq:conditional-divergence-prop}
    D_f(P_{X,Y} \,\|\, Q_{X,Y}) = D_f(P_{Y|X} \,\|\, Q_{Y|X} \,|\, P_X) \,.
\end{equation}
When jointly distributed according to $P_{X,Y}$, the mutual information between $X$ and $Y$ based on a given $f$-divergence (or their mutual $f$-information) is defined as \citep[Section 7.8]{itbook}:
\begin{equation*}
    I_f(X;Y) \coloneqq D_f({P_{X,Y}} \,\|\, {P_X P_Y}) \,,
\end{equation*}
which is the divergence between their joint distribution and the product of the marginals; hence, it is non-negative and symmetric in $X$ and $Y$. Moreover, when $f$ is strictly convex at unity (as is the case for the above examples), we have that $I_f(X;Y) = 0$ if and only if $X$ and $Y$ are independent \citep{makur2020comparison}.
Via \eqref{eq:conditional-divergence-prop}, it holds that\footnote{By symmetry, this also holds when the roles of $X$ and $Y$ are exchanged.}
\begin{equation*}
    I_f(X;Y) = D_f(P_{Y|X} \,\|\, P_Y \,|\, P_X) = \summ_x P_X(x) D_f(P_{Y|{X=x}} \,\|\, \textstyle{\summ_{x'}} P_X(x') P_{Y|{X=x'}}) \,.
\end{equation*} 
The previous identity justifies the overloaded notion $I_f(P_X, P_{Y|X}) \coloneqq I_f(X;Y)$ formulating $I_f$ as a function of $P_X$ and the kernel $P_{Y|X}$.

When the $f$-divergence of choice is the KL-divergence, we obtain the standard mutual information, denoted simply as $I(X;Y)$, whereas $\Ic(X;Y)$ will denote the mutual information based on the chi-squared divergence. 
The latter is bounded from above by $\min\{|\Xs|,|\mathcal{Y}|\}-1$ considering that
\begin{align*}
    \Ic(X;Y)
    &= \sum_{x \colon P_X(x) > 0} P_X(x) \sum_{y \colon P_{Y|X=x}(y) > 0} P_{Y|X=x}(y)  \frac{P_{Y|X=x}(y)}{\sum_{x' \in \Xs} P_X(x') P_{Y|X=x'}(y)} - 1 \\
    &\leq \sum_{x \colon P_X(x) > 0} P_X(x) \sum_{y \colon P_{Y|X=x}(y) > 0} P_{Y|X=x}(y)  \frac{P_{Y|X=x}(y)}{ P_X(x) P_{Y|X=x}(y)} - 1 
    \leq |\Xs| - 1 \,,
\end{align*}
which holds with equality if the distributions $\{P_{Y|X=x}\}_{x \in \Xs}$ have disjoint supports and $P_X$ has full support.
Symmetrically, we also have that
$\Ic(X;Y) \leq |\mathcal{Y}| - 1$.
On the other hand, as $\kl{P}{Q} \leq \log(\chisqr{P}{Q} + 1)$, we have that $I(X;Y) \leq \min\{\log|\Xs|,\log|\mathcal{Y}|\}$.
In particular, $I(X;Y) = \log|\Xs|$ holds if the distributions $\{P_{Y|X=x}\}_{x \in \Xs}$ have disjoint supports and $P_X$ is uniform. Another distinction between the two quantities concerns their behaviour as functions of $P_X$. While for a fixed kernel $P_{Y|X}$, $I(P_X, P_{Y|X})$ is continuous in $P_X$ \citep[Section 7.3]{cover2006}, the same does not hold in general for $\Ic$. To see this, consider a simple instance where $\Xs = \mathcal{Y} = \{0,1\}$, $P_{Y|X=0}(0)=0.5$, and $P_{Y|X=1}(0)=1$. If $P_X(0) = \epsilon$ for some $\epsilon \in (0,1]$, then
\[    \Ic(P_X, P_{Y|X}) = \frac{2 - (3/2)\epsilon}{2 - \epsilon} - \frac{1}{2} \eqqcolon g(\epsilon) \,, \]
which satisfies $\lim_{\epsilon \to 0^+} g(\epsilon) = 1/2$ even though $\Ic(P_X, P_{Y|X}) = 0$ at $\epsilon = 0$ by definition 
since $X$ and $Y$ become independent.
Moreover, since $g$ is decreasing in the interval $(0,1]$, 
$\Ic$ as a function of $P_X$ attains no maximum.

Maximizing the standard mutual information $I(P_X, P_{Y|X})$ in $P_X$ gives rise to what we will refer to as the (KL-)information capacity of the kernel $P_{Y|X}$, denoted as:
\begin{equation*}
    \capckl(P_{Y|X}) \coloneqq \max_{P_X \in \mP_{\Xs}} I(P_X, P_{Y|X}) \,,
\end{equation*}
where $\mP_{\Xs}$ is the set of possible distributions over the elements of $\Xs$. 
More practically, $\capckl(P_{Y|X})$ is better known as the information capacity of the discrete memoryless stationary channel (DMC) with input alphabet $\Xs$, output alphabet $\mathcal{Y}$, and transition matrix $P_{Y|X}$ 
(\citealp[Chapter 7]{cover2006}; \citealp[Chapter 19]{itbook}).
This quantity has an operational significance as it quantifies the highest rate per channel use at which information can be reliably sent (\citealp[Theorem 7.7.1]{cover2006}; \citealp[Theorem 19.9]{itbook}).
Analogously, we define the $\chi^2$-capacity of $P_{Y|X}$ as:
\begin{equation*}
    \capcchi(P_{Y|X}) \coloneqq \sup_{P_X \in \mP_{\Xs}} \Ic(P_X, P_{Y|X}) \,,
\end{equation*}
where we use the supremum in place of the maximum as the latter might not exist per the counterexample provided earlier.

\subsection{Problem Setting}
Let $\aspace \coloneqq \{1,\dots,\noutcomes\}$ denote a set of $\noutcomes \geq 2$ outcomes, 
and let $\Theta \coloneqq \{\theta_1,\dots,\theta_N\} \subset \Delta_\noutcomes$ denote a policy set consisting of $N \geq 2$ distributions over the outcomes, where $\Delta_\noutcomes$ is the probability simplex in $\R^\noutcomes$ defined as $\{u \in \R^\noutcomes \:\colon\: \sum_{j=1}^\noutcomes u(j) = 1 \:\text{and}\: u(j) \geq 0 \: \forall j \in [\noutcomes] \}$. Hence, for an outcome $\oc \in \aspace$ and policy $\theta \in \Theta$, $\theta(x)$ denotes the probability assigned to $\oc$ by $\theta$.
We consider a mediator feedback problem, where a learner plays a sequential game with an unknown environment for $T$ rounds.
From the environment's characteristic distribution, a latent sequence of loss vectors $(\ell_t)_{t=1}^T$ is drawn at the beginning of the game, where $\ell_t \in [0,1]^\noutcomes$ maps each outcome to a loss at the $t$-th round.
Ensuingly, the learner sequentially interacts with the environment by selecting at each round $t$ a policy $\vartheta_t \in \Theta$, possibly at random, and subsequently observing the pair $(\Oc_t,\ell_t(\Oc_t))$, where $\Oc_t$ is a random outcome distributed according to $\vartheta_t$.
Slightly overloading the notation, we let $\ell_t(\theta)$ denote the expected value (conditioned on $\ell_t$) of $\ell_t(\Oc_t)$ had the learner picked policy $\theta$ at round $t$; that is, $\ell_t(\theta) \coloneqq \sum_{\oc \in \aspace} \theta(\oc) \ell_t(\oc)$.
The learner's objective is to minimize their regret, defined as:
\begin{equation*}
    R_T \coloneqq \E \lsb{\sum_{t=1}^T \ell_t(\vartheta_t)} - \min_{\theta \in \Theta} \E \lsb{\sum_{t=1}^T \ell_t(\theta)} \,,
\end{equation*}
where the expectation is taken over both the learner's and the environment's randomization.
We use a common probability space $(\Omega, \mathcal{F},\pr)$ to define all random variables.
For round $t \in [T]$, let  $\his_t \coloneqq (\vartheta_s, \Oc_s, \ell_s(\Oc_s))_{s=1}^t$ denote the interaction history up to the end of round $t$, and let $\mathcal{F}_t \coloneqq \sigma(\his_t)$ denote the $\sigma$-algebra generated by $\his_t$.
Accordingly, we define $\E_t[\cdot] \coloneqq \E[\cdot \mid \mathcal{F}_{t-1}]$ and $\pr^t(\cdot) \coloneqq \pr(\cdot \mid \mathcal{F}_{t-1})$, with $\mathcal{F}_{0}$ being the trivial $\sigma$-algebra. Analogously to \citep{russo2016information}, we define $I^t(X;Y)$ and $\Ic^t(X;Y)$ as the mutual information and the mutual chi-squared-information between (discrete) random variables $X$ and $Y$ with $\pr^t$ as the base measure. Notice that these quantities are random variables owing to their dependence on the history.

\section{The Policy Set Capacity: An Improved Regret Bound for EXP4} \label{sec:capacity}
Before defining the policy set capacity, we provide some context by briefly reviewing some quantities used in related works to
describe the richness of the policy set.
\cite{McMahanS09} introduce the 
quantity
\begin{equation*}
    \bigS(\Theta) \coloneqq \sum_{\oc \in \aspace} \max_{\theta \in \Theta} \theta(\oc) \,. 
\end{equation*}
It is easily verified that $1 \leq \bigS(\Theta) \leq \min\{\noutcomes,N\}$, where the lower bound is attained in the limit case when all the policies are identical,
and the upper bound is attained either when the policies have disjoint supports or when each outcome is matched with a policy entirely concentrated on that outcome. To get a finer sense of this quantity, we define $\V(\Theta) \coloneqq \bigS(\Theta) - 1$. Notice then that when there are only two policies, that is, $\Theta = \{ \theta_1, \theta_2\}$, $\V$ reduces to the total variation distance between the two distributions: 
\begin{align*}
    \V(\{ \theta_1, \theta_2\}) = \summ_{\oc} \max\{\theta_1(\oc), \theta_2(\oc)\} - 1 = 1 - \summ_{\oc} \min\{\theta_1(\oc), \theta_2(\oc)\} = \delta(\theta_1, \theta_2) \,. 
\end{align*}
More generally, one can obtain the somewhat coarse bound:
\begin{align*}
    \V(\Theta) \leq \min_{\median \in \Delta_{\noutcomes}} \summ_\theta \delta(\theta, \median) \,,
\end{align*}
where the distribution minimizing the right-hand side acts as the geometric median of $\Theta$ in terms of the total variation distance. This inequality follows from Theorem II.1 in \citep{minimax-risk}, which for any $f$-divergence and any $\median \in \Delta_{\noutcomes}$,
provides the (implicit) bound:
\begin{align*}
    f(\bigS(\Theta)) + (N-1) f\bbrb{\frac{N-\bigS(\Theta)}{N-1}} \leq \summ_\theta D_f(\theta \,\|\, \median) \,.
\end{align*}

Another relevant quantity is the chi-squared ``diameter'' of the policy set:
\begin{align*}
    d_{\chi^2}(\Theta) \coloneqq \max_{\theta,\theta' \in \Theta} \chisqr{\theta}{\theta'} \,,
\end{align*}
which is featured in the regret bounds of \cite{stochastic-experts} and \cite{papini2019optimistic}, see \Cref{sec:bobw}. Though it has no general upper bound, $d_{\chi^2}$ can be smaller than $\V$ as shown in the examples section below. \cite{stochastic-experts} also obtain bounds in terms of another diameter-like quantity based on the logarithm of (one plus) the $f$-divergence with $f(x)=x\exp(x-1) - 1$, though $d_{\chi^2}$ is never larger. 

\subsection{The Policy Set Capacity}
Let $\vartheta$ and $\Oc$ be two random variables taking values respectively over $\Theta$ and $\aspace$ such that $\pr_{\Oc|\vartheta=\theta}(\oc) = \theta(\oc)$ for any $\theta \in \Theta$ and $\oc \in \aspace$. Then, we define the (chi-squared) capacity of the policy set as:
\begin{align*}
    \capc(\Theta) \coloneqq \capcchi(\pr_{\Oc|\vartheta}) \,,
\end{align*}
which does not depend on the distribution of $\vartheta$. More explicitly, if we define
\begin{align*}
    \Q_\tau(\Theta) \coloneqq \Ic(\tau, \pr_{\Oc|\vartheta}) = \summ_{\theta} \tau(\theta) \bchisqr{\theta} {\textstyle{\summ_{\theta'}} \tau(\theta')\theta'} \,,
\end{align*}
for some distribution $\tau \in \mP_{\Theta}$;\footnote{Inline with previous notation, $\mP_{\Theta}$ denotes the set of possible distributions over the policies.} then, $\capc(\Theta) = \sup_{\tau \in \mP_{\Theta}} \Q_\tau(\Theta)$. As alluded to before, this definition inspires an interpretation of the policy set as inducing a stationary, memoryless  
channel defined via the kernel $\pr_{\Oc|\vartheta}$. 
Intuitively, $\capc(\Theta)$ can be seen to measure the dependency between $\vartheta$ and $X$ 
maximised over the prior distribution of $\vartheta$.
Hence, the more dissimilar the distributions are, the larger this quantity.

Since $\capc$ is based on $\Ic$, it satisfies $0 \leq \capc(\Theta) \leq \min\{\noutcomes,N\} - 1$, which is the same range as that of $\V$.
In particular, much like $\V$, the upper bound is attained either when the policies have disjoint supports or when each outcome is matched with a policy entirely concentrated on that outcome.
Moreover, $\capc(\Theta) = 0$ if and only if $X$ and $\vartheta$ are independent no matter how $\vartheta$ is distributed, which requires the policies to be identical.
More distinctively, it holds in general that $\capc(\Theta) \leq \min \{\V(\Theta), d_\Chisqr(\Theta)\}$.
On the one hand, the (joint) convexity of the chi-squared divergence implies that
\begin{align*} 
    \capc(\Theta) 
    &\leq \sup_{\tau \in \mP_{\Theta}} \sum\nolimits_{\theta, \theta'} \tau(\theta) \tau(\theta') \chisqr{\theta} {\theta'} \leq \max_{\theta, \theta'} \chisqr{\theta} {\theta'} = d_\Chisqr(\Theta)\,.
\end{align*}
On the other hand, we have that
\begin{align*}
   \capc(\Theta) 
   &= \sup_{\tau \in \mP_{\Theta}} \summ_{\theta} \tau(\theta) \bigg(\summ_{\oc} \frac{\theta(\oc)^2}{\sum_{\theta'} \tau(\theta') \theta'(\oc)} - 1 \bigg) \\
   &= \sup_{\tau \in \mP_{\Theta}} \summ_{\oc} \frac{\sum_{\theta} \tau(\theta)  \theta(\oc)^2}{\sum_{\theta'} \tau(\theta') \theta'(\oc)} - 1 \\
   &\leq \sup_{\tau \in \mP_{\Theta}} \summ_{\oc} \max_{\theta''} \theta''(\oc) \frac{\sum_{\theta} \tau(\theta)  \theta(\oc)}{\sum_{\theta'} \tau(\theta') \theta'(\oc)} - 1 \\&
   =  \summ_{\oc} \max_{\theta} \theta(\oc) - 1 
   = \bigS(\Theta) - 1 = \V(\Theta)\,.
\end{align*}

\subsection{A Regret Bound for EXP4 in Terms of the Capacity}
\textsc{Exp4}, detailed in \Cref{alg:exp4}, adopts a simple and natural approach for tackling mediator feedback problems.
Its choice of policy in a given round is drawn from a running distribution over the policies taking an exponential weights form.
There, each policy $\theta$ is weighted according to a proxy of the sum of its losses so far, where an importance-weighted estimator $\hat{\ell}_t(\theta)$ 
replaces the inaccessible
$\ell_t(\theta)$.
The following theorem provides a regret bound for \textsc{Exp4} that scales with the policy set capacity.
This result improves upon the $\sqrt{\bigS(\Theta) T \log N}$ bound,
seemingly the best available worst-case bound for the considered setting.
Further, we instantiate the capacity in the ensuing discussion for three families of policy sets for which the bound of this theorem will be shown to be near-optimal in \Cref{sec:lower}. 
While the proposed learning rate schedule requires exact knowledge of the capacity, this requirement will be lifted in \Cref{thm:chi-general}, which also addresses the case when the policies' distributions can vary between rounds. 

\begin{algorithm} [t]
    \caption{\textsc{Exp4} (Fixed Policy Set)}
    \label{alg:exp4}
    \begin{algorithmic}[1]
        \State \textbf{Input:}  sequence of learning rates $(\eta_t)_{t=1}^T$
        \State \textbf{Initialize:} $\forall \theta \in \Theta$, $\hat{\ell}_0(\theta)=0$
        \For{$t=1,\dotsc,T$}
            \State Draw $\vartheta_t \sim p_t$, where $p_{t}(\theta) = \frac{\exp(-\eta_t \sum_{s=0}^{t-1}\hat{\ell}_s(\theta))}{\sum_{\theta'}  \exp(-\eta_t \sum_{s=0}^{t-1}\hat{\ell}_s(\theta'))}$
            \State Draw $\Oc_t \sim \vartheta_t$, and observe loss $\ell_t (\Oc_t)$
            \State $\forall \theta \in \Theta$, set $\hat{\ell}_t(\theta) = \frac{\theta(\Oc_t)}{\sum_{\theta'} p_{t} (\theta') \theta'(\Oc_t)} \ell_t(\Oc_t)$
        \EndFor
    \end{algorithmic}
\end{algorithm}

\begin{theorem} \label{thm:chi}
    Algorithm \ref{alg:exp4} with $\eta_t = \min\Bcb{1,\sqrt{ \frac{\log N}{e \capc(\Theta) t }}}$ satisfies
    \[
        R_T \leq 2\max\bcb{\sqrt{e \capc(\Theta) T \log N}, \log N} \,.
    \]
\end{theorem}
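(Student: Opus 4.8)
The plan is to reduce the expected regret to the regret of the exponential-weights rule measured on the estimated losses, and then to run an anytime analysis in which the per-round stability term is controlled by the policy set capacity. Let $\theta^*$ be a minimizer of $\E[\sum_t \ell_t(\theta)]$ and treat the loss sequence as oblivious (conditioning on it if necessary). Writing $\bar{p}_t(\oc) \coloneqq \summ_{\theta'} p_t(\theta')\theta'(\oc)$ for the outcome distribution induced by $p_t$, a direct computation shows that the estimator is conditionally unbiased, $\E_t[\hat\ell_t(\theta)] = \ell_t(\theta)$, since $\Oc_t$ is marginally distributed as $\bar{p}_t$. As $p_t$ is $\mathcal{F}_{t-1}$-measurable, the tower rule gives $R_T = \E[\summ_t (\langle p_t, \hat\ell_t\rangle - \hat\ell_t(\theta^*))]$, so it suffices to bound this estimated regret. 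The key preliminary move is to \emph{center} the estimates: both $p_t$ and $\langle p_t,\hat\ell_t\rangle - \hat\ell_t(\theta^*)$ are invariant to shifting all of $\hat\ell_t(\cdot)$ by the same (round-dependent) scalar, so I would replace $\hat\ell_t(\theta)$ by $\tilde\ell_t(\theta) \coloneqq \hat\ell_t(\theta) - \ell_t(\Oc_t) = \big(\tfrac{\theta(\Oc_t)}{\bar{p}_t(\Oc_t)} - 1\big)\ell_t(\Oc_t)$ without altering the algorithm or the quantity to be bounded. This choice satisfies $\tilde\ell_t(\theta) \ge -1$ (since $\ell_t(\Oc_t)\in[0,1]$), and its square matches a $\chi^2$-divergence summand.

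Next I would invoke the standard exponential-weights analysis with the nonincreasing schedule $\eta_t$. Using $e^{-x}\le 1 - x + x^2$, valid for $x\ge -1$ and hence applicable to $x=\eta_t\tilde\ell_t(\theta)$ precisely because the cap $\eta_t\le 1$ together with $\tilde\ell_t(\theta)\ge -1$ guarantees $\eta_t\tilde\ell_t(\theta)\ge -1$, the per-round progress telescopes into a penalty term bounded by $\tfrac{\log N}{\eta_T}$ (via $\kl{\theta^*}{p_t}\le\log N$ and summation by parts to handle the varying rate) plus a stability term $\summ_t \eta_t \langle p_t, \tilde\ell_t^2\rangle$. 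For the latter, taking conditional expectation and using $\ell_t(\oc)^2\le 1$ yields
\[
\E_t[\langle p_t, \tilde\ell_t^2\rangle] = \summ_\oc \ell_t(\oc)^2 \summ_\theta p_t(\theta)\frac{(\theta(\oc)-\bar{p}_t(\oc))^2}{\bar{p}_t(\oc)} \le \summ_\theta p_t(\theta)\,\chisqr{\theta}{\bar{p}_t} = \Q_{p_t}(\Theta) \le \capc(\Theta),
\]
where the final inequality is the definition of the capacity as the supremum of $\Q_\tau$ over $\tau \in \mP_{\Theta}$.

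Finally I would carry out the learning-rate bookkeeping. Bounding $\eta_t \le \sqrt{\log N / (e\,\capc(\Theta)\, t)}$ uniformly and using $\summ_{t\le T} t^{-1/2}\le 2\sqrt{T}$ gives a stability contribution of at most $\tfrac{2}{\sqrt e}\sqrt{\capc(\Theta)T\log N}$. For the penalty I would split on whether $e\,\capc(\Theta)\,T \ge \log N$: in that regime $\eta_T$ equals the square-root expression, the penalty is $\sqrt{e\,\capc(\Theta)T\log N}$, and the total is at most $(\sqrt e + 2/\sqrt e)\sqrt{\capc(\Theta)T\log N}\le 2\sqrt{e\,\capc(\Theta)T\log N}$ since $e\ge 2$; in the complementary regime $\eta_T=1$, the penalty is $\log N$ while the stability contribution is at most $\tfrac{2}{e}\log N\le \log N$ (because $\capc(\Theta)T\le \log N/e$), giving $2\log N$. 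Combining the cases yields the claimed $2\max\{\sqrt{e\,\capc(\Theta)T\log N},\log N\}$.

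I expect the main obstacle to be the centering step together with its matching stability estimate. A direct bound on $\langle p_t,\hat\ell_t^2\rangle$ would only give $\capc(\Theta)+1$, and the additive constant is fatal: it would forbid the capacity-free $2\log N$ branch and prevent the horizon-dependent term from vanishing as the policies coincide. The shift by $\ell_t(\Oc_t)$ is exactly what collapses this to $\capc(\Theta)$ via the $\chi^2$-summand identity above. The secondary nuisance is the careful treatment of the time-varying rate—the summation-by-parts for the penalty and the interaction of the $\min\{1,\cdot\}$ cap with the stability sum—needed to recover both the precise constant $2\sqrt{e}$ and the maximum structure.
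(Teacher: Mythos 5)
Your proposal is correct and follows essentially the same route as the paper's proof: the same centering of the importance-weighted estimates by $\ell_t(\Oc_t)$, the same identification of the conditional second moment with $\chi^2(\theta\,\|\,\bar p_t)$ and hence with $\Q_{p_t}(\Theta)\le \capc(\Theta)$, and the same case split on $T$ versus $\log N/(e\,\capc(\Theta))$. The only (immaterial) difference is that you derive the stability term via the classical potential argument with $e^{-x}\le 1-x+x^2$ (coefficient $1$), whereas the paper uses an FTRL local-norm lemma plus the bound $\tilde p_{t+1}\le e\,p_t$ (coefficient $e/2$); both yield the stated constant.
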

\begin{proof}
    Let $\theta^* \in \argmin_{\theta \in \Theta} \E\sum_{t=1}^T \ell_t(\theta)$. For a policy $\theta$, we define a shifted version of the loss at time $t$ as 
    $\zeta_t (\theta) \coloneqq \summ_\oc \brb{\theta(\oc) - \psi_t(\oc)} \ell_t(\oc)$,
    where $\psi_t(\oc) \coloneqq \summ_{\theta} p_{t} (\theta) \theta(\oc)$. Thus, $\zeta_t (\theta) = \ell_t(\theta) - \summ_{\theta'} p_{t} (\theta') \ell_t(\theta')$.
    Notice that for any two policies $\theta$ and $\theta'$, $\zeta_t(\theta) - \zeta_t(\theta') = \ell_t(\theta) - \ell_t(\theta')$. 
    Hence,
    $
        R_T = \E \summ_t \brb{\ell_t(\vartheta_t) - \ell_t(\theta^*)} = \E \summ_t \brb{\zeta_t(\vartheta_t) - \zeta_t(\theta^*)} \,.
    $
    Next, we define $\hat{\zeta}_t (\theta)$ as an estimate of the shifted loss of $\theta$ at time $t$:
    \begin{align} \label{def:estimated-modified-losses}
        \hat{\zeta}_t (\theta) &\coloneqq \brb{\theta(\Oc_t)-\psi_t(\Oc_t)} \frac{\ell_t (\Oc_t)}{\psi_t (\Oc_t)} 
        = \hat{\ell}_t(\theta) - \ell_t(\Oc_t) \,.
    \end{align} 
    For convenience, we will sometimes treat the distribution $p_t$ as a vector belonging to the simplex $\Delta_N \subset \R^N$, where its $i$-th coordinate $p_t(i)$ denotes $p_t(\theta_i)$ for each $i \in [N]$.
    Analogously, the functions $\hat{\ell}_t$ and $\hat{\zeta}_t$ will sometimes be handled as vectors in $\R^N$.
    Notice that $p_t$ and $\psi_t$ are measurable with respect to $\F_{t-1}$, and that $\ell_t$ is independent of $\vartheta_t$ and $\Oc_t$ conditioned on $\F_{t-1}$.  
    Hence, it holds that $\E_t\zeta_t(\vartheta_t) = \E_t \summ_{\theta} p_{t} (\theta) \zeta_t(\theta)$, and that $\E_t \hat{\zeta}_t(\theta) = \E_t \zeta_t(\theta)$ for any fixed $\theta \in \Theta$.
    Consequently, thanks to the tower rule and the linearity of expectation, we have that 
    \begin{align*}
        \E \summ_t \brb{\zeta_t(\vartheta_t) - \zeta_t(\theta^*)} = \E \summ_t \langle p_t - \mathbf{e}_{\theta^*}, \zeta_t \rangle = \E \summ_t \langle p_t - \mathbf{e}_{\theta^*}, \hat{\zeta}_t \rangle \,,
    \end{align*}
    where $\mathbf{e}_{\theta^*} \in \R^N$ is the indicator vector for $\theta^*$.

    It is well known \citep[see][Section~2.7]{shalev2012online} that for every round $t$, the definition of $p_t$ in \Cref{alg:exp4} is equivalent to
    \begin{equation} \label{eq:ftrl-form-1}
    p_t = \argmin_{p \in \Delta_N} \: \eta_t \Ban{ \summ_{s=1}^{t-1}\hat{\ell}_s, p } - H(p) \,,
    \end{equation}
    where $H(p) \coloneqq \sum_{i=1}^N p(i)\log(1/p(i))$ is the Shannon entropy of $p$.
    Note that for any $p \in \Delta_N$, 
    \[
     \Ban{ \summ_{s=1}^{t-1}\hat{\zeta}_s, p } =  \Ban{ \summ_{s=1}^{t-1}\hat{\ell}_s , p } -  \summ_{s=1}^{t-1} \ell_s(\Oc_s) \,.
    \]
    Hence, by adding constant terms (i.e., not depending on $p$) to the objective function in \eqref{eq:ftrl-form-1} and changing the scaling, we can arrive at the following alternative characterization of $p_t$ for $t \in [T+1]$:
    \begin{equation*} 
    p_t = \argmin_{p \in \Delta_N} \: \Ban{ \summ_{s=1}^{t-1}\hat{\zeta}_s, p } + \frac{1}{\eta_t} \brb{\log N-H(p)} \,, 
    \end{equation*}
    which is equivalent to the update rule of the follow the regularized leader (FTRL) algorithm when executed on the losses $(\hat{\zeta}_t)_{t \in [T]}$ with a decision set $\Delta_N$ and a sequence of regularizers $(\phi_t)_{t \in [T+1]}$  where
    \[
    \phi_t(p) = \frac{1}{\eta_t} \brb{\log N-H(p)} \qquad \forall p \in \Delta_N \,, 
    \]
    which is the negative Shannon entropy normalized to the range $[0,\log N]$ and scaled by the learning rate.
    Let $D_{\phi_t}(\cdot\,;\,\cdot)$ be the Bregman divergence based on $\phi_t$, and set $\eta_{T+1}=\eta_T$.
    We can then use Lemma 7.14 in \citep{orabona2023modern} to obtain the following regret bound for FTRL on the estimated shifted losses:
    \begin{equation*}
        \summ_t \langle p_t - \mathbf{e}_{\theta^*}, \hat{\zeta}_t \rangle \leq \frac{\log N}{\eta_T} + \frac12 \summ_t \eta_t \langle z_t,  \hat{\zeta}_t^{\:2} \rangle \,,
    \end{equation*}
    where $z_t$ lies on the line segment between $p_t$ and $\tilde{p}_{t+1}=\argmin_{u \in \R^N_{\geq 0}} \langle \hat{\zeta}_t, u \rangle + D_{\phi_t}(u\,;\,p_t)$. 
    By its definition, it is easy to show that $\tilde{p}_{t+1}(i) = p_t(i)\exp(-\eta_t \hat{\zeta}_t(i))$ for every $i \in [N]$. 
    Notice that
    $
        \eta_t \hat{\zeta}_t(i) 
        \geq - \eta_t \ell_t(\Oc_t)  
        \geq - \eta_t \geq -1 
    $
    since $\hat{\ell}_t$ is non-negative, $\eta_t \in (0,1]$, and $\ell_t(\Oc_t) \leq 1$.
    Hence, it holds for every $i \in [N]$ that $\tilde{p}_{t+1}(i) \leq e \: p_t(i)$, implying that
    $\langle z_t,  \hat{\zeta}_t^{\:2} \rangle \leq e \langle p_t,  \hat{\zeta}_t^{\:2} \rangle \,.$
    
    Overall, we have shown that
    \begin{equation} \label{eq:regret-pre-q-form}
        R_T \leq \frac{\log N}{\eta_T} + \frac{e}{2} \summ_t \eta_t \E \summ_\theta p_t(\theta) \hat{\zeta}_t(\theta)^2 \,.
    \end{equation}
    Now, for every $\theta \in \Theta$ and $t \in [T]$, we have that
    \begin{align*}
        \E_t \hat{\zeta}_t(\theta)^2 &= \E_t \brb{\theta(\Oc_t)-\psi_t(\Oc_t)}^2 \frac{\ell_t (\Oc_t)^2}{\psi_t (\Oc_t)^2} \\
        &\leq \E_t \frac{\brb{\theta(\Oc_t)-\psi_t(\Oc_t)}^2}{\psi_t (\Oc_t)^2} \\
        &= \E_t \summ_\oc \frac{\brb{\theta(\oc)-\psi_t(\oc)}^2}{\psi_t (\oc)^2} \I\{\oc=\Oc_t\} \\
        &= \summ_\oc \frac{\brb{\theta(\oc)-\psi_t(\oc)}^2}{\psi_t (\oc)} \\
        &= \summ_\oc \psi_t (\oc) \bbrb{\frac{\theta(\oc)}{\psi_t (\oc)} - 1}^2 
        = \chisqr{\theta}{\psi_t} = \bchisqr{\theta} {\textstyle{\summ_{\theta'}} p_t(\theta')\theta'}\,,
    \end{align*}
    where the third equality holds since $\E_t \I\{\oc=\Oc_t\} = \pr^t(\oc=\Oc_t) = \psi_t (\oc)$.
    This implies that
    \begin{align} \label{eq:second-moment-q-bound}
        \E_t \summ_\theta p_t(\theta) \hat{\zeta}_t(\theta)^2 \leq \summ_\theta p_t(\theta)  \bchisqr{\theta} {\textstyle{\summ_{\theta'}} p_t(\theta')\theta'} = \Q_{p_t}(\Theta)\,.
    \end{align}
    Consequently, we arrive at the following bound:
    \begin{align*}
         R_T &\leq \frac{\log N}{\eta_T} + \frac{e}{2} \E \summ_t \eta_t \Q_{p_t}(\Theta) 
         \leq \frac{\log N}{\eta_T} + \frac{e}{2} \capc(\Theta) \summ_t \eta_t \,.
    \end{align*}
    If $T \geq \frac{\log N}{e \capc(\Theta)}$, then $\eta_T = \sqrt{\frac{\log N}{e \capc(\Theta) T }}$ and 
    $
        R_T \leq 2\sqrt{e \capc(\Theta) T \log N}\,,
    $
    where we have used that $\eta_t \leq \sqrt{\frac{\log N}{e \capc(\Theta) t }}$ for $t \in [T]$ and that $\summ_{t=1}^T \frac{1}{\sqrt{t}} \leq 2\sqrt{T}$. Otherwise, if $T < \frac{\log N}{e \capc(\Theta)}$, then $\eta_1=\dots=\eta_T=1$ and 
    $
        R_T \leq \log N + \frac{e \capc(\Theta) T}{2} \leq 2 \log N \,.
    $
\end{proof}

\subsection{Examples} \label{sec:examples}
We now examine the quantity $\capc(\Theta)$ for a selection of policy set structures and compare it with related quantities.

\subsubsection{Two Policies} \label{sec:examples:two}
We start with the case when the policy set consists of only two policies, i.e., $\Theta = \{\theta_1,\theta_2\}$.
As mentioned before, we have that $\V(\Theta) = \tv(\theta_1,\theta_2)$, while $d_{\Chisqr} = \max\{ \chisqr{\theta_1}{\theta_2}, \chisqr{\theta_2}{\theta_1} \}$.
These two quantities are incomparable in general, and this can be seen by specializing the next example to the two policies case.
For a fixed $r \in [0,1]$, define $q_r(\theta_1 \,\|\, \theta_2) = \Q_\tau(\Theta)$ with $\tau(\theta_1)= r$; hence, $\capc(\Theta) = \sup_{r \in [0,1]} q_r(\theta_1 \,\|\, \theta_2) \eqqcolon \capc(\theta_1,\theta_2)$. An explicit form for $q_r(\theta_1 \,\|\, \theta_2)$ is given by:
\begin{align*}
   q_r(\theta_1 \,\|\, \theta_2) = r (1-r) \summ_\oc \frac{ (\theta_1(\oc)  - \theta_2(\oc))^2}{r\theta_1(\oc) + (1-r)\theta_2(\oc)} \,.
\end{align*}
As a function of $r \in [0,1]$, $q_r(\theta_1 \,\|\, \theta_2)$ is concave with $q_0(\theta_1 \,\|\, \theta_2)=q_1(\theta_1 \,\|\, \theta_2)=0$.
This quantity is known in the literature as the Vincze–Le Cam divergence of order $r$ \citep{raginsky2016strong,makur2020comparison},
which is an $f$-divergence with $f(x)=\frac{r(1-r)(x-1)^2}{r(x-1)+1}$.
Corresponding to $r=1/2$ is (half) the triangular discrimination $\Delta$, immediately implying a lower bound for $\capc$: 
\begin{align*}
    \capc(\theta_1,\theta_2) &\geq  q_{1/2}(\theta_1 \,\|\, \theta_2) = \frac{1}{2} \summ_\oc \frac{(\theta_1(\oc)-\theta_2(\oc))^2}{\theta_1(\oc)+\theta_2(\oc)} = \frac{1}{2} \Delta(\theta_1,\theta_2) \,.
\end{align*}
On the other hand, since $\frac{r(1-r)(x-1)^2}{r(x-1)+1} \leq (\sqrt{x}-1)^2$ for any $r \in (0,1)$ and $x\in [0,\infty)$, we can bound $\capc$ in terms of the squared Hellinger distance $H^2$:
\begin{equation} \label{eq:capacity-hellinger}
    \capc(\theta_1,\theta_2) \leq \summ_\oc \theta_2(\oc) \lrb{\sqrt{{\theta_1(\oc)}/{\theta_2(\oc)}}-1}^2 = 2 H^2(\theta_1,\theta_2)\,.
\end{equation}
Combining these observations with known inequalities \citep{topsoe}, we obtain that
\begin{equation*}
    \tv^2 \leq \frac\Delta2 \leq \capc \leq 2H^2 \leq \Delta \leq 2 \tv \,,
\end{equation*}
which shows that the capacity of two policies is of the same order as the squared Hellinger distance and the triangular discrimination. For the two policies case, we prove in \Cref{thm:lower:two} a lower bound of $\Omega(\sqrt{\capc(\theta_1,\theta_2) T})$, which order-wise matches the bound of \Cref{thm:chi}.

\subsubsection{\texorpdfstring{$\epsilon$}{Epsilon}-Greedy Policies} \label{sec:examples:epsilon}
Consider now a case in which $N=\noutcomes$ and each policy $\theta$ is associated (one-to-one) with an outcome $\oc_\theta$ such that for any outcome $\oc$, $\theta(\oc) = ({1-\epsilon})/{N} + \epsilon \mathbb{I}\{\oc=\oc_\theta\}$, where $\epsilon \in [0,1]$. At $\epsilon=0$, all policies collapse to the uniform distribution, and we get that $\capc(\Theta)=0$. On the other hand, when $\epsilon=1$, the problem essentially reduces to a standard bandit problem with $\capc(\Theta)=N-1$.
Generally, for $\tau \in \mP_{\Theta}$, $\Q_\tau(\Theta)$ takes the following form:
\begin{equation*}
    \Q_\tau(\Theta) = \epsilon^2 \summ_\theta \frac{\tau(\theta)(1-\tau(\theta))}{\frac{1-\epsilon}{N} + \epsilon \tau(\theta)} \,.
\end{equation*}
For intermediate values of $\epsilon \in (0,1)$, $\Q_\tau(\Theta)$ is a strictly concave function in $\tau$ attaining its maximum value at the uniform distribution,
entailing that $\capc(\Theta) = \epsilon^2 (N-1)$.
In comparison, we have that 
\begin{equation*}
  \V(\Theta) = \epsilon (N-1) \qquad \text{and} \qquad   d_{\Chisqr}(\Theta) = \frac{\epsilon(N-2) + 2}{\epsilon(N-1) + 1} \cdot \frac{\epsilon^2}{1-\epsilon} N \,.
\end{equation*}
Notice that even though $d_{\Chisqr}$ grows unbounded as $\epsilon$ approaches 1, it can be smaller than $\V$ for small enough $\epsilon$.
In \Cref{thm:lower:epsilon}, we prove a lower bound of order $\epsilon\sqrt{NT}$ for this policy set structure, which matches the upper bound of \Cref{thm:chi} up to a logarithmic factor.

\subsubsection{\texorpdfstring{$M$}{M}-Supported Uniform Policies} \label{sec:examples:uniform}
Consider another policy set structure where all policies are uniform distributions over a support of $M \leq \noutcomes$ outcomes. That is, if we denote by $\text{Supp}(\theta)$ the support for policy $\theta$, then we have that $\text{Supp}(\theta)=M$ and for any outcome $\oc$, $\theta(\oc)=({1}/{M})\I\{\oc \in \text{Supp}(\theta)\}$. 
Assume further that each outcome belongs to the support of at least one policy.
For this structure, one can verify that 
$\capc(\Theta) = \V(\Theta) = \noutcomes/M - 1$.
In fact, we have that $\Q_\tau(\Theta) = \capc(\Theta)$ for any $\tau \in \mP_{\Theta}$ with full support.
On the other hand, $d_{\Chisqr} = \infty$ outside of the trivial case when $M = \noutcomes$.
In \Cref{thm:lower:multi}, we show that for a special family of $M$-supported uniform policies (where $N \geq \noutcomes/M$), the regret of any algorithm is 
$\Omega\brb{{\sqrt{({\noutcomes/M - 1}) \:T\: {\log(N)}/{\log\lrb{{\noutcomes}/{M}}}}}}$. This lower bound particularly shows that the logarithmic factor in the regret bound of \Cref{thm:chi} is at least partly unavoidable.

\subsection{A Generalization for Time-Varying Policy Distributions}
The next theorem extends the result of \Cref{thm:chi} by allowing the distributions of the policies to vary between rounds, modelling in this manner the problem of bandits with expert advice.
We rely on an adaptive learning rate schedule, whose form is common in the online learning and bandits literature \citep{AUER-adaptive,McMahanS10,neu15}.
The resulting bound replaces the dependence on the (per-round) capacity with the history-conditioned mutual chi-squared-information between the chosen policy and the drawn outcome, recalling that the former is an upper bound for the latter by definition.
Additionally, the adopted learning rate in a given round only requires an upper bound on the capacity of the policies' distributions, thus affording one the flexibility of providing a quantity of simpler form like $\V$, or even just $\min\{N,\noutcomes\}$.
In terms of regret, this flexibility is paid for through a solitary additive term depending primarily on the largest of these provided bounds.

Only in the current scope, a member $\theta$ of the policy set $\Theta$ is not synonymous with a distribution over the outcomes; it serves solely as an identifier for a policy. 
Along with the sequence of losses, the environment draws for each $\theta \in \Theta$ a sequence of distributions $(\theta(\cdot;t))_{t=1}^T$ at the beginning of the game, where $\theta(\oc;t)$ is the probability assigned to outcome $\oc$ by policy $\theta$ at round $t$. 
The distributions $(\theta(\cdot;t))_{\theta \in \Theta}$ associated with a given round $t$ are revealed to the learner at the beginning of the round. 
Accordingly, we redefine the interaction history as ${\his}_t \coloneqq \brb{(\vartheta_s, \Oc_s, \ell_s(\Oc_s))_{s\in [t]}, (\theta(\cdot;s))_{\theta \in \Theta, s\in [t+1]}}$, which includes all the information available to the learner before choosing a policy at round $t+1$.\footnote{Consistently, the usage of the filtration $(\F_t)_t$ (and dependent quantities) in the context of the following result refers to this augmented definition of the history.}
The definition of the regret remains the same, only that the loss of policy $\theta$ at round $t$ is now defined as $\ell_t(\theta) \coloneqq \summ_\oc \theta(\oc;t) \ell_t(\oc)$.
For a distribution $\tau \in \mP_{\Theta}$ and round $t \in [T]$, we define 
\begin{align*}
    \Q_{t,\tau}\phantheta \coloneqq \summ_{\theta} \tau(\theta) \bchisqr{\theta(\cdot;t)} {\textstyle{\summ_{\theta'}} \tau(\theta')\theta'(\cdot;t)}
\end{align*}
and $\capc_t\phantheta \coloneqq \sup_{\tau \in \mP_{\Theta}} \Q_{t,\tau}\phantheta$ as the time-varying analogues of $Q_\tau$ and $\capc$. 
The following theorem still concerns the plain \textsc{EXP4} algorithm, reformulated in \Cref{alg:exp4-varying} for the time-varying case. Observe that for any round $t$, $\Ic^t(\vartheta_t; \Oc_t) = \Q_{t,p_t}\phantheta$.

\begin{algorithm} [t]
    \caption{\textsc{Exp4} (Time-Varying Policy Distributions)}
    \label{alg:exp4-varying}
    \begin{algorithmic}[1]
        \State \textbf{Input:}  sequence of learning rates $(\eta_t)_{t=1}^T$
        \State \textbf{Initialize:} $\forall \theta \in \Theta$, $\hat{\ell}_0(\theta)=0$
        \For{$t=1,\dotsc,T$}
            \State Observe distributions $\theta(\cdot ; t) \quad \forall \theta \in \Theta$  
            \State Draw $\vartheta_t \sim p_t$, where $p_{t}(\theta) = \frac{\exp(-\eta_t \sum_{s=0}^{t-1}\hat{\ell}_s(\theta))}{\sum_{\theta'}  \exp(-\eta_t \sum_{s=0}^{t-1}\hat{\ell}_s(\theta'))}$
            \State Draw $\Oc_t \sim \vartheta_t$, and observe loss $\ell_t (\Oc_t)$
            \State $\forall \theta \in \Theta$, set $\hat{\ell}_t(\theta) = \frac{\theta(\Oc_t;t)}{\sum_{\theta'} p_{t} (\theta') \theta'(\Oc_t;t)} \ell_t(\Oc_t)$
        \EndFor
    \end{algorithmic}
\end{algorithm} 

\begin{theorem} \label{thm:chi-general}
    Let $Z_t \coloneqq \sum_{s=1}^t \Ic^s(\vartheta_s; \Oc_s)$ for all $t \in [T]$, and let $(J_t)_{t=1}^T$ be a non-decreasing sequence of non-negative real numbers such that $J_t$ is $\mathcal{F}_{t-1}$-measurable and $J_t \geq \capc_t\phantheta$. 
    Then, \Cref{alg:exp4-varying} with $\eta_t = \sqrt{ \frac{\log N}{\log N + e \lrb{Z_{t-1} + J_t }}}$ satisfies
    \begin{align*}
        R_T &\leq \E \bbsb{ 2\sqrt{e \summ_t \Ic^t(\vartheta_t; \Oc_t) \log N } + \log N + \sqrt{e J_T \log N}} \\
        &\leq \E \bbsb{ 2\sqrt{e \summ_t \capc_t\phantheta \log N } + \log N + \sqrt{e J_T \log N}}\,.
    \end{align*}
\end{theorem}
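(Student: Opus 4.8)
The plan is to follow the skeleton of the proof of \Cref{thm:chi}. First I would reduce the regret to the FTRL regret on the estimated shifted losses, $R_T = \E \summ_t \langle p_t - \mathbf{e}_{\theta^*}, \hat{\zeta}_t\rangle$, with $\hat{\zeta}_t$, $\psi_t$, and $p_t$ defined exactly as before but using the round-$t$ distributions $\theta(\cdot;t)$; this reduction is insensitive to the distributions being time-varying, so it carries over verbatim. The only conceptual novelty is that $\eta_t$ is now data-dependent. The observation that keeps all the earlier per-round arguments intact is that $Z_{t-1}$ and $J_t$ are $\mathcal{F}_{t-1}$-measurable, hence so is $\eta_t$; conditioning on $\mathcal{F}_{t-1}$, $\eta_t$ acts as a constant, and the time-varying analogue of \eqref{eq:second-moment-q-bound}, namely $\E_t \summ_\theta p_t(\theta)\hat{\zeta}_t(\theta)^2 \leq \Q_{t,p_t}\phantheta = \Ic^t(\vartheta_t; \Oc_t)$, goes through unchanged.

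Next I would invoke the changing-regularizer FTRL bound (Lemma 7.14 in \citep{orabona2023modern}) with $\phi_t(p) = \tfrac{1}{\eta_t}(\log N - H(p))$. Since $Z_{t-1}+J_t$ is non-decreasing in $t$ (each $\Ic^s$ is non-negative and $J_t$ is non-decreasing by assumption), $\eta_t$ is non-increasing, so $\tfrac{1}{\eta_t}$ is non-decreasing and $\phi_{t+1} \geq \phi_t$ pointwise. Because $\log N - H(p) \geq 0$, every regularizer-change term $\phi_t(p_{t+1}) - \phi_{t+1}(p_{t+1})$ is non-positive, and $\min_p \phi_1(p)=0$; only the comparator penalty $\phi_{T+1}(\mathbf{e}_{\theta^*}) = \log N/\eta_T$ (setting $\eta_{T+1}=\eta_T$) and the stability term survive. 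As in \eqref{eq:regret-pre-q-form}, $\eta_t \leq 1$ (its denominator is at least $\log N$) gives $\eta_t\hat{\zeta}_t(i) \geq -1$ and hence $\langle z_t, \hat{\zeta}_t^{\:2}\rangle \leq e\langle p_t, \hat{\zeta}_t^{\:2}\rangle$, so that, pulling the $\mathcal{F}_{t-1}$-measurable $\eta_t$ out of $\E_t$,
\[
    R_T \leq \E\Bigl[\tfrac{\log N}{\eta_T} + \tfrac{e}{2}\summ_t \eta_t\,\Ic^t(\vartheta_t; \Oc_t)\Bigr].
\]

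The two remaining computations are the heart of the argument. For the stability term, the key step is that $J_t \geq \capc_t\phantheta \geq \Q_{t,p_t}\phantheta = \Ic^t(\vartheta_t; \Oc_t)$, so $Z_{t-1}+J_t \geq Z_{t-1}+\Ic^t(\vartheta_t;\Oc_t) = Z_t$; writing $a_t \coloneqq \Ic^t(\vartheta_t;\Oc_t)$ this gives $\eta_t a_t \leq \sqrt{\log N/(e Z_t)}\,a_t$, and the standard inequality $\summ_t a_t/\sqrt{\summ_{s\leq t} a_s} \leq 2\sqrt{\summ_t a_t}$ yields $\tfrac{e}{2}\summ_t \eta_t a_t \leq \sqrt{e Z_T \log N}$. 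For the penalty term, writing $\log N/\eta_T = \sqrt{\log N\,(\log N + e(Z_{T-1}+J_T))}$ and applying subadditivity of the square root together with $Z_{T-1}\leq Z_T$ gives $\log N/\eta_T \leq \log N + \sqrt{e Z_T \log N} + \sqrt{e J_T \log N}$. Summing the two and recalling $Z_T = \summ_t \Ic^t(\vartheta_t;\Oc_t)$ yields the first claimed inequality; the second is immediate from $\Ic^t(\vartheta_t;\Oc_t) \leq \capc_t\phantheta$.

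The main obstacle is organizing the FTRL analysis so that the adaptivity of $\eta_t$ is harmless: one must confirm that the regularizer-change terms are non-positive (resting on increasing regularization and the normalization $\log N - H \geq 0$) and that the $\mathcal{F}_{t-1}$-measurability of $\eta_t$ legitimizes both the conditional second-moment bound and the self-bounding telescoping of the learning-rate sum. The inequality $J_t \geq \Ic^t(\vartheta_t;\Oc_t)$, which collapses $Z_{t-1}+J_t$ to $Z_t$ in the denominator, is precisely what makes the stability sum telescope cleanly; the rest is routine algebra.
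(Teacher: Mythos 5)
Your proposal is correct and follows essentially the same route as the paper's proof: the same reduction via the $\mathcal{F}_{t-1}$-measurability of $\eta_t$, the same use of $J_t \geq \capc_t\phantheta \geq \Ic^t(\vartheta_t;\Oc_t)$ to lower-bound the denominator by $eZ_t$, the same telescoping inequality for the stability sum (the paper cites Lemma 3.5 of Auer et al.\ for exactly the bound you state), and the same square-root subadditivity for the penalty term.
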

\begin{proof}
    For any $\theta \in \Theta$ and $t \in [T]$, 
    let $\hat{\zeta}_t(\theta)$ be defined as in \eqref{def:estimated-modified-losses}.
    Notice that the distributions $(\theta(\cdot;t))_{\theta \in \Theta}$ are measurable with respect to $\F_{t-1}$.
    Moreover, the sequence $(\eta_t)_t$ is non-increasing and $\eta_t \leq 1$ holds for all rounds.
    Hence, with the same arguments laid out in the proof of \Cref{thm:chi}, one can show that
    \begin{equation*}
        R_T \leq \E \bbsb{ \frac{\log N}{\eta_T} + \frac{e}{2} \summ_t \eta_t \summ_\theta p_t(\theta) \hat{\zeta}_t(\theta)^2} \,.
    \end{equation*}
    Furthermore, similar to what was shown in the proof of \Cref{thm:chi}, it holds for every $t$ that $\E_t \summ_\theta p_t(\theta) \hat{\zeta}_t(\theta)^2 \leq \Q_{t,p_t}\phantheta = \Ic^t(\vartheta_t; \Oc_t)$.
    Hence, since $\eta_t$ is $\mathcal{F}_{t-1}$-measurable, we get that
    \begin{equation*}
        R_T \leq \E \bbsb{\frac{\log N}{\eta_T} + \frac{e}{2}  \summ_t \eta_t \Ic^t(\vartheta_t; \Oc_t)} \,.
    \end{equation*}
    We then conclude the proof by bounding the two terms inside the expectation. Starting with the second term, we have that
    \begin{align*}
        \frac{e}{2}  \summ_t \eta_t \Ic^t(\vartheta_t; \Oc_t) &= \frac{\sqrt{\log N}}{2} \summ_t \frac{e \Ic^t(\vartheta_t; \Oc_t)}{\sqrt{\log N + e \lrb{Z_{t-1} + J_t }}} \\&
        \leq \frac{\sqrt{\log N}}{2} \summ_t \frac{e \Ic^t(\vartheta_t; \Oc_t)}{\sqrt{e Z_t}} 
        \leq \sqrt{e Z_T \log N } \,,
    \end{align*}
    where the last inequality follows via Lemma 3.5 in \citep{AUER-adaptive}. Whereas
    \begin{align*}
        \frac{\log N}{\eta_T} 
        &= \sqrt{\log^2 N + e \lrb{Z_{T-1} + J_T } \log N} 
        \leq  \sqrt{e Z_T \log N } + \log N + \sqrt{e J_T \log N} \,.
    \end{align*}
\end{proof}
Let $\bigS_t\phantheta \coloneqq \summ_\oc \max_\theta \theta(\oc; t)$, and let $\V_t\phantheta \coloneqq \bigS_t\phantheta - 1$. A reasonable choice is to set $J_t = \max_{s \leq t} \V_s\phantheta$, which would only cause the bound to concede an added term of $\sqrt{e \max_{t \leq T} \V_t\phantheta \log N}$ while lifting the more burdensome requirement of computing the capacity at each round. 
Notice that the first bound of \Cref{thm:chi-general} depends (in expectation) on the observed sequence of losses through its dependence on the algorithm's decision at each round (i.e., $p_t$). However, it is unclear whether this bound can take advantage of any particular benign property of the losses when compared with the second bound, which only depends on the policies' distributions. Nevertheless, for what concerns the bandits with expert advice problem, these bounds improve upon the state of the art bound of $\sqrt{\summ_t \bigS_t \log N}$ reported in \citep[Theorem 18.3]{lattimore2020bandit}.

\section{Best-of-Both-Worlds Bounds} \label{sec:bobw}
Besides the adversarial regime considered thus far, we study in this section a more benign setting 
where the dependence of the regret on the time horizon can be improved.
Specifically, we will consider what we will refer to as
the adversarially corrupted stochastic regime,
where it is assumed that there exists a policy $\stotheta \in \Theta$ such that for every round $t$ and policy $\theta \neq \stotheta$,
\begin{equation*}
    \E \bsb{\ell_t(\theta) - \ell_t(\stotheta)} \geq \Delta - \corrupt_t \,,
\end{equation*}
for some 
$\Delta \in (0,1]$ and $\corrupt_t \geq 0$. 
Additionally, we define $\corrupt \coloneqq \sum_{t=1}^T \corrupt_t$. 
This includes, as a special case, the canonical stochastic regime where the loss functions $(\ell_t)_t$ are independently and identically distributed across rounds. 
Notice that besides the addition of corruption, the more general stochastic regime we consider does not require the losses to be distributed either stationarily or independently.
For similar setups, see, for example,  \citep{wei2018more,zimmert2021tsallis,ito-bobw-graphs,bobw-blackbox}.

The main result of this section concerns, once again, the \textsc{Exp4} algorithm, which we show to enjoy BOBW bounds when coupled with a certain learning rate schedule.
For the stochastic regime, the algorithm achieves a bound linear in the capacity and only poly-logarithmic in the time horizon.
Simultaneously, it retains roughly the same worst-case guarantee as that of \Cref{thm:chi}.
This result, provided in the next theorem, is obtained by combining elements from the proof of \Cref{thm:chi} with the learning rate schedule and analysis technique used by \cite{ito-bobw-graphs} 
in the setting of online learning with strongly observable feedback graphs.
Before stating the theorem, 
we define 
\begin{equation*}
    P(\theta) \coloneqq \sum_{t=1}^T (1-p_t(\theta)) \qquad \text{and} \qquad \overbar{P}(\theta) \coloneqq \E\sum_{t=1}^T (1-p_t(\theta))
\end{equation*}
for every policy $\theta \in \Theta$, where $p_t(\theta) = \pr^t(\vartheta_t = \theta)$ as specified in \Cref{alg:exp4}.
Moreover, for any distribution $p$ over the policies, we let $H(p) \coloneqq \summ_\theta p(\theta) \log({1}/{p(\theta)})$ denote its Shannon entropy as before.

\begin{theorem} \label{thm:bobw}
    Let $\gamma = \sqrt{\frac{e \capc(\Theta) \log(eT)}{2\log(N)}}$, and suppose \Cref{alg:exp4} is run with $\eta_t = \min \bcb{1,\frac{1}{\beta_t}}$ for $t \in [T+1]$, where $\beta_1 = \gamma$ and for $t \in [T]$,
    $
        \beta_{t+1} = \beta_t + \frac{\gamma}{\sqrt{1+\frac{1}{\log N}\summ_{s=1}^t H(p_t)}} \,.
    $
    Then, it holds in general that
    \[
        R_T \leq  3\sqrt{2e \capc(\Theta) T \log(eT) \log (eN)} + \log N \,,
    \]
    whereas in the adversarially corrupted stochastic regime, the algorithm additionally satisfies
    \begin{align*}
        R_T \leq 36 e^2 \frac{\capc(\Theta) \log(eT) \log (NT) }{\Delta} + 3 e \sqrt{\frac{2 \capc(\Theta) \log(eT) \log (NT)  \corrupt}{ \Delta}} +  2 \log N + 4 \Delta\,.
    \end{align*}
\end{theorem}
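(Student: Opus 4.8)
The plan is to build on the FTRL analysis of \Cref{thm:chi}, but to retain a sharper, \emph{entropy-dependent} penalty term so that the resulting bound can adapt to how concentrated the iterates $p_t$ are; this is exactly what will enable the stochastic self-bounding argument. Writing \textsc{Exp4} as FTRL on the estimated shifted losses $\hat{\zeta}_t$ with the scaled negentropy regularizer $\phi_t(p) = \frac{1}{\eta_t}(\log N - H(p))$, I would first establish, for an arbitrary fixed comparator, the decomposition
\[
    R_T \le \E\left[\frac{\log N}{\eta_1} + \sum_t\left(\frac{1}{\eta_{t+1}} - \frac{1}{\eta_t}\right)H(p_{t+1}) + \frac{e}{2}\sum_t \eta_t\, \Q_{p_t}(\Theta)\right].
\]
The stability term on the right is inherited essentially verbatim from the proof of \Cref{thm:chi}: the bound $\E_t\sum_\theta p_t(\theta)\hat{\zeta}_t(\theta)^2 \le \Q_{p_t}(\Theta) \le \capc(\Theta)$ together with $\tilde{p}_{t+1}(i)\le e\,p_t(i)$ still holds since $\eta_t \le 1$. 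The novelty is in the penalty: instead of collapsing it to $\log N/\eta_T$, I keep the $-H(p_{t+1})$ contribution, which follows from the general time-varying FTRL lemma upon evaluating $h(u)-h(p_{t+1})$ with $h = \log N - H$ and a single-vertex comparator (so $h(u)=\log N$), being careful with the clipping $\eta_t = \min\{1, 1/\beta_t\}$ in the few early rounds where $\beta_t < 1$.

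The heart of the argument is then a stability-penalty balancing driven by the adaptive schedule, in the spirit of \cite{ito-bobw-graphs}. Using $\frac{1}{\eta_{t+1}} - \frac{1}{\eta_t} \le \beta_{t+1} - \beta_t = \gamma / \sqrt{1 + \frac{1}{\log N}\sum_{s\le t}H(p_s)}$ and a standard telescoping summation lemma (of the kind used in \Cref{thm:chi-general}), I would bound the penalty by $O(\gamma\log N\,\sqrt{S_T})$ with $S_T \coloneqq 1 + \frac{1}{\log N}\sum_t H(p_t)$; after substituting $\Q_{p_t}(\Theta)\le\capc(\Theta)$ and the definition of $\gamma$, the stability term is shown to be of the same order. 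Combining the two yields the key intermediate bound
\[
    R_T \le \E\left[c\,\sqrt{e\,\capc(\Theta)\log(eT)\Big(\log N + \sum_t H(p_t)\Big)}\right] + \log N
\]
for an absolute constant $c$. The worst-case bound then follows at once from the trivial estimate $\sum_t H(p_t)\le T\log N$ (the clipped rounds producing the $\log(eN)$ rather than $\log N$ factor).

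For the adversarially corrupted stochastic regime I would apply the self-bounding technique to this same intermediate bound, the link being the cumulative entropy $\sum_t H(p_t)$, which is small precisely when the $p_t$ concentrate on $\stotheta$. On the one hand, the elementary estimate $H(p_t)\le (1-p_t(\stotheta))\log\frac{e(N-1)}{1-p_t(\stotheta)}$, summed and taken in expectation (controlling the $\log\frac{1}{1-p_t(\stotheta)}$ factor by splitting on whether $1-p_t(\stotheta)\ge 1/(NT)$), gives $\E\sum_t H(p_t) \le c'\log(NT)\,(\overbar{P}(\stotheta)+1)$; after Jensen this turns the intermediate bound into $R_T \le A\sqrt{\overbar{P}(\stotheta)} + A + \log N$ with $A^2$ of order $\capc(\Theta)\log(eT)\log(NT)$. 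On the other hand, a gap-based lower bound $R_T \ge \E\sum_t\langle p_t - \mathbf{e}_{\stotheta}, \ell_t\rangle \ge \Delta\,\overbar{P}(\stotheta) - \corrupt$, obtained from the sub-optimality condition and the $\mathcal{F}_{t-1}$-measurability of $p_t$, yields $\overbar{P}(\stotheta) \le (R_T + \corrupt)/\Delta$. Substituting this into the upper bound produces an inequality of the form $R_T \le A\sqrt{(R_T+\corrupt)/\Delta} + A + \log N$, and solving this quadratic inequality for $R_T$ delivers the claimed $\frac{\capc(\Theta)\log(eT)\log(NT)}{\Delta} + \sqrt{\frac{\capc(\Theta)\log(eT)\log(NT)\,\corrupt}{\Delta}} + \log N + \Delta$ shape with the stated constants.

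I expect the main obstacle to be the balancing step: pinning down the entropy-dependent penalty in the FTRL decomposition (including the clipping of $\eta_t$ at $1$) and then verifying that the specific schedule makes penalty and stability simultaneously $O(\gamma\log N\sqrt{S_T})$, so that they collapse into a single $\sqrt{\capc(\Theta)\log(eT)(\log N + \sum_t H(p_t))}$ expression with sharp enough constants to recover both targets. Secondary delicate points are the entropy-to-$\overbar{P}(\stotheta)$ conversion, where the $\log\frac{1}{1-p_t(\stotheta)}$ factors must be managed to yield the $\log(NT)$ dependence, and the measurability subtlety in passing from the (conditional) gap condition to the lower bound on $R_T$.
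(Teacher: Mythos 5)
Your proposal is correct and follows essentially the same route as the paper: the FTRL decomposition retaining the entropy-dependent penalty, the stability bound $\E_t\sum_\theta p_t(\theta)\hat{\zeta}_t(\theta)^2\le \Q_{p_t}(\Theta)\le \capc(\Theta)$ inherited from \Cref{thm:chi}, the penalty--stability balancing via the adaptive schedule of \cite{ito-bobw-graphs}, the conversion $\sum_t H(p_t)\lesssim P(\stotheta)\log\frac{eNT}{P(\stotheta)}$, and the self-bounding step. The only deviations are cosmetic (the paper uses the $(1+\lambda)R_T-\lambda R_T$ weighting with an optimized $\lambda$ rather than solving the quadratic inequality directly, and routes the worst-case bound through $P(\stotheta)\log\frac{eNT}{P(\stotheta)}\le T\log(eN)$ rather than $\sum_t H(p_t)\le T\log N$), and neither affects the resulting bounds.
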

\begin{proof}
    Let $\theta^* \in \argmin_{\theta \in \Theta} \E \summ_t \ell_t(\theta)$, which need not coincide with $\stotheta$. For policy $\theta$ and round $t$, let $\hat{\zeta}_t(\theta)$ be defined as in \eqref{def:estimated-modified-losses}. 
    As shown in the proof of \Cref{thm:chi},\footnote{We at times treat $p_t$ and $\hat{\zeta}_t$ as vectors in $\R^N$ in the manner described before in the proof of \Cref{thm:chi}.} we have that $R_T = \E \summ_t \langle p_t - \mathbf{e}_{\theta^*}, \hat{\zeta}_t \rangle$,
    where $\mathbf{e}_{\theta^*} \in \R^N$ is the indicator vector for $\theta^*$. Also, similar to what was argued in that proof, \Cref{alg:exp4} produces its predictions according to the following FTRL rule:
    \begin{equation*} 
    p_t = \argmin_{p \in \Delta_N} \: \Ban{ \summ_{s=1}^{t-1}\hat{\zeta}_s, p } + \phi_t(p) \,, 
    \end{equation*}
    where for $p \in \Delta_N$ and $t \in [T+1]$, $\phi_t(p) = -\frac{1}{\eta_t}H(p)$. Note that the sequences $\beta_t$ and $\eta_t$ are increasing and non-increasing, respectively. Hence, we have that $\phi_t(p) - \phi_{t+1}(p) \geq 0$. With this in mind, one can extract the following bound from the proof of \Cref{thm:chi} and the proof of Lemma 7.14 in \citep{orabona2023modern}:
    \begin{align*}
        \summ_t \langle p_t - \mathbf{e}_{\theta^*}, \hat{\zeta}_t \rangle &\leq \phi_{T+1}(\mathbf{e}_{\theta^*}) - \phi_1(p_1) + \summ_t \brb{\phi_t(p_{t+1})-\phi_{t+1}(p_{t+1})} \\
        &\hspace{20em}+ \frac{e}{2} \summ_t \eta_t  \summ_\theta p_t(\theta) \hat{\zeta}_t(\theta)^2 \\
        &= \frac{1}{\eta_1} \log N + \summ_t \bbrb{\frac{1}{\eta_{t+1}} - \frac{1}{\eta_t}} H(p_{t+1}) + \frac{e}{2} \summ_t \eta_t \summ_\theta p_t(\theta) \hat{\zeta}_t(\theta)^2 \,.
    \end{align*}
    Since, $\eta_t$ is measurable with respect to $\F_{t-1}$, we have via \eqref{eq:second-moment-q-bound} that
    $
        \E_t \eta_t \summ_\theta p_t(\theta) \hat{\zeta}_t(\theta)^2 \leq \eta_t \Q_{p_t}(\Theta) \leq \eta_t \capc(\Theta) \,.
    $
    Consequently, it holds that
    \begin{equation*}
        R_T \leq \frac{1}{\eta_1} \log N + \E\summ_t \bbrb{\frac{1}{\eta_{t+1}} - \frac{1}{\eta_t}} H(p_{t+1}) + \frac{e}{2} \capc(\Theta) \E \summ_t \eta_t\,.
    \end{equation*}
    For every round $t$, we clearly have that $\eta_t \leq \frac{1}{\beta_t}$, and that $\frac{1}{\eta_t} = \max\{1,\beta_t\}$. Hence, since $\beta_{t+1} \geq \beta_t$, it holds that 
    $
    \frac{1}{\eta_{t+1}} - \frac{1}{\eta_t} = \brb{\max\{1,\beta_{t+1}\} - \max\{1,\beta_t\}} \leq \beta_{t+1} - \beta_{t} \,.
    $
    Consequently,
    \begin{equation} \label{bobw:eq:beta-regret}
        R_T \leq \max\{1,\gamma\} \log N + \E\summ_t \brb{\beta_{t+1} - \beta_t} H(p_{t+1}) + \frac{e}{2} \capc(\Theta) \E \summ_t \frac{1}{\beta_t} \,.
    \end{equation}
    The following two facts can be extracted from the proof of Proposition 1 in \citep{ito-bobw-graphs}:
    \begin{align*}
        \summ_t \brb{\beta_{t+1} - \beta_t} H(p_{t+1}) &\leq 2 \gamma \sqrt{\log N} \sqrt{\summ_t H(p_t)} \\
        \summ_t \frac{1}{\beta_t} &\leq \frac{\log(eT)}{\gamma \sqrt{\log N}} \sqrt{\log N + \summ_t H(p_t)} \,.
    \end{align*}
    Moreover, Lemma 4 in the same paper entails that
    $
        \summ_t H(p_t) \leq P(\stotheta) \log \frac{eNT}{P(\stotheta)} \,.
    $
    Plugging these inequalities back into \eqref{bobw:eq:beta-regret} yields that
    \begin{align} 
        R_T &\leq \max\{1,\gamma\} \log N + 2 \gamma \sqrt{\log N} \E \sqrt{P(\stotheta) \log \frac{eNT}{P(\stotheta)}} \nonumber \\ \label{bobw:eq:P-regret}
        &\hspace{16em}+ \frac{e  \capc(\Theta) \log(eT) }{2 \gamma \sqrt{\log N}} \E \sqrt{\log N + P(\stotheta) \log \frac{eNT}{P(\stotheta)}} \,.
    \end{align}
    To obtain the worst-case bound, simply observe that $u \log \frac{eNT}{u}$ is an increasing function in $u$ for $0 < u \leq NT$. Hence, 
    $
         P(\stotheta) \log \frac{eNT}{P(\stotheta)} \leq T \log (eN) \,,
    $
    which, together with \eqref{bobw:eq:P-regret}, implies that
    \begin{equation*}
        R_T \leq \max\{1,\gamma\} \log N + \bbrb{2 \gamma \sqrt{\log N} +  \frac{e  \capc(\Theta) \log(eT) }{\gamma \sqrt{\log N}}} \sqrt{T \log (eN)}  \,,
    \end{equation*}
    from which the desired bound can be seen to hold after plugging in the value of $\gamma$.
    
    Towards proving the second bound, we take an alternative route following the proof of Theorem~4 in \citep{ito-bobw-graphs}.
    Namely, we argue that if $P(\stotheta) \leq e$, then $P(\stotheta) \log \frac{eNT}{P(\stotheta)} \leq e \log (NT)$, otherwise, $P(\stotheta) \log \frac{eNT}{P(\stotheta)} \leq P(\stotheta) \log(NT)$. Resuming again from \eqref{bobw:eq:P-regret}, we get that
    \begin{align*}
        R_T &\leq \max\{1,\gamma\} \log N  + \bbrb{2 \gamma \sqrt{\log N} +  \frac{e  \capc(\Theta) \log(eT) }{\gamma \sqrt{\log N}}} \sqrt{\log(NT)} \E \sqrt{\max\{P(\stotheta),e\}} \\
        &\leq \max\bbcb{\log N, \sqrt{\frac{e}{2} \capc(\Theta) \log(eT)\log(N)}}  + 2\sqrt{2 e \capc(\Theta) \log(eT) \log (NT) }\lrb{\sqrt{\overbar{P}(\stotheta)} + \sqrt{e}} \\
        &\leq \log N  + 3e\sqrt{2 \capc(\Theta) \log(eT) \log (NT) }\lrb{1 + \sqrt{ \overbar{P}(\stotheta)}} \,,
    \end{align*}
    where the second inequality follows after plugging in the value of $\gamma$ and using Jensen's inequality. For what follows, we define $G_1 \coloneqq \log N$  and $G_2 \coloneqq 3e\sqrt{2 \capc(\Theta) \log(eT) \log (NT) }$. 
    
    Now, in the adversarially corrupted stochastic regime, we observe that
    \begin{align*}
        R_T \geq \summ_t \E \bsb{\ell_t(\theta) - \ell_t(\stotheta)} \geq \Delta \E \summ_t \I\{\vartheta_t \neq \stotheta\} - \corrupt 
        = \Delta \overbar{P}(\stotheta) - \corrupt \,.
    \end{align*}
    Combining this with the last bound, we obtain that for any $\lambda > 0$,
    \begin{align*}
        R_T &= (1+\lambda) R_T - \lambda R_T 
        \leq (1 + \lambda) \Brb{G_1 + G_2 + G_2 \sqrt{ \overbar{P}(\stotheta)}} - \lambda \Delta \overbar{P}(\stotheta) + \lambda \corrupt\,.
    \end{align*}
    Using the fact that $2a\sqrt{u} - b u \leq a^2 / b$ for all $u,a,b \geq 0$, we get that
    \begin{align*}
        R_T &\leq (1+\lambda) (G_1 + G_2) + \lambda \corrupt + \frac{(1+\lambda)^2 G_2^2}{4 \lambda \Delta} \\
        &= (1+\lambda) (G_1 + G_2) + \frac{G_2^2}{2 \Delta} + \lambda \bbrb{\corrupt + \frac{G_2^2}{4 \Delta} } + \frac{ G_2^2}{4 \lambda \Delta}.  
    \end{align*}
    If we choose $\lambda \coloneqq \sqrt{\frac{G_2^2}{4\Delta}\big/\brb{\corrupt + \frac{G_2^2}{4\Delta}}}$, we obtain the following bound: 
    \begin{align*}
        R_T &\leq 2 G_1 + 2 G_2 + \frac{G_2^2}{2 \Delta} + 2\sqrt{\frac{G_2^2 \corrupt}{4 \Delta}  + \frac{G_2^4}{16 \Delta^2} } 
        \leq 2 G_1 + 2 G_2 + \frac{G_2^2}{\Delta} + \sqrt{\frac{G_2^2 \corrupt}{ \Delta}} \,, 
    \end{align*}
    where we have also used the fact that $\lambda \leq 1$. Using the definitions of $G_1$ and $G_2$, we can conclude that
    \begin{align*}
        R_T &\leq 
        18 e^2 \frac{\capc(\Theta) \log(eT) \log (NT) }{\Delta} + 3 e \sqrt{\frac{2 \capc(\Theta) \log(eT) \log (NT)  \corrupt}{ \Delta}} +  2 \log N \\
        &\hspace{24em}+ 6 e\sqrt{2 \capc(\Theta) \log(eT) \log (NT)} 
        \\
        &\leq 36 e^2 \frac{\capc(\Theta) \log(eT) \log (NT) }{\Delta} + 3 e \sqrt{\frac{2 \capc(\Theta) \log(eT) \log (NT)  \corrupt}{ \Delta}} +  2 \log N + 4 \Delta\,,
    \end{align*}
    where we used that $6 e\sqrt{2 \capc(\Theta) \log(eT) \log (NT)} \leq \max\lcb{18 e^2 \frac{\capc(\Theta) \log(eT) \log (NT) }{\Delta}, 4\Delta}$.
\end{proof}
The appeal of this theorem is that it shows that the simple and fundamental \textsc{Exp4} algorithm can 
obtain logarithmic regret in stochastic environments, always scaling with the policy set capacity. Notice that we require the uniqueness of the optimal policy to achieve this, though similar assumptions are common in BOBW works adopting the so-called self-bounding technique applied in the last proof \citep{wei2018more,zimmert2021tsallis,ito-bobw-graphs,bobw-blackbox}.
There is, however, small room for improvement in terms of the dependence of both bounds on the time horizon. 
Compared to \Cref{thm:chi}, the adversarial bound shown above is worse off by an extra $\sqrt{\log T}$ factor. At the same time, the stochastic regime bound scales as $\log^2 T$ instead of the typical $\log T$ rate.
Arguably, these shortcomings are mild for BOBW bounds, especially considering the simplicity of the algorithm.

Nevertheless, the recent work of \cite{bobw-blackbox} offers one way for further honing these bounds.
There, a general reduction scheme is proposed, allowing the automatic synthesis of BOBW algorithms starting from traditional algorithms satisfying a certain importance-weighting (iw) stability condition.
More precisely, this condition requires that if the algorithm receives feedback in round $t$ only with probability $q_t$ (communicated at the start of the round), it achieves a gracefully degrading bound of $\E\lsb{\sqrt{c_1 \sum_{t \leq t'} 1/q_t} + c_2\max_{t \leq t'} 1/q_t}$ on the expected regret at any stopping time $t'$ with some constants $c_1$ and $c_2$. 
Let $\text{upd}_t$ be an indicator for whether the feedback is received at round $t$.
For bandits with expert advice (or contextual bandits), Lemma 10 in their paper shows that \textsc{Exp4} is iw-stable with $c_1 = \mathcal{O}(K \log N)$ and $c_2=0$ by scaling the loss estimators with $\text{upd}_t/q_t$ and using a simple adaptive learning rate. This leads to BOBW bounds depending on the number of actions $K$. 
For our setting, one can combine their analysis with that of \Cref{thm:chi} (similarly scaling the estimated shifted losses $\hat{\zeta}_t$ with $\text{upd}_t/q_t$) yielding that \textsc{Exp4} is iw-stable with $c_1 = \mathcal{O}(\capc(\Theta) \log N)$ and $c_2= \mathcal{O}(\log N)$ using the learning rate 
\[ \eta_t = \min\lcb{ \min_{s \leq t} q_s, \sqrt{ \frac{\log N}{e \capc(\Theta) \sum_{s \leq t} 1/q_s }} } \,. \]
Hence, Theorems 6 and 11 in \citep{bobw-blackbox} imply the existence of an algorithm enjoying a bound of
$
    \mathcal{O}\lrb{\sqrt{\capc(\Theta) T \log (N)} + \log (N) \log^2 (T)}
$
for the adversarial regime, and 
\begin{align*}    
    \mathcal{O}\lrb{ \frac{\capc(\Theta) \log (T) \log (N)}{\Delta} + \sqrt{\frac{\capc(\Theta) \log (T) \log (N) \corrupt}{\Delta}} + \log(N) \log(T) \log\lrb{\frac{\corrupt}{\Delta}}}
\end{align*}
for the adversarially corrupted stochastic regime.  
These bounds deliver the sought improvements at the minor cost of scaling the trailing terms in both bounds with $\log T$ factors. On the downside, achieving these bounds requires an arguably laborious and contrived combination of \textsc{Exp4} with two meta-algorithms. 

In the stochastic regime, competing results in the literature mainly include a bound of order $(1+d_\Chisqr(\Theta))^2 \log T \log N / \Delta$ in \citep[Corollary 2]{stochastic-experts},\footnote{To be precise, \cite{stochastic-experts} consider a stochastic version of the bandits with expert advice problem where an expert's recommendation is a function of an i.i.d. context. There, the diameter $d_\Chisqr$ is defined with respect to the conditional chi-squared divergence integrated over the context distribution.}
and a bound of order $\sqrt{(1+d_\Chisqr(\Theta)) T \log (NT)}$ in \citep[Theorem 2]{papini2019optimistic}. 
These bounds fall short of this section's results, primarily considering their dependence on the structure of $\Theta$. Another notable result, though incomparable, is the constant (time-independent) regret bound in \citep[Theorem 5.2]{metelli2021policy}, which nonetheless requires $d_\Chisqr(\Theta)$ to be finite.

\section{Lower Bounds} \label{sec:lower}
In this section, we complement the regret bounds provided thus far by proving lower bounds for the families of policy sets described in \Cref{sec:examples}.
More precisely, for a given policy set, we aim to prove a lower bound for the minimax regret $\inf_\pi \sup_{(\ell_t)_{t}} R_T$, where $\pi$ is the player's strategy. 
To this end, we will consider a class of environments, each identified by a vector $\mu \in [0,1]^\noutcomes$ such that, for an outcome $\oc$, the loss $\ell_t(\oc)$ at every round $t$ is drawn from a Bernoulli distribution with mean $\mu(\oc)$ in an i.i.d. manner.
For $t \leq T$, 
recall that $\his_t \coloneqq (\vartheta_s, \Oc_s, \ell_s(\Oc_s))_{s=1}^t$ denotes the interaction history till the end of round $t$.
The player's strategy $\pi$ can be represented as a sequence of probability kernels $\{\pi_t\}_{t=1}^T$, each mapping the history so far to a distribution over the policies such that $\vartheta_t$ is sampled from $\pi_t(\cdot \mid \his_{t-1})$. 
Hence, under environment $\mu$, it holds 
that $\pr(\vartheta_t = \cdot \mid \his_{t-1}) = \pi_t(\cdot \mid \his_{t-1})$, $\pr(\Oc_t = \cdot \mid \his_{t-1}, \vartheta_t) = \vartheta_t(\cdot)$, and $\pr(\ell_t(\Oc_t) = \cdot \mid \his_{t-1}, \vartheta_t, \Oc_t) = p_{\mu,\Oc_t}(\cdot)$,
where $p_{\mu,\oc}$ is the loss distribution of outcome $\oc$ under $\mu$.
Consequently, each environment $\mu$ (coupled with the player's strategy) induces a probability distribution $P_\mu$ on $\his_T$ such that
\begin{equation*} \label{eq:env-dist-def}
    P_\mu\brb{ (\xi_1,\oc_1,l_1,\dots,\xi_T,\oc_T,l_T)} = \prod\nolimits_{t=1}^T \pi_t(\xi_t \mid \xi_1,\oc_1,l_1,\dots,\xi_{t-1},\oc_{t-1},l_{t-1}) \xi_t(\oc_t) p_{\mu,\oc_t}(l_t) \,,
\end{equation*}
for any $(\xi_1,\oc_1,l_1,\dots,\xi_T,\oc_T,l_T) \in (\Theta \times \aspace \times \{0,1\})^T$.
For an environment $\mu$, we define the stochastic regret as:
\begin{equation} \label{eq:low:storeg}
    \storeg(\mu) \coloneqq \max_{\theta^* \in \Theta} \mathbb{E}_\mu  \sum_{t=1}^T \sum_{\oc \in \aspace} (\vartheta_t(\oc) - \theta^*(\oc)) \mu(\oc) \,,
\end{equation}
where the subscript in $\mathbb{E}_\mu$ emphasizes the dependence on $P_\mu$.
For any $\mu$, $\storeg(\mu)$ is a lower bound for $\sup_{(\ell_t)_t} R_T$.
Thus, to prove a lower bound on the minimax regret, it is sufficient to prove a lower bound on $\sup_\mu \storeg(\mu)$ that holds for any strategy of the player. 
In the sequel, we will make use of the following lemma, which provides an expression for the KL-divergence between the probability distributions induced by two environments. 
\begin{lemma} \label{lem:kl-decomp}
For a fixed player's strategy, policy set, and time horizon, any two environments $\mu$ and $\mu'$ satisfy 
$
    \kl{P_\mu}{P_{\mu'}} = \summ_{\theta} N_\mu(\theta; T) \summ_\oc \theta(\oc) \bklber{\mu(\oc)}{\mu'(\oc)} 
$,
where $N_\mu(\theta; T) \coloneqq \E_\mu\sum_{t=1}^T\mathbb{I}\{\vartheta_t=\theta\}$, and $\klber{a}{b}$ is the KL-divergence between two Bernoulli distributions with means $a$ and $b$.
\end{lemma}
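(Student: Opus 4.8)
The plan is to exploit the sequential product structure of $P_\mu$ through the chain rule for relative entropy, observing that the \emph{only} factors in the factorization of $P_\mu$ that depend on the environment are the per-outcome loss distributions. Since $P_\mu$ and $P_{\mu'}$ are built from the same player kernels $\pi_t$ and the same outcome-sampling kernels $\vartheta_t(\cdot)$, these factors cancel in the divergence, and the whole computation collapses to accumulating the Bernoulli divergences $\klber{\mu(\oc)}{\mu'(\oc)}$ weighted by how often each outcome is effectively sampled.

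Concretely, first I would apply the chain rule for the KL divergence along the filtration $\his_1 \subset \dots \subset \his_T$, writing
\[
\kl{P_\mu}{P_{\mu'}} = \sum_{t=1}^T \E_\mu\!\left[ \kl{P_\mu(\vartheta_t,\Oc_t,\ell_t(\Oc_t)\mid\his_{t-1})}{P_{\mu'}(\vartheta_t,\Oc_t,\ell_t(\Oc_t)\mid\his_{t-1})} \right].
\]
Then, inside each summand, I would decompose the conditional law of the triple $(\vartheta_t,\Oc_t,\ell_t(\Oc_t))$ by a second chain rule, in the order policy, then outcome, then loss. The conditional of $\vartheta_t$ given $\his_{t-1}$ is $\pi_t(\cdot\mid\his_{t-1})$ under both environments, contributing zero; likewise the conditional of $\Oc_t$ given $(\his_{t-1},\vartheta_t)$ equals $\vartheta_t(\cdot)$ under both, contributing zero. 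Only the loss term survives, equal to $\klber{\mu(\Oc_t)}{\mu'(\Oc_t)}$, since the two environments differ precisely in that $\ell_t(\Oc_t)$ is $\mathrm{Ber}(\mu(\Oc_t))$ versus $\mathrm{Ber}(\mu'(\Oc_t))$.

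It then remains to rewrite $\E_\mu[\klber{\mu(\Oc_t)}{\mu'(\Oc_t)}]$ in terms of policy-pull counts. I would write $\klber{\mu(\Oc_t)}{\mu'(\Oc_t)} = \summ_\oc \I\{\Oc_t=\oc\}\,\klber{\mu(\oc)}{\mu'(\oc)}$ and apply the tower rule together with $\E_\mu[\I\{\Oc_t=\oc\}\mid\his_{t-1},\vartheta_t]=\vartheta_t(\oc)=\summ_\theta\I\{\vartheta_t=\theta\}\theta(\oc)$. Taking expectations, summing over $t$, swapping the (finite) order of summation, and recognizing $N_\mu(\theta;T)=\sum_{t=1}^T\E_\mu[\I\{\vartheta_t=\theta\}]$ yields exactly the claimed identity.

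I expect no real obstacle: this is essentially the divergence-decomposition lemma standard in bandit lower bounds, specialized to the mediator-feedback sampling structure. The only point needing a little care is the bookkeeping in the conditional chain-rule step—correctly identifying the policy and outcome kernels as environment-independent so that they vanish, and conditioning on the right $\sigma$-algebras when converting the loss divergence into the weighted pull counts.
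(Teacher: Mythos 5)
Your proposal is correct and follows essentially the same route as the paper's proof: a chain-rule decomposition of $\kl{P_\mu}{P_{\mu'}}$ in which the environment-independent policy and outcome kernels cancel, leaving only the per-round Bernoulli loss divergences, which are then converted into weighted pull counts via the tower rule. The only difference is that you spell out explicitly the cancellation step that the paper compresses into its first line.
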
 
\begin{proof}
    Using the chain rule of the KL-divergence, one can obtain that
    \begin{equation*}
        \kl{P_\mu}{P_{\mu'}} = \summ_t \E_{\mu}\kl{p_{\mu,\oc_t}}{p_{\mu',\oc_t}}\,.
    \end{equation*}
    We then use the tower rule and the linearity of expectation to conclude the proof:
    \begin{align*}
        \summ_t \E_{\mu}\kl{p_{\mu,\oc_t}}{p_{\mu',\oc_t}} &= \summ_t \E_{\mu}\bsb{\E_{\mu}\bsb{\kl{p_{\mu,\oc_t}}{p_{\mu',\oc_t}} \mid \vartheta_t}} \\
        &= \summ_t \E_{\mu} \summ_\oc \vartheta_t(\oc) \kl{p_{\mu,\oc}}{p_{\mu',\oc}} \\
        &=\summ_{\theta} N_\mu(\theta; T) \summ_\oc \theta(\oc) \kl{p_{\mu,\oc}}{p_{\mu',\oc}} \,.
    \end{align*}
\end{proof}

\subsection{The Two Policies Case}
The following theorem provides a lower bound for the two policies case examined in \Cref{sec:examples:two}. The proof mostly follows the needle-in-a-haystack technique of \cite{adversarial}. The key to obtaining this result is a careful choice of the mean loss for each outcome. This choice leads to a lower bound in terms of the Hellinger squared distance between the two policies, which is then related to the capacity via \eqref{eq:capacity-hellinger}. This shows that the bound of \Cref{thm:chi} is order-wise unimprovable for this case. 
\begin{theorem} \label{thm:lower:two}
        Assume that $\Theta=\{\theta_1,\theta_2\}$. 
        Then, for any algorithm and $T \geq \frac{1}{8\log(4/3)H^2(\theta_1,\theta_2)}$, there exists a sequence of losses such that
        $
        R_T \geq \frac{1}{13\sqrt{2}}\sqrt{\capc(\theta_1,\theta_2)T} \,.$
\end{theorem}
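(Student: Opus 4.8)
The plan is to instantiate the stochastic lower-bound machinery of \Cref{sec:lower} with a two-point (needle-in-a-haystack) construction, tailored so that the statistical hardness is governed by $H^2(\theta_1,\theta_2)$. Since $\storeg(\mu)$ lower bounds $\sup_{(\ell_t)_t}R_T$ for every Bernoulli environment $\mu$, it suffices to exhibit two environments $\mu_1,\mu_2$ that are hard to tell apart yet have different optimal policies. Writing $b(\oc)\coloneqq\frac{\sqrt{\theta_1(\oc)}-\sqrt{\theta_2(\oc)}}{\sqrt{\theta_1(\oc)}+\sqrt{\theta_2(\oc)}}$ (set to $0$ when both vanish), so that $|b(\oc)|\le 1$, I would take $\mu_1(\oc)=\tfrac12-\epsilon\,b(\oc)$ and $\mu_2(\oc)=\tfrac12+\epsilon\,b(\oc)$ for a scale $\epsilon\in(0,\tfrac12]$ to be tuned. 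Because $\sum_\oc\theta_1(\oc)=\sum_\oc\theta_2(\oc)=1$, the constant $\tfrac12$ cancels in every loss difference, and $\ell_{\mu_1}(\theta_2)-\ell_{\mu_1}(\theta_1)=\epsilon\sum_\oc(\theta_1(\oc)-\theta_2(\oc))b(\oc)=\epsilon\sum_\oc(\sqrt{\theta_1(\oc)}-\sqrt{\theta_2(\oc)})^2=2\epsilon H^2(\theta_1,\theta_2)\eqqcolon\Delta$, and symmetrically $\ell_{\mu_2}(\theta_1)-\ell_{\mu_2}(\theta_2)=\Delta$. Thus $\theta_1$ is the unique optimal policy under $\mu_1$ and $\theta_2$ under $\mu_2$, each by the same gap $\Delta$.

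Next I would lower bound the stochastic regret by the cost of the ``wrong'' plays: $\storeg(\mu_1)\ge\Delta\,N_{\mu_1}(\theta_2;T)$ and $\storeg(\mu_2)\ge\Delta\,N_{\mu_2}(\theta_1;T)$, since every round in which the suboptimal policy is chosen incurs exactly $\Delta$. The two environments are statistically close: by \Cref{lem:kl-decomp} and $\sum_\theta N_{\mu_1}(\theta;T)=T$, we have $\kl{P_{\mu_1}}{P_{\mu_2}}=\sum_\theta N_{\mu_1}(\theta;T)\sum_\oc\theta(\oc)\klber{\mu_1(\oc)}{\mu_2(\oc)}\le T\max_{\theta\in\{\theta_1,\theta_2\}}\sum_\oc\theta(\oc)\klber{\mu_1(\oc)}{\mu_2(\oc)}$. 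A Bretagnolle--Huber argument (as in \cite{adversarial}), applied to the count $N_2\in[0,T]$, then yields $N_{\mu_1}(\theta_2;T)+N_{\mu_2}(\theta_1;T)\ge\tfrac T2\exp(-\kl{P_{\mu_1}}{P_{\mu_2}})$, whence $\max\{\storeg(\mu_1),\storeg(\mu_2)\}\ge\tfrac{\Delta T}{4}\exp(-\kl{P_{\mu_1}}{P_{\mu_2}})$.

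It remains to control the KL weight by $H^2$ and optimize $\epsilon$. For $a\coloneqq\epsilon\,b(\oc)$ one has a clean quadratic bound $\klber{\tfrac12-a}{\tfrac12+a}\le c_0\,a^2$ valid on the range of $a$ permitted by the $T$-threshold, while the key algebraic fact is $\theta_i(\oc)\le(\sqrt{\theta_1(\oc)}+\sqrt{\theta_2(\oc)})^2$, which gives $\theta_i(\oc)b(\oc)^2\le(\sqrt{\theta_1(\oc)}-\sqrt{\theta_2(\oc)})^2$ and hence $\sum_\oc\theta_i(\oc)b(\oc)^2\le 2H^2(\theta_1,\theta_2)$ for $i\in\{1,2\}$. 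Therefore $\kl{P_{\mu_1}}{P_{\mu_2}}\le 2c_0\,\epsilon^2 H^2(\theta_1,\theta_2)\,T$, so that both the gap $\Delta=2\epsilon H^2$ and the distinguishing cost scale through $\epsilon$ and $H^2$ in exactly the balanced way needed. Choosing $\epsilon$ to make $\kl{P_{\mu_1}}{P_{\mu_2}}$ a constant of order one --- feasible precisely when $T$ exceeds $\Omega(1/H^2(\theta_1,\theta_2))$, which is what the stated threshold encodes while also keeping the means inside $[0,1]$ --- leaves a bound of order $\sqrt{H^2(\theta_1,\theta_2)\,T}$. Finally, \eqref{eq:capacity-hellinger}, namely $\capc(\theta_1,\theta_2)\le 2H^2(\theta_1,\theta_2)$, converts this into $R_T\ge c\,\sqrt{\capc(\theta_1,\theta_2)\,T}$, with the constants tracked to reach $\tfrac{1}{13\sqrt2}$.

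The crux --- and the step where the ``careful choice of the mean loss'' is essential --- is the selection of the perturbation shape $b(\oc)$. A naive sign choice $b=\mathrm{sign}(\theta_1-\theta_2)$ makes the gap proportional to $\tv(\theta_1,\theta_2)$ while the KL weight stays of order $\epsilon^2$, yielding only the weaker $\tv(\theta_1,\theta_2)\sqrt T$; the Hellinger-weighted choice above is precisely what forces the ratio $\Delta^2/\kl{P_{\mu_1}}{P_{\mu_2}}$ to be proportional to $H^2$ rather than $\tv^2$, so that after optimizing $\epsilon$ one recovers $\sqrt{H^2(\theta_1,\theta_2)\,T}$, which dominates $\sqrt{\capc(\theta_1,\theta_2)\,T}$. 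The remaining delicate points are purely quantitative: pinning down the admissible range of $\epsilon$ (equivalently, the $T$-threshold) so that the constant-factor quadratic bound on the Bernoulli KL is legitimate, and bookkeeping the constants through the Bretagnolle--Huber step to land on the claimed numerical constant.
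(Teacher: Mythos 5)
Your construction is the same as the paper's: the Hellinger-weighted perturbation $b(\oc)=\frac{\sqrt{\theta_1(\oc)}-\sqrt{\theta_2(\oc)}}{\sqrt{\theta_1(\oc)}+\sqrt{\theta_2(\oc)}}$, the gap $2\gap H^2(\theta_1,\theta_2)$, the bound $\sum_\oc\theta_i(\oc)b(\oc)^2\le 2H^2(\theta_1,\theta_2)$, and the final conversion via $\capc\le 2H^2$ all match, and this perturbation shape is indeed the crux of the argument. Where you diverge is the change-of-measure step: the paper introduces a neutral reference environment $\mu_0\equiv 1/2$, bounds $N_{\mu_i}(\theta_i;T)-N_{\mu_0}(\theta_i;T)\le T\,\tv(P_{\mu_0},P_{\mu_i})$, and applies Pinsker, then averages $\storeg(\mu_1)$ and $\storeg(\mu_2)$; you instead compare $\mu_1$ and $\mu_2$ directly via Bretagnolle--Huber. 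Both routes are sound and give the same order, but yours is lossier in the constants: (i) the correct conclusion from Bretagnolle--Huber applied to the event $\{N_2\ge T/2\}$ is $N_{\mu_1}(\theta_2;T)+N_{\mu_2}(\theta_1;T)\ge\frac{T}{4}e^{-\kl{P_{\mu_1}}{P_{\mu_2}}}$, not $\frac{T}{2}$; and (ii) $\klber{\tfrac12-a}{\tfrac12+a}$ is roughly four times $\klber{\tfrac12}{\tfrac12+a}$, so skipping the reference environment quadruples the KL budget. Tracking these through, your route yields a constant around $\frac{1}{40\sqrt2}$ rather than the stated $\frac{1}{13\sqrt2}$ (and a correspondingly different threshold on $T$); to hit the exact constants in the statement you would need to revert to the reference-environment-plus-Pinsker argument. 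As an order-wise proof the proposal is correct.
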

\begin{proof}
    We will consider two environments $\mu_1$ and $\mu_2$ such that for an outcome $\oc$, we choose 
    \[ 
    \mu_1(\oc) \coloneqq \frac{1}{2} - \gap \frac{\sqrt{\theta_1(\oc)} - \sqrt{\theta_2(\oc)}}{\sqrt{\theta_1(\oc)} + \sqrt{\theta_2(\oc)}} 
    \qquad\text{and}\qquad
    \mu_2(\oc) \coloneqq \frac{1}{2} - \gap \frac{\sqrt{\theta_2(\oc)} - \sqrt{\theta_1(\oc)}}{\sqrt{\theta_1(\oc)} + \sqrt{\theta_2(\oc)}}
    \,,
    \]
    where $0 \leq \gap \leq \frac{1}{4}$ is to be tuned later.
    We posit that $\sqrt{\theta_1(\oc)} + \sqrt{\theta_2(\oc)}$ is always positive by assuming, without loss of generality, that each outcome is in the support of at least one policy.
    Additionally, let $\mu_0$ be an environment such that $\mu_0(\oc) \coloneqq {1}/{2}$ for any outcome $\oc$. Note that $\theta_1$ ($\theta_2$) is the optimal policy in $\mu_1$ ($\mu_2$). Indeed, 
    \begin{align*}
        \summ_{\oc} (\theta_2(\oc) - \theta_1(\oc)) \mu_1(\oc) 
        = \gap \summ_{\oc}  \brb{\sqrt{\theta_1(\oc)} - \sqrt{\theta_2(\oc)}}^2 
        = 2 \gap H^2(\theta_1,\theta_2) > 0 \,.
    \end{align*}
    Symmetrically, we have that $\summ_{\oc} (\theta_1(\oc) - \theta_2(\oc)) \mu_2(\oc) =  2 \gap H^2(\theta_1,\theta_2)$.
    Hence, it holds that 
        \begin{align}
            \storeg(\mu_1) &= \E_{\mu_1}\summ_t \I\{\vartheta_t = \theta_2\} \summ_{\oc} (\theta_2(\oc) - \theta_1(\oc)) \mu_1(\oc) \nonumber\\
            &= 2 \gap H^2(\theta_1,\theta_2) (T - N_{\mu_1}(\theta_1; T)) \nonumber\\ \label{eq:low:two:regret-kl-form}
            &\geq 2 \gap H^2(\theta_1,\theta_2) \bbrb{T - N_{\mu_0}(\theta_1; T) - T \sqrt{\frac{1}{2}\bkl{P_{\mu_0}}{P_{\mu_1}}}} \,,
        \end{align}
        where the inequality follows by using that $N_{\mu_1}(\theta_1; T) - N_{\mu_0}(\theta_1; T) \leq T \tv(P_{\mu_0},P_{\mu_1})$ followed by an application of Pinsker's inequality. Note that for $\gap \leq 1/4$ and $c\coloneqq8\log(4/3)$,
        \begin{align*}
        \klber{\mu_0(\oc)}{\mu_1(\oc)} &= \bbklber{\frac{1}{2}}{\frac{1}{2}-\gap \frac{\sqrt{\theta_1(\oc)} - \sqrt{\theta_2(\oc)}}{\sqrt{\theta_1(\oc)} + \sqrt{\theta_2(\oc)}}}
        \leq c \gap^2 \left(\frac{\sqrt{\theta_1(\oc)}- \sqrt{\theta_2(\oc)}}{\sqrt{\theta_1(\oc)} + \sqrt{\theta_2(\oc)}}\right)^2 \,.
        \end{align*}
        We also have that
        \begin{align*}
            \summ_{\oc} \theta_1(\oc) \left(\frac{\sqrt{\theta_1(\oc)}- \sqrt{\theta_2(\oc)}}{\sqrt{\theta_1(\oc)} + \sqrt{\theta_2(\oc)}}\right)^2 
            &\leq \summ_{\oc}  \brb{\sqrt{\theta_1(\oc)}- \sqrt{\theta_2(\oc)}}^2
            = 2H^2(\theta_1,\theta_2)\,,
        \end{align*}
        with an analogous inequality holding for $\theta_2$. Combining these observation with Lemma~\ref{lem:kl-decomp} gets us that 
        \begin{align*}
             &\bkl{P_{\mu_0}}{P_{\mu_1}} \\&\qquad= N_{\mu_0}(\theta_1; T) \summ_{\oc} \theta_1(\oc) \bklber{\mu_0(\oc)}{\mu_1(\oc)} 
             + N_{\mu_0}(\theta_2; T) \summ_{\oc} \theta_2(\oc) \bklber{\mu_0(\oc)}{\mu_1(\oc)}\\
             &\qquad\leq 2 c \gap^2 H^2(\theta_1,\theta_2) (N_{\mu_0}(\theta_1; T)+N_{\mu_0}(\theta_2; T)) 
             = 2 c \gap^2 H^2(\theta_1,\theta_2)T \,.
        \end{align*}
        Plugging back into \eqref{eq:low:two:regret-kl-form} yields that
        \begin{equation*}
            \storeg(\mu_1) \geq 2 \gap H^2(\theta_1,\theta_2) \bbrb{T - N_{\mu_0}(\theta_1; T) - T \gap \sqrt{c H^2(\theta_1,\theta_2)T}} \,.
        \end{equation*}
        An analogous bound can be similarly shown to hold for environment $\mu_2$. Hence, we can proceed by arguing that
            \begin{align*}
            \sup_{\mu} \storeg(\mu) &\geq \frac{1}{2}( \storeg(\mu_1)+\storeg(\mu_2))
            \geq
            \gap H^2(\theta_1,\theta_2) T \left(1 - 2\gap \sqrt{c H^2(\theta_1,\theta_2)T}\right) \,.
        \end{align*}
        The theorem then follows by setting $\gap \coloneqq \frac{1}{4\sqrt{c H^2(\theta_1,\theta_2)T}}$ 
        and using that (see \Cref{sec:examples:two}) $2 H^2(\theta_1,\theta_2) \geq \capc(\theta_1,\theta_2)$.
        Note that the stated condition on $T$ ensures that $\gap \leq 1/4$.
\end{proof}

\subsection{\texorpdfstring{$\epsilon$}{Epsilon}-Greedy Policies}
Next, we prove a lower bound for the $\epsilon$-greedy case discussed in \Cref{sec:examples:epsilon}. Recall that for this case, $\capc(\Theta) = \epsilon^2 (N-1)$; hence, the following lower bound matches the bound of \Cref{thm:chi} up to a logarithmic factor. Further, we can conclude from this result that for any $g \in [0,N-1]$, there exists a policy set $\Theta$ with $|\Theta| = N$ and $\capc(\Theta) = g$ for which one has to incur regret of order at least $\sqrt{\capc(\Theta) T}$.

\begin{theorem} \label{thm:lower:epsilon}
    Assume that the policy set conforms to the $\epsilon$-greedy structure. Then, for $T \geq \frac{N}{4\log(4/3)}$ and any algorithm, there exists a sequence of losses such that
    $
    R_T \geq \frac{1}{18}\epsilon\sqrt{NT} \,.
    $
\end{theorem}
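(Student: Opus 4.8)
The plan is to adapt the needle-in-a-haystack argument of \Cref{thm:lower:two}, but now averaging over $N$ alternative environments rather than symmetrizing over two, so as to extract the extra $\sqrt{N}$ factor. Identifying each policy $\theta_i$ with its associated outcome $i \in [\noutcomes]$ (recall $N=\noutcomes$ here, so $\theta_i(\oc) = \tfrac{1-\epsilon}{N} + \epsilon \I\{\oc=i\}$), I would take a base environment $\mu_0$ with $\mu_0(\oc)=1/2$ for all $\oc$, and for each $i \in [N]$ an alternative $\mu_i$ that lowers the mean loss of outcome $i$ alone: $\mu_i(\oc) = \tfrac12 - \gap\,\I\{\oc=i\}$, for a gap $\gap \le 1/4$ to be tuned. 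A direct computation gives $\ell_t(\theta_j) = \tfrac12 - \gap\,\theta_j(i)$ under $\mu_i$, so $\theta_i$ is optimal there while every other policy is suboptimal by exactly $\gap\epsilon$. Hence $\storeg(\mu_i) = \gap\epsilon\,(T - N_{\mu_i}(\theta_i;T))$, and since $\tfrac1N\sum_i \storeg(\mu_i) \le \sup_\mu \storeg(\mu)$, it suffices to lower bound this average.

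First I would pass from each $\mu_i$ to the algorithm-independent reference $\mu_0$ via the standard inequality $N_{\mu_i}(\theta_i;T) \le N_{\mu_0}(\theta_i;T) + T\sqrt{\tfrac12 \kl{P_{\mu_0}}{P_{\mu_i}}}$, obtained from $N_{\mu_i}-N_{\mu_0} \le T\,\tv(P_{\mu_0},P_{\mu_i})$ and Pinsker, exactly as in the proof of \Cref{thm:lower:two}. Next I would invoke \Cref{lem:kl-decomp}: since $\mu_0$ and $\mu_i$ differ only at outcome $i$, only that term survives, and with $\klber{1/2}{1/2-\gap} \le c\gap^2$ (where $c=8\log(4/3)$, valid for $\gap \le 1/4$) one gets $\kl{P_{\mu_0}}{P_{\mu_i}} \le c\gap^2 \sum_\theta N_{\mu_0}(\theta;T)\,\theta(i)$. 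The inner sum equals $\E_{\mu_0}\sum_t \vartheta_t(i) = \tfrac{(1-\epsilon)T}{N} + \epsilon N_{\mu_0}(\theta_i;T)$, using $\vartheta_t(i) = \tfrac{1-\epsilon}{N} + \epsilon\I\{\vartheta_t=\theta_i\}$.

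The crux is the averaging over $i$, which is where any dependence on the algorithm must be neutralized. Summing the previous bound over $i$ and using the identity $\sum_i N_{\mu_0}(\theta_i;T) = T$ (at each round exactly one policy is played under $\mu_0$), the algorithm-dependent terms collapse and I obtain $\tfrac1N\sum_i \kl{P_{\mu_0}}{P_{\mu_i}} \le c\gap^2 T/N$, a bound holding uniformly over all strategies. Applying Jensen's inequality (concavity of $\sqrt{\cdot}$) to the average of the square-root KL terms then gives $\tfrac1N\sum_i \sqrt{\tfrac12\kl{P_{\mu_0}}{P_{\mu_i}}} \le \gap\sqrt{cT/(2N)}$, and combining with $\tfrac1N\sum_i N_{\mu_0}(\theta_i;T) = T/N$ yields $\tfrac1N\sum_i \storeg(\mu_i) \ge \gap\epsilon T\big(1 - \tfrac1N - \gap\sqrt{cT/(2N)}\big)$.

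Finally, using $1-\tfrac1N \ge \tfrac12$ for $N \ge 2$ and tuning $\gap \coloneqq \tfrac14\sqrt{2N/(cT)}$ makes the parenthetical at least $1/4$, producing an $\Omega(\epsilon\sqrt{NT})$ bound whose explicit constant can be checked to exceed $1/18$; the requirement $\gap \le 1/4$ is exactly equivalent to the stated hypothesis $T \ge N/(4\log(4/3))$, so no additional constraint is incurred. I expect the averaging identity $\sum_i N_{\mu_0}(\theta_i;T)=T$ to be the decisive step, since it is what converts a family of individually algorithm-dependent KL bounds into a single uniform bound and thereby yields the $\sqrt{N}$ gain of a genuine $N$-armed lower bound; the remaining work is routine constant-tracking to land the explicit $\tfrac{1}{18}$.
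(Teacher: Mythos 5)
Your proposal is correct and follows essentially the same route as the paper's proof: the same family of environments $\mu_\theta(\oc)=\tfrac12-\gap\,\I\{\oc=\oc_\theta\}$ plus the reference $\mu_0$, the same passage through total variation, Pinsker, and Lemma~\ref{lem:kl-decomp}, the same averaging over the $N$ alternatives using $\sum_\theta N_{\mu_0}(\theta;T)=T$ together with concavity of the square root, and the identical tuning $\gap=\tfrac14\sqrt{2N/(cT)}$ yielding the constant $\tfrac{1}{16}\sqrt{2/c}\ge\tfrac1{18}$ under the stated condition on $T$. No gaps.
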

\begin{proof}
    We will consider $N$ environments $\{\mu_{\theta}\}_{\theta \in \Theta}$ such that for environment $\mu_{\theta}$ and outcome $\oc$, $\mu_{\theta}(\oc) \coloneqq {1}/{2} - \gap\mathbb{I}\{\oc=\oc_\theta\}$, where $0 \leq \gap \leq {1}/{4}$ is to be tuned later. Additionally, let $\mu_0$ be an environment such that $\mu_0(\oc) \coloneqq {1}/{2}$ for any outcome $\oc$. Notice that $\theta$ is the optimal policy in environment $\mu_{\theta}$. In particular, for $\theta' \in \Theta \setminus \{\theta\}$, we have that
    \begin{equation*}
        \summ_{\oc} (\theta'(\oc) - \theta(\oc)) \mu_\theta(\oc) = \gap  (\theta(\oc_\theta) - \theta'(\oc_\theta)) = \gap \epsilon \,.
    \end{equation*}
    Thus, 
    \begin{align} 
    \label{eq:low:epsilon:regret-kl-form}
        \storeg(\mu_\theta) &= \gap \epsilon (T - N_{\mu_\theta}(\theta; T)) 
        \geq \gap \epsilon \bbrb{T - N_{\mu_0}(\theta; T) - T \sqrt{\frac{1}{2}\bkl{P_{\mu_0}}{P_{\mu_\theta}}}} \,,
    \end{align}
    where the inequality follows by using that $N_{\mu_\theta}(\theta; T) - N_{\mu_0}(\theta; T) \leq T \tv(P_{\mu_0},P_{\mu_\theta})$ followed by an application of Pinsker's inequality. Starting from Lemma \ref{lem:kl-decomp}, we have that 
    \begin{align*}
         \bkl{P_{\mu_0}}{P_{\mu_\theta}} &= \summ_{\theta'} N_{\mu_0}(\theta'; T) \summ_{\oc} \theta'(\oc) \bklber{\mu_0(\oc)}{\mu_\theta(\oc)} \\
         &=\summ_{\theta'} N_{\mu_0}(\theta'; T) \theta'(\oc_\theta) \bbklber{\frac{1}{2}}{\frac{1}{2}-\gap} \\
         &\leq c \gap^2 \summ_{\theta'} N_{\mu_0}(\theta'; T) \theta'(\oc_\theta) 
         = c \gap^2 \left(\frac{1-\epsilon}{N} T + \epsilon N_{\mu_0}(\theta; T)\right) \,,
    \end{align*}
    where the inequality holds for $\gap \leq {1}/{4}$ with $c \coloneqq 8\log({4}/{3})$. Plugging this result back into \eqref{eq:low:epsilon:regret-kl-form} allows us to conclude that
    \begin{align*}
        \sup_{\mu} \storeg(\mu) 
        &\geq \frac{1}{N}\summ_\theta \storeg(\mu_\theta)\\
        &\geq \gap \epsilon \left(T - \frac{T}{N} - T \sqrt{\frac{c}{2} \gap^2 \left(\frac{1-\epsilon}{N} T +  \epsilon \frac{T}{N} \right)}\right) 
        \geq \gap \epsilon T \left(\frac{1}{2} - \gap \sqrt{\frac{c}{2} \frac{T}{N}}\right) \,,
    \end{align*}
    where the second inequality uses the concavity of the square root, and the third holds since $N\geq2$. The theorem then follows by setting $\gap \coloneqq \frac{1}{4} \sqrt{\frac{2N}{cT}}$ and verifying that the stated condition on $T$ ensures that $\gap \leq {1}/{4}$.
\end{proof}

\subsection{The Multitask Bandits Structure}
In this section, we prove a lower bound for a certain structure belonging to the family of $M$-supported uniform policies described in \Cref{sec:examples:uniform}. 
Let $M$ be a positive integer 
such that $q \coloneqq {\noutcomes}/{M}$ is an integer greater than or equal to $2$. 
In this structure, the outcomes are divided into $M$ sections, and each policy is a uniform distribution supported over $M$ outcomes such that its support contains an outcome from each section. Assuming that the policy set contains all such policies, we have that $N = ({\noutcomes}/{M})^M$. For this particular structure, we will index the outcomes according to the section they belong to and their order therein: $\aspace = \{\oc_{i,j}: i\in[M], j\in[q]\}$. With this notation, we can describe the policy set as
\begin{equation*}
    \Theta = \Bigg\{ \theta \in \mathcal{U}_{\noutcomes,M}: \forall i \in [M], \sum_{j=1}^q \theta(\oc_{i,j}) = \frac{1}{M} \Bigg\} \,,
\end{equation*}
where $\mathcal{U}_{\noutcomes,M}$ is the set of all $M$-supported uniform distributions over $\noutcomes$ outcomes.
Seeing the outcomes in one section as arms in a bandit game, this problem is, in a sense, equivalent to playing $M$ bandit games simultaneously, with the choice of policy at each round dictating an arm choice for each game. The distinction is that only the loss incurred in a single randomly sampled game is observed,
while the player nonetheless aims at minimizing their regret averaged over the $M$ games.
This type of structure
(albeit with a different type of feedback)
is commonly used to prove lower bounds for combinatorial bandits \citep[see, e.g.,][]{audibert2014regret}.
The following theorem, proved in \Cref{app:multi}, provides a lower bound for our setting.

\begin{restatable}{theorem}{lowermulti}\label{thm:lower:multi}
Suppose the policy set conforms to the multi-task structure. 
Then, for any algorithm and $T \geq \frac{\noutcomes}{4\log(4/3)}$, there exists a sequence of losses such that
$
    R_T \geq \frac{1}{18} \sqrt{\noutcomes T} \,.
$
\end{restatable}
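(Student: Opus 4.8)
The plan is to run the needle-in-a-haystack / averaging-hypotheses technique of the previous two theorems, but \emph{section by section}, so as to capture the decisive fact that each of the $M$ games is observed only on a $1/M$ fraction of the rounds. I identify each policy with a tuple $\chi = (\chi_1,\dots,\chi_M) \in [q]^M$, where $\chi_i$ names the supported outcome in section $i$, and consider the environments $\{\mu_\chi\}_{\chi\in[q]^M}$ given by $\mu_\chi(\oc_{i,j}) \coloneqq \tfrac12 - \gap\,\mathbb{I}\{j=\chi_i\}$, for a gap $\gap\in[0,\tfrac14]$ to be tuned. Under $\mu_\chi$ the optimal policy is $\chi$ itself, and since each policy puts mass $1/M$ on exactly one outcome per section, the stochastic regret decomposes across sections as $\storeg(\mu_\chi) = \frac{\gap}{M}\sum_{i=1}^M \brb{T - N^{(i)}_{\mu_\chi}(\chi_i;T)}$, where $N^{(i)}_{\mu}(a;T)$ counts the expected number of rounds in which the played policy selects arm $a$ in section $i$, and, for every fixed section, these counts sum to $T$ over the $q$ arms.

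The key step, and the main obstacle, is to bound $N^{(i)}_{\mu_\chi}(\chi_i;T)$ without losing a $\sqrt{M}$ factor. The naive move---comparing every $\mu_\chi$ against the all-$\tfrac12$ environment---flips all $M$ sections at once and hence charges the information from all $T$ observations to a single hypothesis test, yielding the wrong threshold $\gap\sim\sqrt{q/T}$. Instead, for each section $i$ I compare $\mu_\chi$ against the reference environment $\nu_{\chi_{-i}}$ that agrees with $\mu_\chi$ on all sections $j\neq i$ but resets section $i$ to the uninformative means $\tfrac12$. Since $\nu_{\chi_{-i}}$ and $\mu_\chi$ differ only at the single outcome $\oc_{i,\chi_i}$, \Cref{lem:kl-decomp} gives $\kl{P_{\nu_{\chi_{-i}}}}{P_{\mu_\chi}} \le \frac{c\,\gap^2}{M}\,N^{(i)}_{\nu_{\chi_{-i}}}(\chi_i;T)$ with $c=8\log(4/3)$, exposing a \emph{per-section} information budget of order $\gap^2 T/M$, where the $1/M$ comes precisely from the mediator feedback putting mass $1/M$ on each observed outcome. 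Crucially, $\nu_{\chi_{-i}}$ does not depend on $\chi_i$, so I can write $N^{(i)}_{\mu_\chi}(\chi_i;T)\le N^{(i)}_{\nu_{\chi_{-i}}}(\chi_i;T) + T\sqrt{\tfrac12\kl{P_{\nu_{\chi_{-i}}}}{P_{\mu_\chi}}}$ (the counts being $\his_T$-measurable and bounded by $T$, followed by Pinsker's inequality) and then average over $\chi_i\in[q]$ with $\chi_{-i}$ held fixed.

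In that average, $\frac1q\sum_{\chi_i} N^{(i)}_{\nu_{\chi_{-i}}}(\chi_i;T) = T/q$ because the section-$i$ counts sum to $T$, while Jensen's inequality (concavity of the square root) controls the error term by $T\gap\sqrt{cT/(2Mq)}$. Averaging over $\chi_{-i}$ as well and summing the $M$ sections, the $\frac{\gap}{M}\sum_i$ prefactor recombines to give $\sup_\mu \storeg(\mu) \ge \frac1N\sum_\chi\storeg(\mu_\chi) \ge \gap T\brb{\tfrac12 - \gap\sqrt{cT/(2Mq)}}$, where I also use $q\ge2$ so that $1-1/q\ge\tfrac12$. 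Finally I optimize with $\gap = \tfrac14\sqrt{2Mq/(cT)}$; the stated hypothesis $T\ge \noutcomes/(4\log(4/3))$ is exactly what guarantees $\gap\le\tfrac14$, and substituting $\noutcomes = Mq$ yields $\sup_\mu\storeg(\mu)\ge \tfrac{1}{16}\sqrt{2\noutcomes T/c}\ge \tfrac{1}{18}\sqrt{\noutcomes T}$, which lower-bounds $\sup_{(\ell_t)_t} R_T$ and completes the proof.
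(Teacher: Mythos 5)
Your proof is correct and follows essentially the same route as the paper's: the same per-section environments $\mu_\chi$ and section-$i$ reference environments (the paper's $\mu_\theta^{-i}$), the same decomposition of $\storeg$ over sections, the same single-outcome KL computation via \Cref{lem:kl-decomp} yielding the $\gap^2/M$ budget, and the same averaging over the section-$i$ coordinate with the rest held fixed (which the paper phrases via the equivalence classes $\Theta/\sim_i$). The tuning of $\gap$ and the resulting constants also coincide.
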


Recall from \Cref{sec:examples:uniform} that for $M$-supported uniform policies, $\capc(\Theta)={\noutcomes}/{M}-1$. Also, note that $M = \log(N) / \log({\noutcomes}/{M})$. Thus, the bound given by the theorem is of order $\sqrt{\capc(\Theta) T \log(N) / \log( \capc(\Theta) + 1)}$. The distinguishing value of this lower bound is that it shows that the logarithmic dependence on the number of policies in the bound of \Cref{thm:chi} is not entirely spurious and that it becomes increasingly tight as $C(\Theta)$ decreases. This result can be seen as an analogue for our setting of the $\sqrt{K T \log (N)/\log (K))}$ lower bound proved by \cite{expertslowerbound} for the problem of bandits with expert advice, noting that the construction of their bound relies on a (time-varying) sequence of deterministic expert recommendations. 

\section{An Impossibility Result for Linear Bandits} \label{sec:linear}
In this section, we establish a separation between the mediator feedback model and the linear bandit model in terms of achievable regret.
With $\Theta \subset \Delta_\noutcomes$ as the action set, we consider a linear bandit problem where upon choosing a policy $\vartheta_t$ in round $t$, the learner directly observes $\ell_t(\vartheta_t) = \summ_\oc \vartheta_t(\oc) \ell_t(\oc)$.
This is in contrast to the setting considered thus far, where the learner observes $(\Oc_t, \ell_t(\Oc_t))$ with $\Oc_t$ sampled from the distribution of $\vartheta_t$.
The notion of regret we aim to minimize in the linear bandit variant remains the same as before. 
The main message conveyed by this section's results is that obtaining information regarding individual outcomes is crucial for achieving regret guarantees
that reflect the affinity of the policies' distributions.

Concretely, we will consider once again the $\epsilon$-greedy structure described in \Cref{sec:examples:epsilon}.
Similar to the previous section,
we will rely on a class of environments, 
each identified by a vector $\mu \in [0,1]^\noutcomes$.
However, we will adopt a different scheme for generating the losses for the outcomes.
For every round $t$, let $Z_t$ be a random variable drawn in an i.i.d. manner from a normal distribution $\mathcal{N}(0, \sigma^2)$, with some $\sigma > 0$.
Accordingly, for each outcome $\oc$, we set $\ell_t(\oc) \coloneqq \mu(\oc) + Z_t$.
Hence, the losses assigned to the outcomes in a given round are correlated, 
see \citep{cohen2017combinatorial} for a similar approach in the combinatorial bandit problem. 
It follows then that $\ell_t(\vartheta_t) = \langle \vartheta_t, \mu \rangle + Z_t$. 
In this section, we let $P_\mu$ denote the distribution induced by $\mu$ (and the player's strategy) over the interaction history in this variant, namely 
$(\vartheta_1,\ell_1(\vartheta_1),\dots,\vartheta_T,\ell_T(\vartheta_T))$.
We will again study the stochastic regret $\storeg(\mu)$, still defined as in \eqref{eq:low:storeg}. 

For the $\epsilon$-greedy decision set, Theorem~22.1 in \citep{lattimore2020bandit} implies the existence of an algorithm enjoying a bound of order $\sqrt{NT\log(NT)}$ on the stochastic regret (when $\sigma=1$), recalling that the members of $\Theta$ in this case are $N$-dimensional vectors.
While for the adversarial regret, an upper bound of order $\sqrt{N T \log(N)}$ is achievable, see \citep{bubeck2012towards}.
The following two results show that, up to factors logarithmic in $N$ and $T$, the cited bounds are unimprovable, no matter the value of $\epsilon$. 
Hence, the growing similarity between the actions---or the shrinking diameter of $\Theta$---as $\epsilon$ approaches $0$ cannot be exploited.
This is in sharp contrast to the mediator feedback setting, where \Cref{thm:chi,thm:lower:epsilon} establish the minimax regret to be essentially of order $\epsilon\sqrt{NT}$ for this family.

\begin{proposition} \label{thm:low:linear}
    Assume that the policy set conforms to the $\epsilon$-greedy structure with $\epsilon > 0$. 
    Then, for the class of linear bandit environments described above (with any given $\sigma > 0$), 
    it holds for any algorithm and $T \geq \frac{\sigma^2 N}{\epsilon^2}$ that
    $
    \sup_{\mu} \storeg(\mu) \geq \frac{\sigma}{8} \sqrt{NT} \,.
    $
\end{proposition}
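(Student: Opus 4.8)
The plan is to adapt the needle-in-a-haystack argument used in the proof of \Cref{thm:lower:epsilon} to the linear-feedback setting, with one essential modification forced by the shape of the feedback. I would fix a family of $N$ environments $\{\mu_k\}_{k\in[N]}$, one designed to make each policy $\theta_k$ optimal, together with a reference environment $\mu_0$ with $\mu_0(\oc)=1/2$ for all $\oc$. The skeleton of the lower bound is then the familiar one: write $\storeg(\mu_k)$ as $(\text{gap})\cdot\bigl(T-N_{\mu_k}(\theta_k;T)\bigr)$, control $N_{\mu_k}(\theta_k;T)-N_{\mu_0}(\theta_k;T)$ by $T\,\tv(P_{\mu_0},P_{\mu_k})$, apply Pinsker's inequality, average over $k$, move the square root out with Jensen, and finally tune the perturbation magnitude $\gap$. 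The hypothesis $T\geq\sigma^2 N/\epsilon^2$ is precisely what will guarantee the tuned $\gap$ stays below $1/2$, so that every $\mu_k$ is a legitimate environment in $[0,1]^\noutcomes$.

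The point where the linear model departs from the mediator model — and the crux of the argument — is the per-round information accounting. Under linear feedback the observation at round $t$ is Gaussian with mean $\langle\vartheta_t,\mu\rangle$ and variance $\sigma^2$, so the chain rule (the Gaussian analogue of \Cref{lem:kl-decomp}, derived inline exactly as there) gives $\kl{P_{\mu_0}}{P_{\mu_k}}=\tfrac{1}{2\sigma^2}\summ_t\E_{\mu_0}\langle\vartheta_t,\mu_0-\mu_k\rangle^2$. Here is the main obstacle: every policy shares the common component $\tfrac{1-\epsilon}{N}\mathbf{1}$, so a naive single-coordinate perturbation with $\mu_0-\mu_k\propto e_{\oc_k}$ has a nonzero projection onto $\mathbf{1}$, and that projection leaks information about which coordinate was lowered on \emph{every} play, not only when $\theta_k$ is selected. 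Pushing this through produces a bound weakened by a factor $\sqrt{1+(N-1)\epsilon^2}$, collapsing the rate to the $\epsilon$-dependent $\epsilon\sqrt{NT}$ instead of the $\epsilon$-free $\sqrt{NT}$ we need.

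To neutralize the leakage I would use a \emph{balanced} perturbation orthogonal to $\mathbf{1}$: set $\mu_k(\oc_k)=1/2-\gap$ and $\mu_k(\oc_j)=1/2+\gap/(N-1)$ for $j\neq k$, so that $\summ_\oc\bigl(\mu_0(\oc)-\mu_k(\oc)\bigr)=0$. Then the common direction carries no signal, $\langle\theta_m,\mu_0-\mu_k\rangle$ collapses to $\epsilon$ times the $\oc_m$-coordinate of $\mu_0-\mu_k$, and the per-play KL becomes $\epsilon^2\gap^2/(2\sigma^2)$ when $\theta_k$ is played but only $\epsilon^2\gap^2/\bigl(2\sigma^2(N-1)^2\bigr)$ otherwise — the decoupled information structure of a genuine $N$-armed Gaussian bandit. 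Summing over $k$ (using $\summ_k N_{\mu_0}(\theta_k;T)=T$) yields $\summ_k\kl{P_{\mu_0}}{P_{\mu_k}}=\tfrac{\epsilon^2\gap^2 T}{2\sigma^2}\cdot\tfrac{N}{N-1}$, while all suboptimal policies in $\mu_k$ share the same gap $\epsilon\gap\,N/(N-1)$, so the per-play regret is exactly this constant.

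Assembling these pieces, averaging over $k$, and applying Jensen to $\tfrac1N\summ_k\sqrt{\tfrac12\kl{P_{\mu_0}}{P_{\mu_k}}}$, the minimax regret would be bounded below by $\epsilon\gap T\bigl(1-\tfrac{N}{(N-1)^{3/2}}\tfrac{\epsilon\gap\sqrt T}{2\sigma}\bigr)$. Choosing $\gap=\tfrac{1}{2\sqrt2}\cdot\tfrac{\sigma}{\epsilon}\sqrt{N/T}$ — which is $\leq 1/2$ exactly because $T\geq\sigma^2 N/\epsilon^2$ forces $\tfrac{\sigma}{\epsilon}\sqrt{N/T}\leq1$ — fixes the leading factor at $\tfrac{\sigma}{2\sqrt2}\sqrt{NT}$ and, via the elementary bound $(N/(N-1))^{3/2}\leq2\sqrt2$ for $N\geq2$, keeps the bracket at least $1/2$, giving $\sup_\mu\storeg(\mu)\geq\tfrac{\sigma}{4\sqrt2}\sqrt{NT}\geq\tfrac{\sigma}{8}\sqrt{NT}$. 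Beyond the leakage issue, the only delicate verification is that this one choice of $\gap$ simultaneously respects $\gap\leq1/2$ and leaves enough slack in the bracket uniformly over $N\geq2$; both reduce to that single inequality.
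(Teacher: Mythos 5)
Your proof is correct, and the overall skeleton (needle-in-a-haystack over $N$ Gaussian environments, Pinsker, averaging over the instances, Jensen on the square root, tuning $\gap$ under the constraint imposed by $T \geq \sigma^2 N/\epsilon^2$) is the same as the paper's. The genuine difference is in the choice of hard instances, which is the heart of this argument. You correctly diagnose the leakage problem with a single-coordinate perturbation and fix it by making the perturbation orthogonal to $\mathbf{1}$, i.e.\ $\mu_k(\oc_k)=1/2-\gap$ and $\mu_k(\oc_j)=1/2+\gap/(N-1)$ for $j\neq k$; this leaves a residual per-play divergence of $\epsilon^2\gap^2/(2\sigma^2(N-1)^2)$ for off-target plays, which you then absorb into the constants via the $(N/(N-1))^{3/2}\leq 2\sqrt2$ bound. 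The paper instead sets $\mu_\theta(\oc)= 1/2 + \gap\bigl((1-\epsilon)/N - \I\{\oc=\oc_\theta\}\bigr)$, a perturbation chosen to be orthogonal not merely to $\mathbf{1}$ but to every policy $\theta'\neq\theta$: one computes $\langle \vartheta_t,\mu_\theta\rangle = 1/2 - \gap\epsilon\,\I\{\vartheta_t=\theta\}$, so the observation distribution is \emph{identical} to that under $\mu_0$ unless $\theta$ itself is played, the leakage is exactly zero, and $\kl{P_{\mu_0}}{P_{\mu_\theta}} = \frac{\gap^2\epsilon^2}{2\sigma^2}N_{\mu_0}(\theta;T)$ with no correction terms. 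Your route costs slightly messier bookkeeping (the $N/(N-1)$ factors in both the gap and the KL sum) but buys nothing extra here; the paper's construction is the cleaner one to remember, since it reduces the linear-feedback problem to a literal $N$-armed Gaussian bandit. Both arrive at $\sup_\mu \storeg(\mu) \geq \frac{\sigma}{8}\sqrt{NT}$ under the stated condition on $T$, and your verification that the single tuned $\gap$ satisfies $\gap\leq 1/2$ (hence keeps every $\mu_k$ in $[0,1]^\noutcomes$) is sound.
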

\begin{proof}
We will consider $N$ environments $\{\mu_{\theta}\}_{\theta \in \Theta}$ such that for environment $\mu_{\theta}$ and outcome $\oc$, we set
\[\mu_\theta(\oc) \coloneqq \frac{1}{2} + \gap \Brb{\frac{1-\epsilon}{N} - \mathbb{I}\{\oc=\oc_\theta\}}\,,\]
where $0 \leq \gap \leq {1}/{2}$ is to be tuned later.
Thus, under environment $\mu_\theta$, we have that
\begin{align} \label{eq:low:linear:policy-loss}
    \ell_t(\vartheta_t) = \summ_\oc \vartheta_t(\oc) \ell_t(\oc) = \frac{1}{2} + \gap \Brb{\frac{1-\epsilon}{N} - \vartheta_t(\oc_\theta)} + Z_t = \frac{1}{2} - \gap \epsilon \mathbb{I}\{\vartheta_t=\theta\} + Z_t \,.
\end{align}
Additionally, let $\mu_0$ be an environment such that $\mu_0(\oc) = {1}/{2}$ for any outcome $\oc$, implying that $\ell_t(\vartheta_t) = 1/2 + Z_t$. 
Notice that $\theta$ is the optimal policy in environment $\mu_{\theta}$. 
In particular, for $\theta' \in \Theta \setminus \{\theta\}$, we have that
    \[
        \summ_\oc (\theta'(\oc) - \theta(\oc)) \mu_\theta(\oc) 
        = \gap  (\theta(\oc_\theta) - \theta'(\oc_\theta)) 
        = \gap \epsilon \,.
    \]
Hence, it holds that
\begin{align*}
    \storeg(\mu_\theta) &= \gap \epsilon (T - N_{\mu_\theta}(\theta; T)) 
    \geq \gap \epsilon \bbrb{T - N_{\mu_0}(\theta; T) - T \sqrt{\frac{1}{2}\bkl{P_{\mu_0}}{P_{\mu_\theta}}}} \,,
\end{align*}
where the inequality follows by using that $N_{\mu_\theta}(\theta; T) - N_{\mu_0}(\theta; T) \leq T \tv(P_{\mu_0},P_{\mu_\theta})$ followed by an application of Pinsker's inequality. 
Combining the observation in \eqref{eq:low:linear:policy-loss} with standard results (see, for example, Exercise 15.8 (b) and Exercise 14.7 in \citep{lattimore2020bandit}), we can express the KL-divergence term as follows:
\begin{align*}
    \bkl{P_{\mu_0}}{P_{\mu_\theta}} &= \summ_{\theta'} N_{\mu_0}(\theta'; T) \bkl{\mathcal{N}(1/2,\sigma^2)}{\mathcal{N}(1/2 - \gap \epsilon \mathbb{I}\{\theta'=\theta\},\sigma^2)} \\ 
    &= N_{\mu_0}(\theta; T) \bkl{\mathcal{N}(1/2,\sigma^2)}{\mathcal{N}(1/2 - \gap \epsilon ,\sigma^2)} 
    = \frac{\gap^2 \epsilon^2}{2 \sigma^2} N_{\mu_0}(\theta; T)\,.
\end{align*}
Consequently, we get that
\begin{align*}
    \sup_{\mu} \storeg(\mu) 
    &\geq \frac{1}{N}\summ_\theta \storeg(\mu_\theta)
    \geq \gap \epsilon \left(T - \frac{T}{N} - \frac{\gap \epsilon}{2 \sigma} T \sqrt{\frac{T}{N}}\right)
    \geq \gap \epsilon T \left(\frac{1}{2} - \frac{\gap \epsilon}{2 \sigma} \sqrt{\frac{T}{N}}\right) \,,
\end{align*}
where the second inequality holds by the concavity of the square root, and the third since $N\geq2$. 
The proposition then follows by choosing $\gap \coloneqq \frac{\sigma}{2\epsilon}\sqrt{\frac{N}{T}}$. 
Note that the stated condition on $T$ ensures that $\gap \leq {1}/{2}$.
\end{proof}

As the construction of this lower bound relied on normally distributed (hence unbounded) losses, a lower bound in the adversarial setting is not immediately implied. 
Instead, the following theorem (proved in \Cref{app:lower-bounded}) provides the sought bound at the cost of an extra $1/\sqrt{\log(T)}$ factor by combining Proposition \ref{thm:low:linear} with a simple truncation argument due to \cite{cohen2017combinatorial}.
Notice that the resulting bound can be made arbitrarily larger than the mediator feedback guarantee of \Cref{thm:chi} by picking a small enough $\epsilon$ and a suitably long horizon.
\begin{restatable}{theorem}{lowerlinearbounded} \label{thm:low:linear-bounded}
    Assume that the policy set conforms to the $\epsilon$-greedy structure with $\epsilon > 0$. 
    Then, under linear bandit feedback,
    we have that for any algorithm and $T \geq \frac{N}{8\epsilon^2}$, there exists a sequence of losses (bounded in $[0,1]$) such that
    $
    R_T \geq \frac{1}{64\sqrt{2\log(16 T)}} \sqrt{NT} \,.
    $
\end{restatable}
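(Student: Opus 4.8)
The plan is to transfer the Gaussian lower bound of \Cref{thm:low:linear} to the bounded-loss regime by clipping the losses into $[0,1]$ and tuning the noise scale $\sigma$ so small that clipping is almost never triggered, following the truncation device of \cite{cohen2017combinatorial}. Concretely, I would reuse verbatim the environments $\{\mu_\theta\}_{\theta\in\Theta}$ and $\mu_0$ built in the proof of \Cref{thm:low:linear} for the $\epsilon$-greedy structure, but redefine the (now bounded) loss of outcome $\oc$ at round $t$ as the clipped Gaussian $\tilde\ell_t(\oc)\coloneqq\min\bcb{1,\max\bcb{0,\mu(\oc)+Z_t}}$ with $Z_t\sim\mathcal N(0,\sigma^2)$ i.i.d.\ and $\sigma^2=\tfrac1{32\log(16T)}$. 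Since every mean satisfies $\mu_\theta(\oc)\in[\tfrac12-\gap,\tfrac12+\gap]$, imposing $\gap\le\tfrac14$ keeps each $\tilde\ell_t(\oc)$ in $[0,1]$, so the construction is admissible; under linear feedback the learner still only observes the scalar $\langle\vartheta_t,\tilde\ell_t\rangle\in[0,1]$.

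The core step is a coupling through the good event $E\coloneqq\bcb{\max_{t\in[T]}|Z_t|\le\tfrac14}$. On $E$ one has $\mu(\oc)+Z_t\in[0,1]$ for every outcome, so no clipping occurs and the clipped environment coincides pathwise with the Gaussian one; coupling the noises $(Z_t)_t$ and the learner's internal randomness, the trajectories $(\vartheta_t)_t$ of the two worlds then agree on $E$. A union bound together with the sub-Gaussian tail gives $\pr(E^c)\le 2T\exp(-\tfrac1{32\sigma^2})=\tfrac18$ for the stated $\sigma$. I would use this to transfer the count bound: writing $N^{\mathrm{clip}}_{\mu_\theta}(\theta;T)$ for the clipped-world count and exploiting that the summand $\summ_t\I\{\vartheta_t\neq\theta\}$ is at most $T$,
\[
T-N^{\mathrm{clip}}_{\mu_\theta}(\theta;T)\;\ge\;\E\bsb{\I_E\summ_t\I\{\vartheta_t\neq\theta\}}\;\ge\;\brb{T-N_{\mu_\theta}(\theta;T)}-T\,\pr(E^c),
\]
where $N_{\mu_\theta}(\theta;T)$ is the Gaussian-world count already controlled in \Cref{thm:low:linear}.

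It remains to lower bound the true clipped regret and assemble the pieces. Using $\theta$ as comparator and the independence of $Z_t$ from the past, $R_T(\mu_\theta)\ge\E^{\mathrm{clip}}_{\mu_\theta}\summ_t\langle\vartheta_t-\theta,\tilde\mu_\theta\rangle=\epsilon\,\widetilde\gap\,\brb{T-N^{\mathrm{clip}}_{\mu_\theta}(\theta;T)}$, where $\tilde\mu_\theta(\oc)\coloneqq\E[\tilde\ell_t(\oc)]$ and $\widetilde\gap\coloneqq\tilde\mu_\theta(\oc_{\theta'})-\tilde\mu_\theta(\oc_\theta)$ is the (common) clipped per-step gap. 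Because $\tfrac{d}{d\mu}\E[\min\bcb{1,\max\bcb{0,\mu+Z}}]=\pr(0<\mu+Z<1)\ge1-2\exp(-\tfrac1{32\sigma^2})=1-\tfrac1{8T}$ for means in $[\tfrac14,\tfrac34]$, the mean value theorem yields $\widetilde\gap\ge(1-\tfrac1{8T})\gap$, so clipping barely attenuates the signal. Averaging over $\theta\in\Theta$, invoking the intermediate inequality $\tfrac1N\summ_\theta\brb{T-N_{\mu_\theta}(\theta;T)}\ge T\brb{\tfrac12-\tfrac{\gap\epsilon}{2\sigma}\sqrt{T/N}}$ extracted from the proof of \Cref{thm:low:linear}, and choosing $\gap\coloneqq\tfrac\sigma{2\epsilon}\sqrt{N/T}$ exactly as there (which makes the parenthesis equal $\tfrac14$ and, under $T\ge\tfrac N{8\epsilon^2}$, guarantees $\gap\le\tfrac14$), I obtain $\tfrac1N\summ_\theta\brb{T-N^{\mathrm{clip}}_{\mu_\theta}(\theta;T)}\ge T(\tfrac14-\tfrac18)=\tfrac T8$. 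Hence $\sup_\mu R_T\ge\tfrac1N\summ_\theta R_T(\mu_\theta)\ge\tfrac{\epsilon\widetilde\gap T}{8}$, which after substituting $\gap$ and $\sigma$ recovers the claimed $\tfrac1{64\sqrt{2\log(16T)}}\sqrt{NT}$.

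I expect the main obstacle to be the simultaneous control of the two opposing effects of clipping rather than any isolated estimate. One must ensure both that clipping does not erode the gap $\gap$ driving the regret (the $\widetilde\gap$ bound) and that the off-$E$ mismatch between the clipped and Gaussian dynamics does not destroy the count transfer (the $T\,\pr(E^c)$ term); both are tenable only because $\sigma$ is taken as small as $1/\sqrt{\log T}$. It is precisely this forced shrinkage of $\sigma$—dictated by the need to make $\pr(E^c)$ a small constant via a tail bound union-bounded over all $T$ rounds—that produces the extra $1/\sqrt{\log T}$ factor relative to \Cref{thm:low:linear}, and keeping every constant consistent so that $\gap\le\tfrac14$ holds under the stated horizon condition requires some care.
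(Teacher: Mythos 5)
Your proposal is correct and follows essentially the same route as the paper: the same $\epsilon$-greedy Gaussian environments from \Cref{thm:low:linear}, the same clip-to-$[0,1]$ truncation device of \cite{cohen2017combinatorial} with the same variance $\sigma^2 = 1/(32\log(16T))$, the same good event $\{\max_{t\in[T]}|Z_t|\le 1/4\}$ union-bounded to failure probability $1/8$, and the same final tuning $\gap = \tfrac{\sigma}{2\epsilon}\sqrt{N/T}$. The only (immaterial) difference is bookkeeping: the paper compares the clipped regret to the unclipped pseudo-regret directly via the event decomposition, so it never needs your mean-value-theorem estimate $\widetilde{\gap}\ge(1-\tfrac{1}{8T})\gap$, and your route consequently lands at $\tfrac{1}{64\sqrt{2\log(16T)}}\sqrt{NT}\,(1-\tfrac{1}{8T})$, a hair below the stated constant, which you would need to absorb with a marginally sharper tail estimate or by organizing the transfer as the paper does.
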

 
\section{The Full-Information Case} \label{sec:full-info}
In this last section, we briefly examine a full-information variant of the problem, where the entire loss map $(\ell_t(\oc))_{\oc \in \aspace}$ is observed at every round. One can see this as a variant of the classical prediction with expert advice problem \citep{cesa1997use}, with the outcomes representing the actions (or experts).
The distinction is that the learner can only sample from a mixture of the distributions of the policies in $\Theta$.
Moreover, the learner competes with the best policy, aiming to minimize the same notion of regret as before.
We show in the following theorem that a simple strategy 
enjoys a regret guarantee depending on the more standard notion of capacity based on the KL-divergence. 
Precisely, if $\vartheta$ and $\Oc$ are two random variables taking values respectively over $\Theta$ and $\aspace$ such that $\pr_{\Oc|\vartheta=\theta}(\oc) = \theta(\oc)$ for any $\theta \in \Theta$ and $\oc \in \aspace$. 
Then, we define the KL-capacity of the policy set as:
\begin{align*}
    \capckl(\Theta) \coloneqq \capckl(\pr_{\Oc|\vartheta})
    = \max_{\tau \in \mP_\Theta} \summ_\theta \tau(\theta) \bkl{\theta} {\textstyle{\summ_{\theta'}} \tau(\theta')\theta'}\,.
\end{align*}
Via \citep[Corollary 5.6]{itbook}, $\capckl(\Theta)$ can alternatively be interpreted as the ``radius'' of $\Theta$ in terms of the KL-divergence:
\begin{equation} \label{eq:full:radius}
    \capckl(\Theta) = \min_{\centroid \in \Delta_\noutcomes} \max_{\theta \in \Theta} \kl{\theta} {\centroid} \,.
\end{equation}

The idea of \Cref{alg:osmd} is to run an Online Mirror Descent (OMD) algorithm directly on the outcome space restricted to the convex hull of the policy set, henceforth denoted as $\co(\Theta)$. The key to obtaining the following result is a tailored choice of the initial distribution that utilizes the interpretation in \eqref{eq:full:radius}.

\begin{algorithm} [t]
    \caption{OMD on the Convex Hull of the Policies Under Full-Information}
    \label{alg:osmd}
    \begin{algorithmic}[1]
        \State \textbf{Input:} learning rate $\eta$, initial distribution $\centroid^* \in \co(\Theta): \centroid^*(\oc)>0 \: \forall \oc \in \aspace$
        \State \textbf{Initialize:} $u_1=\centroid^*$
        \For{$t=1,\dotsc,T$}
            \State Pick distribution $p_t \in \mP_\Theta$ such that $\sum_{\theta} p_t(\theta) \theta = u_t$
            \State Draw $\vartheta_t \sim p_t$ 
            \State Observe the loss vector $\ell_t$ 
            \State Set $u_{t+1} = \argmin_{u \in \co(\Theta)} \eta \langle u, \ell_t \rangle + \kl{u}{u_t}$
        \EndFor
    \end{algorithmic}
\end{algorithm}

\begin{theorem} \label{thm:full-info}
Algorithm \ref{alg:osmd} run with
$
\centroid^* \in \argmin_{\centroid\in\Delta_{\noutcomes}} \max_{\theta \in \Theta} \kl{\theta} {\centroid}$ and $\eta=\sqrt{\frac{2\capckl(\Theta)}{T}}
$
satisfies
$
       R_T \leq \sqrt{2\capckl(\Theta)T}
$.
\end{theorem}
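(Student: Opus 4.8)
The plan is to analyze Algorithm~\ref{alg:osmd} as a standard Online Mirror Descent instance on the convex set $\co(\Theta) \subseteq \Delta_\noutcomes$, with the negative-entropy mirror map (so that the Bregman divergence is the KL-divergence). Since the learner plays $u_t = \sum_\theta p_t(\theta)\theta \in \co(\Theta)$ and the loss is linear, $\langle u_t, \ell_t\rangle = \summ_\theta p_t(\theta)\ell_t(\theta) = \E_t[\ell_t(\vartheta_t)]$, and likewise $\langle \theta^*, \ell_t\rangle = \ell_t(\theta^*)$ for the optimal fixed policy $\theta^*$. Hence bounding $R_T$ reduces to bounding the OMD regret $\summ_t \langle u_t - \theta^*, \ell_t\rangle$ against the best fixed comparator in $\co(\Theta)$ (it suffices to compete with the vertices $\theta^* \in \Theta$).

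\textbf{Invoking the standard OMD bound.} First I would cite the textbook OMD/mirror-descent regret guarantee for the entropic regularizer: for losses $\ell_t \in [0,1]^\noutcomes$, any comparator $v \in \co(\Theta)$ satisfies
\[
    \summ_t \langle u_t - v, \ell_t \rangle \leq \frac{\kl{v}{u_1}}{\eta} + \frac{\eta}{2} \summ_t \|\ell_t\|_\infty^2 \,,
\]
or more precisely the version where the stability term is $\frac{\eta}{2}\summ_t \langle u_t, \ell_t^2\rangle \le \frac{\eta T}{2}$ using $\ell_t \in [0,1]^\noutcomes$ together with the fact that the entropic mirror map is $1$-strongly convex with respect to $\|\cdot\|_1$ on the simplex (so the local norm of each $\ell_t$ is controlled by $\|\ell_t\|_\infty \le 1$). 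Taking $v = \theta^*$ and $u_1 = \centroid^*$, this gives
\[
    R_T \leq \frac{\kl{\theta^*}{\centroid^*}}{\eta} + \frac{\eta T}{2} \,.
\]

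\textbf{The key step: controlling the initial-divergence term via the radius interpretation.} The crucial observation — and the real substance of the proof — is the choice $\centroid^* \in \argmin_{\centroid} \max_\theta \kl{\theta}{\centroid}$, which by the radius identity \eqref{eq:full:radius} ensures that for \emph{every} $\theta \in \Theta$,
\[
    \kl{\theta}{\centroid^*} \leq \max_{\theta' \in \Theta} \kl{\theta'}{\centroid^*} = \capckl(\Theta) \,.
\]
In particular $\kl{\theta^*}{\centroid^*} \le \capckl(\Theta)$, uniformly in the (unknown) identity of $\theta^*$. Substituting this and the tuned learning rate $\eta = \sqrt{2\capckl(\Theta)/T}$ balances the two terms and yields
\[
    R_T \leq \frac{\capckl(\Theta)}{\eta} + \frac{\eta T}{2} = \sqrt{2\capckl(\Theta)T} \,.
\]

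\textbf{Anticipated obstacle.} The only delicate point is ensuring that the argument is valid for arbitrary $\theta^* \in \Theta$ without prior knowledge of which policy is optimal; this is exactly what the minimax choice of $\centroid^*$ buys us, since it caps $\kl{\theta^*}{\centroid^*}$ by $\capckl(\Theta)$ simultaneously for all comparators. A secondary subtlety is checking that $\centroid^*$ lies in (or can be used as an interior point of) the feasible domain so that the initial KL-divergence $\kl{\theta}{\centroid^*}$ is finite — this requires $\centroid^*(\oc) > 0$ whenever some $\theta(\oc) > 0$, which is guaranteed precisely because $\capckl(\Theta)$ is finite at the minimizer. Everything else is the routine OMD machinery, so the heart of the argument is recognizing the capacity as a KL-radius and exploiting its minimax property.
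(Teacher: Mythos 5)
Your proposal is correct and follows essentially the same route as the paper: reduce $R_T$ to the OMD regret of $u_t$ against $\theta^*$ by linearity, apply the standard entropic-OMD bound with initial divergence $\kl{\theta^*}{\centroid^*}$, and cap that divergence by $\capckl(\Theta)$ via the minimax-radius identity before tuning $\eta$. The only detail the paper makes slightly more explicit is that the minimizer $\centroid^*$ necessarily lies in $\co(\Theta)$ (citing Theorem 11.6.1 of Cover and Thomas), which justifies it as a valid initialization for the algorithm — a point you flag but resolve only for finiteness of the KL term.
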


\begin{proof}
    Firstly, we note that setting $u_1 = \centroid^*$ is a valid choice since the minimum value of $\max_{\theta \in \Theta} \kl{\theta} {\centroid}$ in $\centroid \in \Delta_\noutcomes$ can only be attained in $\mathrm{co}(\Theta)$; see Theorem 11.6.1 in \cite{cover2006}.
    Let $\theta^* \in \argmin_{\theta \in \Theta} \E \summ_t \ell_t(\theta)$.
    As $\ell_t$ and $\vartheta_t$ are independent given the events up to the end round $t-1$, we have that
    $R_T = \E \sum\nolimits_t \langle \vartheta_t - \theta^*, \ell_t \rangle = \E \sum\nolimits_t \langle u_t - \theta^*, \ell_t \rangle$.
    A standard regret bound for OMD with the negative entropy regularizer (see, e.g., Lemma 6.16 and the proof of Theorem 10.2 in \citealp{orabona2023modern}) allows us to conclude that
    \begin{equation*} \label{cvxhull-omd}
        \sum\nolimits_t \langle u_t - \theta^*, \ell_t \rangle \leq \frac{\kl{\theta^*} {\centroid^*}}{\eta} + \frac{\eta}{2} \sum\nolimits_t\summ_\oc u_t(\oc) \ell^2_t(\oc) \leq \frac{\kl{\theta^*} {\centroid^*}}{\eta} + \frac{\eta}{2} T \leq \frac{\capckl(\Theta)}{\eta} + \frac{\eta}{2} T \,,
    \end{equation*}
    where the last step follows from \eqref{eq:full:radius} and the definition of $\centroid^*$. The theorem then follows after plugging in the value of $\eta$.
\end{proof}

We remark that $\capckl(\Theta) \leq \min\{\log N, \log \noutcomes\}$, see \Cref{sec:info-theory}. 
In particular, the first bound is attained when the policies have non-overlapping supports, while the second is attained when each outcome is matched with a policy entirely concentrated on that outcome. 
Notice that these two cases essentially reduce the problem to a standard prediction with expert advice problem on the policy and outcome spaces, respectively, where the bound of \Cref{thm:full-info} matches the minimax regret up to constants \citep{cesa1997use}.
Beyond these extreme cases, the bound improves as the capacity of the policy set---or its information radius---shrinks. 

\section{Conclusions and Future Directions}
In this paper, we have focused on the mediator feedback framework and studied to what extent the structure of the problem can be exploited by the learner. In particular, we have introduced the policy set capacity as a measure of the effective size (or complexity) of the policy set. For various setups, we have derived new and improved regret bounds for \textsc{Exp4} featuring the capacity.
Further, the lower bounds we provided establish the capacity as a fundamental indicator of the difficulty of the problem.

One direction for improvement is providing bounds that hold with high probability rather than in expectation, noting that prior works on bandits with expert advice (or contextual bandits) such as \citep{beygelzimer2011contextual} and \citep{neu15hp} obtained high probability bounds only of order $\sqrt{KT \log N}$.
Another direction is proving data-dependent (or small loss) bounds, following again previous works on bandits with expert advice \citep[e.g.,][]{allen-zhu18b}. 
Concerning the stochastic regime, improving the dependence of the bounds on the sub-optimality gaps (beyond the crude scaling with the smallest gap) is yet another interesting problem. 
Finally, for cases when the policy set is very large, achieving similar regret guarantees via more computationally efficient strategies is a worthy direction. 


\acks{
The financial support of the FAIR (Future Artificial Intelligence Research) project, funded by the NextGenerationEU program within the PNRR-PE-AI scheme, is gratefully acknowledged. KE and NCB also acknowledge the support of the MUR PRIN grant 2022EKNE5K (Learning in Markets and Society), funded by the NextGenerationEU program within the PNRR scheme and of the EU Horizon CL4-2022-HUMAN-02 research and innovation action under grant agreement 101120237, project ELIAS.
}



\appendix

\section{Proof of Theorem \ref{thm:lower:multi}} \label{app:multi}
\lowermulti*
\begin{proof} 
    In the following, we will overload the notation and denote by $\oc_{i,\theta}$---which belongs to $\{\oc_{i,j}\}_{j=1}^q$---the outcome chosen by policy $\theta$ in section $i$ (i.e. we have that $\theta(\oc_{i,\theta}) = {1}/{M}$).
    For each policy $\theta$, we construct an environment $\mu_\theta$ such that for any outcome $\oc$, 
    $\mu_{\theta}(\oc) \coloneqq {1}/{2} - \gap\mathbb{I}\{\oc \in \text{Supp}(\theta)\}\,,$
    where $0 \leq \gap \leq {1}/{4}$ is to be tuned later. Moreover, we will also use the following variations of each environment. For $i \in [M]$, let $\mu_\theta^{-i}$ be an environment such that for any outcome $\oc$, 
    \[ \mu_\theta^{-i}(\oc) \coloneqq 
    \begin{cases}
    \frac{1}{2} \,, & \text{if } \oc \in \{\oc_{i,j}\}_{j=1}^q\\
    \mu_{\theta}(\oc) \,, & \text{otherwise.}
    \end{cases}\]
    In words, $\mu_\theta^{-i}$ is identical to $\mu_\theta$ everywhere except in game $i$, where all outcomes are assigned a mean loss of $1/2$. 
    For any policy $\theta$, we have that
    \begin{align*}
         \storeg(\mu_\theta) 
         &= \gap \E_{\mu_\theta}\sum_{t=1}^T \sum_{i=1}^M (\theta(\oc_{i,\theta}) - \vartheta_t(\oc_{i,\theta}))  \\
         &= \frac{\gap}{M} \E_{\mu_\theta}\sum_{t=1}^T \sum_{i=1}^M (1 - \mathbb{I}\{\oc_{i,\vartheta_t} = \oc_{i,\theta}\}) \\
         &= \frac{\gap}{M} \sum_{i=1}^M  (T - N_{\mu_\theta}(i,\theta;T)) \,,
    \end{align*}
    where for an environment $\mu$, a policy $\theta$, and a section $i \in [M]$, we define $N_{\mu}(i,\theta; T) \coloneqq \E_{\mu}\sum_{t=1}^T  \mathbb{I}\{\oc_{i,\vartheta_t} = \oc_{i,\theta}\}$. 
    In words, this counts the expected number of times (under $\mu$) that the chosen policy agrees with $\theta$ in section $i$. 
    Next, we use that for any $i \in [M]$, $N_{\mu_\theta}(i,\theta;T) - N_{\mu^{-i}_\theta}(i,\theta;T) \leq T \tv\brb{P_{\mu^{-i}_\theta},P_{\mu_\theta}}$ together with Pinsker's inequality to get that
    \begin{equation} \label{low:multi:reg-kl}
        \storeg(\mu_\theta) \geq \frac{\gap}{M} \sum_{i=1}^M  \bigg(T - N_{\mu^{-i}_\theta}(i,\theta;T) - T\sqrt{\frac{1}{2}\bkl{P_{\mu^{-i}_\theta}}{P_{\mu_\theta}}}\bigg) \,.
    \end{equation}
    For bounding the KL-divergence term, we start from Lemma \ref{lem:kl-decomp}:
    \begin{align*}
         \bkl{P_{\mu^{-i}_\theta}}{P_{\mu_\theta}} 
         &= \sum_{\theta' \in \Theta} N_{\mu^{-i}_\theta}(\theta';T) \sum_{\oc\in\aspace} \theta'(\oc) \bklber{\mu^{-i}_\theta(\oc)}{\mu_\theta(\oc)} \\
         &= \sum_{\theta' \in \Theta} N_{\mu^{-i}_\theta}(\theta';T)  \theta'(\oc_{i,\theta}) \bklber{\mu^{-i}_\theta(\oc_{i,\theta})}{\mu_\theta(\oc_{i,\theta})} \\
         &= \frac{1}{M}\sum_{\theta' \in \Theta} \mathbb{I}\{\oc_{i,\theta'} = \oc_{i,\theta}\}  N_{\mu^{-i}_\theta}(\theta';T)  \bbklber{\frac{1}{2}}{\frac{1}{2}-\gap} \\
         &\leq \frac{c \gap^2}{M}\sum_{\theta' \in \Theta} \mathbb{I}\{\oc_{i,\theta'} = \oc_{i,\theta}\} N_{\mu^{-i}_\theta}(\theta';T)  \\
         &= \frac{c \gap^2}{M}   \E_{\mu^{-i}_\theta}\sum_{t=1}^T  \mathbb{I}\{\oc_{i,\vartheta_t} = \oc_{i,\theta}\}
         = \frac{c \gap^2}{M} N_{\mu^{-i}_\theta}(i,\theta;T) \,,  
    \end{align*}
    where the second equality holds since $\oc_{i,\theta}$ is the only outcome that does not have the same mean loss in the two environments, and the inequality holds for $\gap \leq {1}/{4}$ with $c \coloneqq 8\log({4}/{3})$. Plugging back into \eqref{low:multi:reg-kl}, we get that
    \begin{align} \label{low:multi:reg-single}
        \storeg(\mu_\theta)  \geq \frac{\gap}{M} \sum_{i=1}^M  \bigg(T - N_{\mu^{-i}_\theta}(i,\theta;T) - T\gap\sqrt{\frac{c}{2 M} N_{\mu^{-i}_\theta}(i,\theta;T)}\bigg) \,.
    \end{align}
    For what follows, we introduce an extra bit of notation. 
    For each $i \in [M]$, we let $\sim_i$ denote an equivalence relation on the policy set such that for $\theta$, $\theta' \in \Theta$,
    \[
        \theta \sim_i \theta' \iff \forall s \in [M]\backslash\{i\}, \oc_{s, \theta} = \oc_{s, \theta'} \,.
    \]
    In words, two policies are equivalent according to $\sim_i$ if they agree everywhere outside of section $i$. Denote the set of all equivalence classes of $\sim_i$ by $\Theta / \sim_i$, which contains $q^{M-1}$ classes, each containing $q$ policies corresponding to the possible outcome choices in section $i$. For any $Y \in \Theta / \sim_i$, notice that if $\theta, \theta' \in Y$, then $\mu_{\theta}^{-i}$ and $\mu_{\theta'}^{-i}$ denote the same environment, 
    which will be referred to in the sequel as $\mu_{Y}^{-i}$.
    Now, notice that for any section $i$,
    \begin{align*}
        \sum_{\theta \in \Theta} N_{\mu^{-i}_\theta}(i,\theta;T) 
        &= \sum_{Y \in \Theta/\sim_i}
        \sum_{\theta \in Y} N_{\mu^{-i}_Y}(i,\theta;T) 
        = \sum_{Y \in \Theta/\sim_i}
         \E_{\mu^{-i}_Y}\sum_{t=1}^T \underbrace{\sum_{\theta \in Y} \mathbb{I}\{\oc_{i,\vartheta_t} = \oc_{i,\theta}\}}_{=1} 
         = q^{M-1} T \,,
    \end{align*}
    whereas
    $
        \sum_{\theta \in \Theta} \sqrt{  N_{\mu^{-i}_\theta}(i,\theta;T)} \leq \sqrt{\sum_{\theta \in \Theta} 1^2} \sqrt{\sum_{\theta \in \Theta}  N_{\mu^{-i}_\theta}(i,\theta;T)}  
        = q^{M} \sqrt{\frac{T}{q}} \,.
    $
    Hence, we conclude that
    \begin{align*} 
        \sup_\mu \storeg(\mu) 
        &\geq \frac{1}{|\Theta|} \sum_{\theta \in \Theta}  \storeg(\mu_\theta) \\
        &\geq \frac{1}{|\Theta|} \sum_{\theta \in \Theta}  \frac{\gap}{M} \sum_{i=1}^M  \lrb{T - N_{\mu^{-i}_\theta}(i,\theta;T) - T\gap\sqrt{\frac{c}{2 M} N_{\mu^{-i}_\theta}(i,\theta;T)}} \\
        &\geq \frac{\gap}{M} \sum_{i=1}^M   \lrb{ T - \frac{1}{|\Theta|}  q^{M} T\lrb{ \frac{1}{q} + \gap\sqrt{\frac{c T}{2 q M} } }} \\
        &= \gap T   \bigg (1 -   \frac{1}{q} - \gap\sqrt{\frac{c T}{2 \noutcomes} } \bigg) 
        \stackrel{q\geq2}{\geq} \gap T\left(\frac{1}{2} - \gap\sqrt{\frac{cT}{2\noutcomes}} \right) \,. 
    \end{align*}
    Plugging $\gap \coloneqq \frac{1}{4} \sqrt{\frac{2\noutcomes}{cT}}$ into the previous display proves the theorem after observing that $\frac{1}{16}\sqrt{\frac{2}{c}} \geq \frac{1}{18}$. Lastly, notice that the condition imposed on $T$ ensures that indeed $\gap \leq {1}/{4}$.
\end{proof}

\section{Proof of Theorem \ref{thm:low:linear-bounded}} \label{app:lower-bounded}
\lowerlinearbounded*
\begin{proof}
    Building on the result of Proposition \ref{thm:low:linear}, we follow the technique used in the proof of Theorem 5 in  \citep{cohen2017combinatorial}.
    For the class of linear bandit environments specified in \Cref{sec:linear}, we define
    \begin{gather*}
        \hat{R}_T(\mu) \coloneqq \max_{\theta^* \in \Theta}   \sum_{t=1}^T \sum_{\oc \in \aspace} (\vartheta_t(\oc) - \theta^*(\oc)) (\mu(\oc) + Z_t) = \max_{\theta^* \in \Theta}   \sum_{t=1}^T \sum_{\oc \in \aspace} (\vartheta_t(\oc) - \theta^*(\oc)) \mu(\oc)
    \\
        \tilde{R}_T(\mu) \coloneqq \max_{\theta^* \in \Theta}   \sum_{t=1}^T \sum_{\oc \in \aspace} (\vartheta_t(\oc) - \theta^*(\oc)) \clip(\mu(\oc) + Z_t) \,,
    \end{gather*}
    where $\clip(a) \coloneqq \max\{\min\{a,1\},0\}$. 
    Notice that $\E \hat{R}_T(\mu) \geq \storeg(\mu)$, and that $\sup_{(\ell_t)_t} R_T \geq \sup_\mu \E \tilde{R}_T(\mu)$ considering sequences of losses $(\ell_t)_t$ bounded in $[0,1]$.
    We also define the event $A_\mu \coloneqq \{\forall t\in [T] ,\: \oc \in \aspace \colon  \clip(\mu(\oc) + Z_t) = \mu(\oc) + Z_t\}$.
    We will consider again the environments $\{\mu_{\theta}\}_{\theta \in \Theta}$ used in the proof of Proposition \ref{thm:low:linear}, recalling that $\mu_\theta(\oc) \coloneqq {1}/{2} + \gap \lrb{(1-\epsilon)/{N} - \mathbb{I}\{\oc=\oc_\theta\}}$ for some $0 \leq \gap \leq 1/2$.
    For any $\theta$, we have that
    \begin{align} \label{eq:low:linear:unclipped-regret}
        \E \hat{R}_T(\mu_\theta) &= \E\bsb{\hat{R}_T(\mu_\theta) \I\{A_{\mu_\theta}\}} + \E\bsb{\hat{R}_T(\mu_\theta) \I\{A^c_{\mu_\theta}\}} 
        \leq \E\bsb{\tilde{R}_T(\mu_\theta)} + \gap \epsilon T \pr\brb{A^c_{\mu_\theta}} \,,
    \end{align}
    where we have used the fact that $\tilde{R}_T({\mu_\theta})$ and $\hat{R}_T({\mu_\theta})$ are identical when $A_{\mu_\theta}$ occurs, and that $\hat{R}_T({\mu_\theta})$ is uniformly bounded by $\gap \epsilon T$ (see proof of Proposition \ref{thm:low:linear}).
    Assuming we enforce that $\gap \leq 1/4$, the event $\{\clip(\mu(\oc) + Z_t) \neq \mu(\oc) + Z_t\}$ cannot hold for any outcome unless $|Z_t| > 1/4$.
    Hence, using a union bound and the fact that $Z_t \sim \mathcal{N}(0, \sigma^2)$, we get that
    \begin{align*}
        \pr\brb{A^c_{\mu_\theta}} 
        &\leq \summ_t \pr\brb{ |Z_t| > 1/4 } \leq 2 T \exp\lrb{\frac{-(1/4)^2}{2 \sigma^2}} \,.
    \end{align*}
    Combining this with \eqref{eq:low:linear:unclipped-regret} and the fact that $\E \hat{R}_T(\mu_\theta) \geq \storeg(\mu_\theta)$ allows us to conclude that
    \begin{align*}
        \sup_{(\ell_t)_t} R_T \geq 
        \sup_\mu \E\tilde{R}_T(\mu) \geq \frac{1}{N}\summ_\theta \E\tilde{R}_T(\mu_\theta) 
        \geq \frac{1}{N}\summ_\theta \storeg(\mu_\theta) -  2 \gap \epsilon T^2 \exp\lrb{\frac{-(1/4)^2}{2 \sigma^2}} \,.
    \end{align*}
    Setting $\gap \coloneqq \frac{\sigma}{2\epsilon}\sqrt{\frac{N}{T}}$, we obtain from the proof of Proposition \ref{thm:low:linear} that $\frac{1}{N}\summ_\theta \storeg(\mu_\theta) \geq \frac{\sigma}{8} \sqrt{NT}$. Hence, choosing $\sigma \coloneqq 1/(4\sqrt{2 \log(16T)})$ entails that the required bound. Notice that the condition $T \geq \frac{N}{8 \epsilon^2}$ suffices to ensure that $\gap \leq 1/4$.
\end{proof}

\vskip 0.2in
\bibliography{refs}

\begin{thebibliography}{48}
\providecommand{\natexlab}[1]{#1}
\providecommand{\url}[1]{\texttt{#1}}
\expandafter\ifx\csname urlstyle\endcsname\relax
  \providecommand{\doi}[1]{doi: #1}\else
  \providecommand{\doi}{doi: \begingroup \urlstyle{rm}\Url}\fi

\bibitem[Ali and Silvey(1966)]{Ali1966AGC}
S.~M. Ali and S.~D. Silvey.
\newblock A general class of coefficients of divergence of one distribution from another.
\newblock \emph{Journal of the Royal Statistical Society: Series B (Methodological)}, 28\penalty0 (1):\penalty0 131--142, 1966.

\bibitem[Allen-Zhu et~al.(2018)Allen-Zhu, Bubeck, and Li]{allen-zhu18b}
Z.~Allen-Zhu, S.~Bubeck, and Y.~Li.
\newblock Make the minority great again: First-order regret bound for contextual bandits.
\newblock In \emph{Proceedings of the 35th International Conference on Machine Learning}, volume~80 of \emph{Proceedings of Machine Learning Research}, pages 186--194. PMLR, 2018.

\bibitem[Alon et~al.(2015)Alon, Cesa-Bianchi, Dekel, and Koren]{Alon15}
N.~Alon, N.~Cesa-Bianchi, O.~Dekel, and T.~Koren.
\newblock Online learning with feedback graphs: Beyond bandits.
\newblock In \emph{Proceedings of The 28th Conference on Learning Theory}, volume~40 of \emph{Proceedings of Machine Learning Research}, pages 23--35. PMLR, 2015.

\bibitem[Arumugam and Van~Roy(2021)]{arumugam2021deciding}
D.~Arumugam and B.~Van~Roy.
\newblock Deciding what to learn: A rate-distortion approach.
\newblock In \emph{Proceedings of the 38th International Conference on Machine Learning}, volume 139 of \emph{Proceedings of Machine Learning Research}, pages 373--382. PMLR, 2021.

\bibitem[Audibert and Bubeck(2009)]{audibert2009minimax}
J.~Audibert and S.~Bubeck.
\newblock Minimax policies for adversarial and stochastic bandits.
\newblock In \emph{Proceedings of the 22nd Conference on Learning Theory}, 2009.

\bibitem[Audibert et~al.(2014)Audibert, Bubeck, and Lugosi]{audibert2014regret}
J.~Audibert, S.~Bubeck, and G.~Lugosi.
\newblock Regret in online combinatorial optimization.
\newblock \emph{Mathematics of Operations Research}, 39\penalty0 (1):\penalty0 31--45, 2014.

\bibitem[Auer et~al.(1995)Auer, Cesa-Bianchi, Freund, and Schapire]{casino}
P.~Auer, N.~Cesa-Bianchi, Y.~Freund, and R.~E. Schapire.
\newblock Gambling in a rigged casino: The adversarial multi-armed bandit problem.
\newblock In \emph{Proceedings of IEEE 36th annual foundations of computer science}, pages 322--331. IEEE, 1995.

\bibitem[Auer et~al.(2002{\natexlab{a}})Auer, Cesa-Bianchi, Freund, and Schapire]{adversarial}
P.~Auer, N.~Cesa-Bianchi, Y.~Freund, and R.~E. Schapire.
\newblock The nonstochastic multiarmed bandit problem.
\newblock \emph{SIAM Journal on Computing}, 32\penalty0 (1):\penalty0 48--77, 2002{\natexlab{a}}.

\bibitem[Auer et~al.(2002{\natexlab{b}})Auer, Cesa-Bianchi, and Gentile]{AUER-adaptive}
P.~Auer, N.~Cesa-Bianchi, and C.~Gentile.
\newblock Adaptive and self-confident on-line learning algorithms.
\newblock \emph{Journal of Computer and System Sciences}, 64\penalty0 (1):\penalty0 48--75, 2002{\natexlab{b}}.

\bibitem[Beygelzimer et~al.(2011)Beygelzimer, Langford, Li, Reyzin, and Schapire]{beygelzimer2011contextual}
A.~Beygelzimer, J.~Langford, L.~Li, L.~Reyzin, and R.~Schapire.
\newblock Contextual bandit algorithms with supervised learning guarantees.
\newblock In \emph{Proceedings of the Fourteenth International Conference on Artificial Intelligence and Statistics}, volume~15 of \emph{Proceedings of Machine Learning Research}, pages 19--26. PMLR, 2011.

\bibitem[Bubeck and Slivkins(2012)]{bubeck12bobw}
S.~Bubeck and A.~Slivkins.
\newblock The best of both worlds: Stochastic and adversarial bandits.
\newblock In \emph{Proceedings of the 25th Annual Conference on Learning Theory}, volume~23 of \emph{Proceedings of Machine Learning Research}, pages 42.1--42.23. PMLR, 2012.

\bibitem[Bubeck et~al.(2012)Bubeck, Cesa-Bianchi, and Kakade]{bubeck2012towards}
S.~Bubeck, N.~Cesa-Bianchi, and S.~M. Kakade.
\newblock Towards minimax policies for online linear optimization with bandit feedback.
\newblock In \emph{Proceedings of the 25th Annual Conference on Learning Theory}, volume~23 of \emph{Proceedings of Machine Learning Research}, pages 41.1--41.14. PMLR, 2012.

\bibitem[Cesa-Bianchi et~al.(1997)Cesa-Bianchi, Freund, Haussler, Helmbold, Schapire, and Warmuth]{cesa1997use}
N.~Cesa-Bianchi, Y.~Freund, D.~Haussler, D.~P. Helmbold, R.~E. Schapire, and M.~K. Warmuth.
\newblock How to use expert advice.
\newblock \emph{Journal of the ACM (JACM)}, 44\penalty0 (3):\penalty0 427--485, 1997.

\bibitem[Cohen et~al.(2017)Cohen, Hazan, and Koren]{cohen2017combinatorial}
A.~Cohen, T.~Hazan, and T.~Koren.
\newblock Tight bounds for bandit combinatorial optimization.
\newblock In \emph{Proceedings of the 2017 Conference on Learning Theory}, volume~65 of \emph{Proceedings of Machine Learning Research}, pages 629--642. PMLR, 2017.

\bibitem[Cover and Thomas(2006)]{cover2006}
T.~M. Cover and J.~A. Thomas.
\newblock \emph{Elements of information theory {(2.} ed.)}.
\newblock Wiley, 2006.

\bibitem[Csisz{\'a}r(1967)]{Csiszar}
I.~Csisz{\'a}r.
\newblock On information-type measure of difference of probability distributions and indirect observations.
\newblock \emph{Studia Scientiarum Mathematicarum Hungarica}, 2:\penalty0 299--318, 1967.

\bibitem[Dann et~al.(2023)Dann, Wei, and Zimmert]{bobw-blackbox}
C.~Dann, C.~Wei, and J.~Zimmert.
\newblock A blackbox approach to best of both worlds in bandits and beyond.
\newblock In \emph{Proceedings of Thirty Sixth Conference on Learning Theory}, volume 195 of \emph{Proceedings of Machine Learning Research}, pages 5503--5570. PMLR, 12--15 Jul 2023.

\bibitem[Eldowa et~al.(2023)Eldowa, Cesa-Bianchi, Metelli, and Restelli]{itw-paper}
K.~Eldowa, N.~Cesa-Bianchi, A.~M. Metelli, and M.~Restelli.
\newblock Information-theoretic regret bounds for bandits with fixed expert advice.
\newblock In \emph{2023 IEEE Information Theory Workshop (ITW)}, pages 30--35, 2023.

\bibitem[Guntuboyina(2011)]{minimax-risk}
A.~Guntuboyina.
\newblock Lower bounds for the minimax risk using $ f $-divergences, and applications.
\newblock \emph{IEEE Transactions on Information Theory}, 57\penalty0 (4):\penalty0 2386--2399, 2011.

\bibitem[Ito et~al.(2022)Ito, Tsuchiya, and Honda]{ito-bobw-graphs}
S.~Ito, T.~Tsuchiya, and J.~Honda.
\newblock Nearly optimal best-of-both-worlds algorithms for online learning with feedback graphs.
\newblock In \emph{Advances in Neural Information Processing Systems}, volume~35, pages 28631--28643. Curran Associates, Inc., 2022.

\bibitem[Krishnamurthy et~al.(2020)Krishnamurthy, Langford, Slivkins, and Zhang]{krishnamurthy2020contextual}
A.~Krishnamurthy, J.~Langford, A.~Slivkins, and C.~Zhang.
\newblock Contextual bandits with continuous actions: Smoothing, zooming, and adapting.
\newblock \emph{The Journal of Machine Learning Research}, 21\penalty0 (1):\penalty0 5402--5446, 2020.

\bibitem[Lattimore and Szepesv{\'a}ri(2020)]{lattimore2020bandit}
T.~Lattimore and C.~Szepesv{\'a}ri.
\newblock \emph{Bandit algorithms}.
\newblock Cambridge University Press, 2020.

\bibitem[Majzoubi et~al.(2020)Majzoubi, Zhang, Chari, Krishnamurthy, Langford, and Slivkins]{majzoubi2020efficient}
M.~Majzoubi, C.~Zhang, R.~Chari, A.~Krishnamurthy, J.~Langford, and A.~Slivkins.
\newblock Efficient contextual bandits with continuous actions.
\newblock In \emph{Advances in Neural Information Processing Systems}, volume~33, pages 349--360. Curran Associates, Inc., 2020.

\bibitem[Makur and Zheng(2020)]{makur2020comparison}
A.~Makur and L.~Zheng.
\newblock Comparison of contraction coefficients for f-divergences.
\newblock \emph{Problems of Information Transmission}, 56:\penalty0 103--156, 2020.

\bibitem[McMahan and Streeter(2009)]{McMahanS09}
H.~B. McMahan and M.~J. Streeter.
\newblock Tighter bounds for multi-armed bandits with expert advice.
\newblock In \emph{Proceedings of the 22nd Conference on Learning Theory}, 2009.

\bibitem[McMahan and Streeter(2010)]{McMahanS10}
H.~B. McMahan and M.~J. Streeter.
\newblock Adaptive bound optimization for online convex optimization.
\newblock In \emph{Proceedings of the 23rd Conference on Learning Theory}, pages 244--256. Omnipress, 2010.

\bibitem[Metelli et~al.(2021)Metelli, Papini, D'Oro, and Restelli]{metelli2021policy}
A.~M. Metelli, M.~Papini, P.~D'Oro, and M.~Restelli.
\newblock Policy optimization as online learning with mediator feedback.
\newblock In \emph{Proceedings of the AAAI Conference on Artificial Intelligence}, volume~35, pages 8958--8966, 2021.

\bibitem[Misra et~al.(2019)Misra, Schwartz, and Abernethy]{misra2019dynamic}
K.~Misra, E.~M. Schwartz, and J.~Abernethy.
\newblock Dynamic online pricing with incomplete information using multiarmed bandit experiments.
\newblock \emph{Marketing Science}, 38\penalty0 (2):\penalty0 226--252, 2019.

\bibitem[Neu(2015{\natexlab{a}})]{neu15}
G.~Neu.
\newblock First-order regret bounds for combinatorial semi-bandits.
\newblock In \emph{Proceedings of The 28th Conference on Learning Theory}, volume~40 of \emph{Proceedings of Machine Learning Research}, pages 1360--1375. PMLR, 2015{\natexlab{a}}.

\bibitem[Neu(2015{\natexlab{b}})]{neu15hp}
G.~Neu.
\newblock Explore no more: Improved high-probability regret bounds for non-stochastic bandits.
\newblock In \emph{Advances in Neural Information Processing Systems}, volume~28. Curran Associates, Inc., 2015{\natexlab{b}}.

\bibitem[Orabona(2023)]{orabona2023modern}
F.~Orabona.
\newblock A modern introduction to online learning.
\newblock \emph{arXiv preprint arXiv:1912.13213}, 2023.

\bibitem[Papini et~al.(2019)Papini, Metelli, Lupo, and Restelli]{papini2019optimistic}
M.~Papini, A.~M. Metelli, L.~Lupo, and M.~Restelli.
\newblock Optimistic policy optimization via multiple importance sampling.
\newblock In \emph{Proceedings of the 36th International Conference on Machine Learning}, volume~97 of \emph{Proceedings of Machine Learning Research}, pages 4989--4999. PMLR, 2019.

\bibitem[Poiani et~al.(2023)Poiani, Metelli, and Restelli]{poiani2023pure}
R.~Poiani, A.~M. Metelli, and M.~Restelli.
\newblock Pure exploration under mediators’ feedback.
\newblock In \emph{NeurIPS 2023 Workshop on Adaptive Experimental Design and Active Learning in the Real World}, 2023.

\bibitem[Polyanskiy and Wu(2023)]{itbook}
Y.~Polyanskiy and Y.~Wu.
\newblock \emph{Information Theory: From Coding to Learning}.
\newblock Cambridge University Press, 2023.
\newblock (draft).

\bibitem[Raginsky(2016)]{raginsky2016strong}
M.~Raginsky.
\newblock Strong data processing inequalities and {$\Phi$-S}obolev inequalities for discrete channels.
\newblock \emph{IEEE Transactions on Information Theory}, 62\penalty0 (6):\penalty0 3355--3389, 2016.

\bibitem[Reddy et~al.(2023)Reddy, Karthik, Karamchandani, and Nair]{reddy2023}
K.~S. Reddy, P.~N. Karthik, N.~Karamchandani, and J.~Nair.
\newblock Best arm identification in bandits with limited precision sampling.
\newblock In \emph{2023 IEEE International Symposium on Information Theory (ISIT)}, pages 1466--1471, 2023.

\bibitem[Russo and Van~Roy(2016)]{russo2016information}
D.~Russo and B.~Van~Roy.
\newblock An information-theoretic analysis of thompson sampling.
\newblock \emph{The Journal of Machine Learning Research}, 17\penalty0 (1):\penalty0 2442--2471, 2016.

\bibitem[Russo and Van~Roy(2022)]{russo2022satisficing}
D.~Russo and B.~Van~Roy.
\newblock Satisficing in time-sensitive bandit learning.
\newblock \emph{Mathematics of Operations Research}, 47\penalty0 (4):\penalty0 2815--2839, 2022.

\bibitem[Sason and Verd{\'u}(2016)]{sason2016f}
I.~Sason and S.~Verd{\'u}.
\newblock $ f $-divergence inequalities.
\newblock \emph{IEEE Transactions on Information Theory}, 62\penalty0 (11):\penalty0 5973--6006, 2016.

\bibitem[Schwartz et~al.(2017)Schwartz, Bradlow, and Fader]{schwartz2017customer}
E.~M. Schwartz, E.~T. Bradlow, and P.~S. Fader.
\newblock Customer acquisition via display advertising using multi-armed bandit experiments.
\newblock \emph{Marketing Science}, 36\penalty0 (4):\penalty0 500--522, 2017.

\bibitem[Seldin and Lugosi(2016)]{expertslowerbound}
Y.~Seldin and G.~Lugosi.
\newblock A lower bound for multi-armed bandits with expert advice.
\newblock In \emph{The 13th European Workshop on Reinforcement Learning (EWRL)}, 2016.

\bibitem[Seldin et~al.(2011)Seldin, Auer, Shawe-taylor, Ortner, and Laviolette]{seldin2011pac}
Y.~Seldin, P.~Auer, J.~Shawe-taylor, R.~Ortner, and F.~Laviolette.
\newblock Pac-bayesian analysis of contextual bandits.
\newblock In \emph{Advances in Neural Information Processing Systems}, volume~24, pages 1683--1691. Curran Associates, Inc., 2011.

\bibitem[Sen et~al.(2018)Sen, Shanmugam, and Shakkottai]{stochastic-experts}
R.~Sen, K.~Shanmugam, and S.~Shakkottai.
\newblock Contextual bandits with stochastic experts.
\newblock In \emph{Proceedings of the Twenty-First International Conference on Artificial Intelligence and Statistics}, volume~84 of \emph{Proceedings of Machine Learning Research}, pages 852--861. PMLR, 2018.

\bibitem[Shalev-Shwartz et~al.(2012)]{shalev2012online}
S.~Shalev-Shwartz et~al.
\newblock Online learning and online convex optimization.
\newblock \emph{Foundations and Trends{\textregistered} in Machine Learning}, 4\penalty0 (2):\penalty0 107--194, 2012.

\bibitem[Topsoe(2000)]{topsoe}
F.~Topsoe.
\newblock Some inequalities for information divergence and related measures of discrimination.
\newblock \emph{IEEE Transactions on information theory}, 46\penalty0 (4):\penalty0 1602--1609, 2000.

\bibitem[Wei and Luo(2018)]{wei2018more}
C.~Wei and H.~Luo.
\newblock More adaptive algorithms for adversarial bandits.
\newblock In \emph{Proceedings of the 31st Conference On Learning Theory}, volume~75 of \emph{Proceedings of Machine Learning Research}, pages 1263--1291. PMLR, 2018.

\bibitem[Zhu and Mineiro(2022)]{zhu2022contextual}
Y.~Zhu and P.~Mineiro.
\newblock Contextual bandits with smooth regret: Efficient learning in continuous action spaces.
\newblock In \emph{Proceedings of the 39th International Conference on Machine Learning}, volume 162 of \emph{Proceedings of Machine Learning Research}, pages 27574--27590. PMLR, 2022.

\bibitem[Zimmert and Seldin(2021)]{zimmert2021tsallis}
J.~Zimmert and Y.~Seldin.
\newblock Tsallis-inf: An optimal algorithm for stochastic and adversarial bandits.
\newblock \emph{The Journal of Machine Learning Research}, 22\penalty0 (1):\penalty0 1310--1358, 2021.

\end{thebibliography}

\end{document}